\newcommand{\E}{\mathbb{E}}
\newcommand{\R}{\mathbb{R}}
\newcommand{\cA}{\mathcal{A}}
\newcommand{\cF}{\mathcal{F}}
\newcommand{\cL}{\mathcal{L}}
\newcommand{\cZ}{\mathcal{Z}}
\newcommand{\inner}[1]{\left\langle #1 \right\rangle}
\newcommand{\vecf}{\mathbf{f}}
\newcommand{\vecv}{\mathbf{v}}
\newcommand{\vecu}{\mathbf{u}}
\newcommand{\vecg}{\mathbf{g}}
\newcommand{\vech}{\mathbf{h}}
\newcommand{\vecc}{\mathbf{c}}
\newcommand{\vecsigma}{\bm{\sigma}}
\newcommand{\onehot}{\mathbf{e}}
\newcommand{\T}{\top}
\newcommand{\from}{\leftarrow}
\newcommand{\hatK}{\widehat{K}}
\newcommand{\Kctr}{K_{+}}
\newcommand{\phictr}{\phi_{+}}
\newcommand{\Phictr}{\Phi_{+}}
\DeclareMathOperator{\diag}{diag}
\DeclareMathOperator*{\argmin}{argmin}
\DeclareMathOperator*{\argmax}{argmax}
\newcommand{\smallparagraph}[1]{\vspace{0.2em}\textbf{#1}~~}
\newtheorem{defn}{Definition}[section]
\newtheorem{thm}{Theorem}[section]
\newtheorem{prop}[thm]{Proposition}%
\newtheorem{assump}[thm]{Assumption}%
\title{Contrastive Learning Can Find An Optimal\\
Basis For Approximately View-Invariant\\
Functions}
\author{%
    Daniel D. Johnson$^{1,2}$, Ayoub El Hanchi$^{1}$, Chris J. Maddison$^{1}$\\
    $^{1}$University of Toronto, $^{2}$Google Research\\
    \texttt{\{ddjohnson, aelhan, cmaddis\}@cs.toronto.edu}%
}
\begin{document}

\maketitle

\begin{abstract}
Contrastive learning is a powerful framework for learning self-supervised representations that generalize well to downstream supervised tasks.
We show that multiple existing contrastive learning methods can be reinterpreted as learning kernel functions that approximate a fixed \emph{positive-pair kernel}.
We then prove that a simple representation obtained by combining this kernel with PCA provably minimizes the worst-case approximation error of linear predictors, under a straightforward assumption that positive pairs have similar labels.
Our analysis is based on a decomposition of the target function in terms of the eigenfunctions of a positive-pair Markov chain, and a surprising equivalence between these eigenfunctions and the output of Kernel PCA.
We give generalization bounds for downstream linear prediction using our Kernel PCA representation, and show empirically on a set of synthetic tasks that applying Kernel PCA to contrastive learning models can indeed approximately recover the Markov chain eigenfunctions, although the accuracy depends on the kernel parameterization as well as on the augmentation strength.
\end{abstract}

\section{Introduction}\label{sec:intro}

When using a contrastive learning method such as SimCLR \citep{chen2020simple} for representation learning,
the first step is to specify the distribution of original examples $z \sim p(Z)$ within some space $\cZ$
along with a sampler of augmented views $p(A|Z=z)$  over a potentially different space $\cA$.
\footnote{We focus on finite but arbitrarily large $\cZ$ and $\cA$, e.g. the set of 8-bit 32x32 images, and allow $\cZ \ne \cA$.}
For example, $p(Z)$ might represent a dataset of natural images, and $p(A | Z)$ a random transformation that applies random scaling and color shifts.
Contrastive learning then consists of finding a parameterized mapping (such as a neural network) which maps multiple views of the same image (e.g. draws from $a_1, a_2 \sim p(A|Z=z)$ for a fixed $z$) close together, and unrelated views far apart. This mapping can then be used to define a representation which is useful for downstream supervised learning.

The success of these representations have led to a variety of theoretical analyses of contrastive learning, 
including
analyses based on
conditional independence within latent classes \citep{saunshi2019theoretical},
alignment of hyperspherical embeddings \citep{wang2020understanding},
conditional independence structure with landmark embeddings \citep{tosh2021contrastive}, and spectral analysis of an augmentation graph \citep{haochen2021provable}. Each of these analyses is based on a single choice of contrastive learning objective.
In this work, we go further by integrating multiple popular contrastive learning methods into a single framework, and showing that it can be used to build \emph{minimax-optimal representations} under a straightforward assumption about similarity of labels between positive pairs.

Common wisdom for choosing the augmentation distribution $p(A|Z)$ is that it should remove irrelevant information from $Z$ while preserving information necessary to predict the eventual downstream label $Y$; for instance, augmentations might be chosen to be random crops or color shifts that affect the semantic content of an image as little as possible \citep{chen2020simple}.
The goal of representation learning is then to find a representation with which we can form good estimates of $Y$ using only a few labeled examples.
In particular, we focus on approximating a 
target function $g : \cA \to \R^n$ for which $g(a)$ represents the ``best guess'' of $Y$ based on $a$.
For regression tasks, we might be interested in a target function of the form $g(a) = \E[Y | A = a]$. For classification tasks, if $Y$ is represented as a one-hot vector, we might be interested in estimating the probability of each class, again taking the form $g(a) = \E[Y | A = a]$, or the most likely label, taking the form $g(a) = \argmax_y p(Y=y | A=a)$. In either case, we are interested in constructing a representation for which $g$ can be estimated well using only a small number of labeled augmentations $(a_i, y_i)$.\footnote{If we have a dataset of labeled un-augmented examples $(z_i, y_i)$, we can build a dataset of labeled augmentations by sampling one or more augmentations of each example in our original dataset.}
Since we usually do not have access to the downstream supervised learning task when learning our representation, our goal is to identify a representation that enables us to approximate many different ``reasonable'' choices of $g$.
Specifically, we focus on finding a \emph{single} representation which allows us to approximate \emph{every} target function with a small \emph{positive-pair discrepancy}, i.e. every $g$ satisfying the following assumption:
\begin{assump}[Approximate View-Invariance]\label{assump:smooth_target}
Each target function
$g : \cA \to \R$ satisfies
\[
\E_{p_+(a_1, a_2)}\Big[\big(g(a_1) - g(a_2)\big)^2\Big] \le \varepsilon,
\]
for some fixed $\varepsilon \in [0,\infty)$, where $p_+(a_1, a_2) = \sum_z p(a_1 | z)p(a_2 | z) p(z)$.
\end{assump}

This is a fairly weak assumption, because to the extent that the distribution of augmentations preserves information about some downstream label, our best estimate of that label should not depend much on exactly which augmentation is sampled: it should be \emph{approximately invariant}
to the choice of a different augmented view of the same example.
More precisely, as long as the label $Y$ is independent of the augmentation $A$ conditioned on the original example $Z$ (i.e. assuming augmentations are chosen without using the label, as is typically the case), we must have
$\E\big[ ( g(A_1) - g(A_2) )^2 \big]
\le
2 \E\big[ ( g(A) - Y )^2 \big]
$ 
(see Appendix \ref{app:assumption_justification}).
For simplicity, we work with scalar $g: \cA \to \R$ and $Y \in \R$; vector-valued $Y$ can be handled by 
learning a sequence of scalar functions.

Our first contribution is to unify a number of previous analyses and existing techniques, drawing connections between contrastive learning, kernel decomposition, Markov chains, and Assumption \ref{assump:smooth_target}.
We start by showing that minimizing existing contrastive losses is equivalent to building an approximation of a particular \textbf{positive-pair kernel}, from which a finite-dimensional representation can be extracted using Kernel PCA \citep{scholkopf1997kernel}. We next discuss what properties a representation
must have
to achieve low approximation error for functions satisfying Assumption \ref{assump:smooth_target}, and show that the eigenfunctions of a Markov chain over positive pairs allow us to re-express this assumption in a form that makes those properties explicit. We then prove that, surprisingly, building a Kernel PCA representation using the positive-pair kernel is \emph{exactly equivalent} to identifying the eigenfunctions of this Markov chain, ensuring this representation has the desired properties.

Our main theoretical result is that contrastive learning methods can be used to find a \emph{minimax-optimal} representation for linear predictors under Assumption \ref{assump:smooth_target}. 
Specifically, for a fixed dimension, we show that taking the eigenfunctions with the largest eigenvalues yields a basis for the linear subspace of functions that minimizes the worst case quadratic approximation error across the set of functions satisfying Assumption \ref{assump:smooth_target}, and further give generalization bounds for the performance of this representation for downstream supervised learning.

We conclude by studying the behavior of contrastive learning models on two synthetic tasks for which the exact positive-pair kernel is known, and investigating the extent to which the basis of eigenfunctions can be extracted from trained models. As predicted by our theory, we find that the same eigenfunctions can be recovered from multiple model parameterizations and losses, although the accuracy depends on both
kernel parameterization expressiveness and augmentation strength.

\section{Contrastive Learning Is Secretly Kernel Learning}\label{sec:contrastive-models-are-kernels}

\begin{figure}
\begin{floatrow}
\ffigbox[0.33\textwidth]{%
    \includegraphics[width=0.325\textwidth,trim={0 0 0 0.2cm},clip]{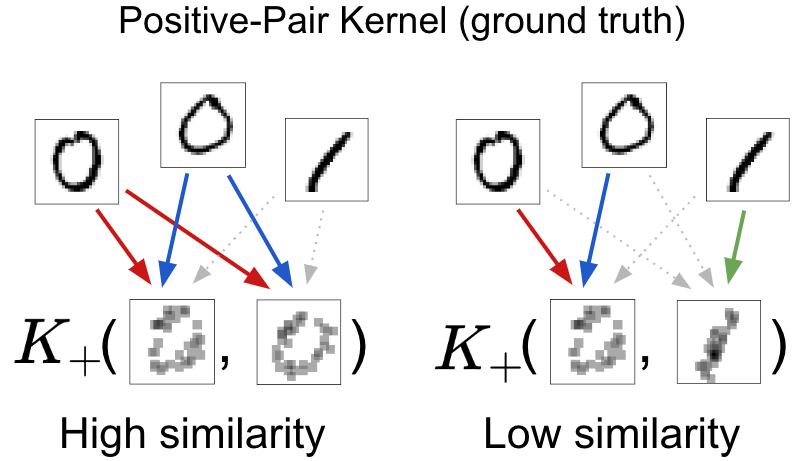}
    \\\vspace{0.75em}
    \includegraphics[width=0.325\textwidth,trim={0 0.7cm 0 0}]{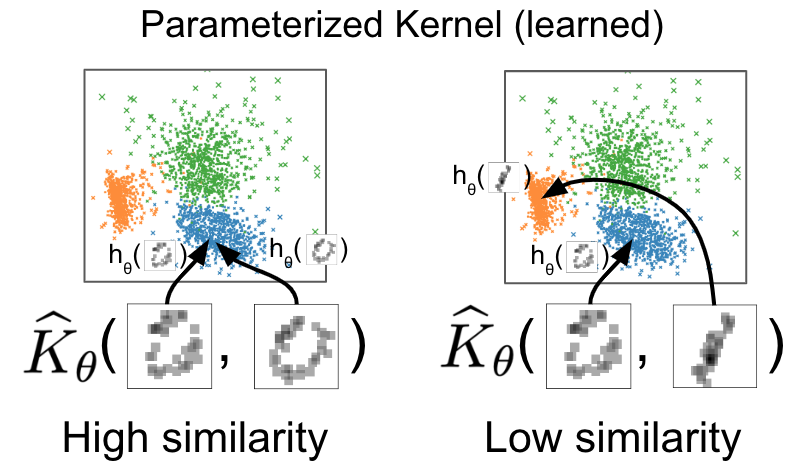}
}{%
    \caption{
    The positive-pair kernel $\Kctr$ assigns high similarity to likely positive pairs. Contrastive learning methods learn parameterized kernels $\hatK_\theta$ which assign high similarity to nearby points in a learned embedding space.
    }
    \label{fig:kernel_summary}
}
\capbtabbox{%
    \resizebox{0.63\textwidth}{!}{
    \begin{tabular}{c@{\hspace{0.5em}}rl}
    \toprule
    \multirow{3.2}{6.5em}{\centering NT-XEnt\\ \footnotesize\citep{chen2020simple, van2018representation}}
    &
    Loss
    & $
    \E\left[
    - \log \frac{\hatK_\theta(a^+_1, a^+_2)}{
    {\textstyle\strut}
    \hatK_\theta(a^+_1, a^+_2)
    + \sum_{a^-_i} \hatK_\theta(a^+_1, a^-_i)
    }\right]
    $
    \\
    &
    Kernel
    & $
    \hatK_\theta(a_1, a_2) ~=~ \exp(h_\theta(a_1)^\T h_\theta(a_2) / \tau)
    $
    \\ 
    &
    Minimum
    & $
    \hatK_*(a_1, a_2) ~=~ \frac{p_+(a_1, a_2)}{p(a_1)p(a_2)} ~\cdot~ C_{[a_1]}$
    \\
    \midrule
    \multirow{3.8}{6.5em}{\centering NT-Logistic\\ \footnotesize\citep{chen2020simple, tosh2021contrastive}}
    &
    Loss
    &
    $\begin{aligned}
    &
    \E\left[
    - \log \sigma( \log \hatK_\theta(a^+_1, a^+_2))
    \right]
    \\&\qquad
    + \E\left[
    - \log \sigma( -\log \hatK_\theta(a^-_1, a^-_2))
    \right]
    \end{aligned}
    $
    \\
    &
    Kernel
    & $
    \hatK_\theta(a_1, a_2) ~=~ \exp(h_\theta(a_1)^\T h_\theta(a_2) / \tau)
    $
    \\
    &
    Minimum
    & $
    \hatK_*(a_1, a_2) ~=~ \frac{p_+(a_1, a_2)}{p(a_1)p(a_2)}$
    \\
    \midrule
    \multirow{3.5}{6.5em}{\centering Spectral\\ \footnotesize\citep{haochen2021provable}}
    &
    Loss
    & $\E\left[
    - 2 \hatK_\theta(a^+_1, a^+_2)
    \right]
    + E\left[
    (\hatK_\theta(a^-_1, a^-_2))^2
    \right]
    $
    \\
    &
    Kernel
    & $
    \hatK_\theta(a_1, a_2) ~=~ h_\theta(a_1)^\T h_\theta(a_2)
    $
    \\
    &
    Minimum
    & $
    \hatK_*(a_1, a_2) ~=~ \frac{p_+(a_1, a_2)}{p(a_1)p(a_2)}$
    \\
    \bottomrule
    \end{tabular}
    }%
}{%
    \caption{
    Existing contrastive learning objectives, reinterpreted as learning parameterized approximations of $\Kctr$. ``Minimum'' denotes the population minimum of the loss over all kernel functions (not necessarily representable using the shown parameterization). $C_{[a_1]}$ is a equivalence-class-dependent proportionality constant, with $C_{[a_1]} = C_{[a_2]}$ whenever $p_+(a_1, a_2) > 0$. See Appendix \ref{app:existing_objectives_optima} for derivations and discussion of other related objectives.
    }
    \label{tab:cl_objectives_as_kernel_learning}
}
\end{floatrow}
\end{figure}

Standard contrastive learning approaches can generally be decomposed into two pieces: a \textbf{parameterized model} that takes two augmented views and assigns them a real-valued similarity, and a \textbf{contrastive loss function} that encourages the model to assign higher similarity to positive pairs than negative pairs.
In particular, the InfoNCE / NT-XEnt objective proposed by \citet{van2018representation} and \citet{chen2020simple} and used with the SimCLR architecture, the NT-Logistic objective also considered by \citet{chen2020simple} and theoretically analyzed by \citet{tosh2021contrastive}, and the Spectral Contrastive Loss introduced by \citet{haochen2021provable} all have this structure.

The similarity between the two augmented views is commonly taken to be the dot product of outputs of a neural network, e.g. as $h_\theta(a_1)^\T h_\theta(a_2)$. However, a surprising pattern emerges if we instead interpret the \emph{exponentiated} dot product $\exp(h_\theta(a_1)^\T h_\theta(a_2) / \tau)$ as the similarity for the NT-XEnt and NT-Logistic objectives, treating the exponential and temperature term $\tau$ as part of the model instead of part of the objective. As shown in Table \ref{tab:cl_objectives_as_kernel_learning}, the three losses now share the same population minimum: the probability ratio $p_+(a_1, a_2)/p(a_1)p(a_2)$.

Intriguingly, the expressions $h_\theta(a_1)^\T h_\theta(a_2)$ and $\exp(h_\theta(a_1)^\T h_\theta(a_2) / \tau)$ both satisfy the definition of a \emph{Mercer kernel} (also called a \emph{positive-definite kernel}): each implicitly computes the inner product between feature vectors $\inner{\phi(a_1), \phi(a_2)}$ under some transformation $\phi: \cA \to \R^d$ (where $d$ may be infinite). Furthermore, the probability ratio $p_+(a_1, a_2)/p(a_1)p(a_2)$ can be interpreted as a Mercer kernel as well:
\begin{defn}
The \textbf{positive-pair kernel} associated with distributions $p(z)$ and $p(a|z)$ is the ratio
\begin{equation}
\Kctr(a_1, a_2)
= \frac{p_+(a_1, a_2)}{p(a_1)p(a_2)}
= \inner{\phictr(a_1), \phictr(a_2)},
\end{equation}
where $p_+(a_1, a_2) = \sum_z p(a_1 | z)p(a_2 | z) p(z)$, and $\phictr: \cA \to \R^{|\cZ|}$ is the transformation
\begin{equation}
\phictr(a) = \begin{bmatrix}
\frac{p(a|z_1) \sqrt{p(z_1)}}{p(a)}
&
\frac{p(a|z_2) \sqrt{p(z_2)}}{p(a)}
&
\cdots
&
\frac{p(a|z_{|\cZ|}) \sqrt{p(z_{|\cZ|})}}{p(a)}
\end{bmatrix}^\T.
\end{equation}\label{eqn:phictr}
\end{defn}

Here, the magnitude of the dot product between vectors $\phictr(a_1)$ and $\phictr(a_2)$ reflects the relative likelihood of data points $a_1$ and $a_2$ being drawn from a positive pair v.s. a negative pair. In particular, if $a_1$ and $a_2$ have zero probability of being a positive pair, $\phictr(a_1)$ and $\phictr(a_2)$ are orthogonal.

As shown in Figure \ref{fig:kernel_summary}, we can thus reinterpret the three contrastive learning methods in Table \ref{tab:cl_objectives_as_kernel_learning} as \emph{kernel learning} methods, in that they produce parameterized positive-definite kernel functions which approximate this positive-pair kernel. 
By investigating properties of this kernel, we can thus hope to build a better understanding of the behavior of contrastive learning.

\section{Kernel Principal Components Are Markov Chain Eigenfunctions}\label{sec:eigenfunctions}

We start by investigating the geometric structure of the data under $\Kctr$, and how we could use Kernel PCA to build a natural representation based on this structure. We next ask what properties we would want this representation to have, and use a Markov chain over positive pairs to decompose those properties in a convenient form. We then show that, surprisingly, the representation derived from $\Kctr$ exactly corresponds to the Markov chain decomposition and thus has precisely the desired properties.

\subsection{Summarizing $\Kctr$ With Kernel Principal Components Analysis}\label{subsec:kpca}

Recall that for any $a_1$ and $a_2$, we have $\Kctr(a_1, a_2) = \inner{\phictr(a_1), \phictr(a_2)}$ where $\phictr$ is defined in Equation \ref{eqn:phictr}.
Unfortunately, the "features" $\phictr(a) \in \R^{|\cZ|}$ are very high dimensional, being potentially as large as the cardinality of $|\cZ|$. 
A natural approach for building a lower-dimensional representation is to use principal components analysis (PCA):
construct the (uncentered) covariance matrix $\Sigma = \E_{p(a)}[\phictr(a) \phictr(a)^\T] \in \R^{|\cZ| \times |\cZ|}$ and then diagonalize it as $\Sigma = U D U^\T$ to determine the principal components $\{(\vecu_1, \sigma^2_1), (\vecu_2, \sigma^2_2), \dots \}$ of our transformed data distribution $\phictr(A)$, i.e. the directions capturing the maximum variance of $\phictr(A)$. We can then project the transformed points $\phictr(a)$ onto these directions to construct a sequence of ``projection functions'' $h_i(a) := \vecu_i^\T \phi(a)$.

Conveniently, it is possible to estimate these projection functions given access only to the kernel function $\Kctr(a_1, a_2) = \inner{\phictr(a_1), \phictr(a_2)}$, by using Kernel PCA \citep{scholkopf1997kernel}.
Kernel PCA bypasses the need to estimate the covariance in feature space, directly producing the set of principal component projection functions $h_1, h_2, \dots$ and corresponding eigenvalues $\sigma^2_1, \sigma^2_2, \dots$, such that $h_i(a)$ measures the projection of $\phictr(a)$ onto the $i$th eigenvector of the covariance matrix, and $\sigma^2_i$ measures the variance along that direction.

The sequence of principal component projection functions gives us a view into the geometry of our data when mapped through $\phictr(A)$.
It is thus natural to construct a $d$-dimensional representation $r : \cA \to \R^d$ by taking the first $d$ such functions $r(a) = [h_1(a), h_2(a), \dots, h_d(a)]$, and then use this for downstream learning, as is done in (kernel) principal component regression \citep{rosipal2001kernel, wibowo2012note}.  In practice, we can also substitute our learned kernel $\hatK_\theta$ in place of $\Kctr$.
Note that we are free to choose $d$ to trade off between the complexity of the representation and the amount of variance of $\phictr(A)$ that we can capture.

\subsection{Decomposing Invariance With The Positive-Pair Markov Chain}\label{subsec:markov_eigenfunctions}

\begin{figure}
    \centering

  \parbox{\textwidth}{
    \parbox{.68\textwidth}{%
      \subcaptionbox{$k = 20$ (medium augmentations)}{\includegraphics[width=\hsize]{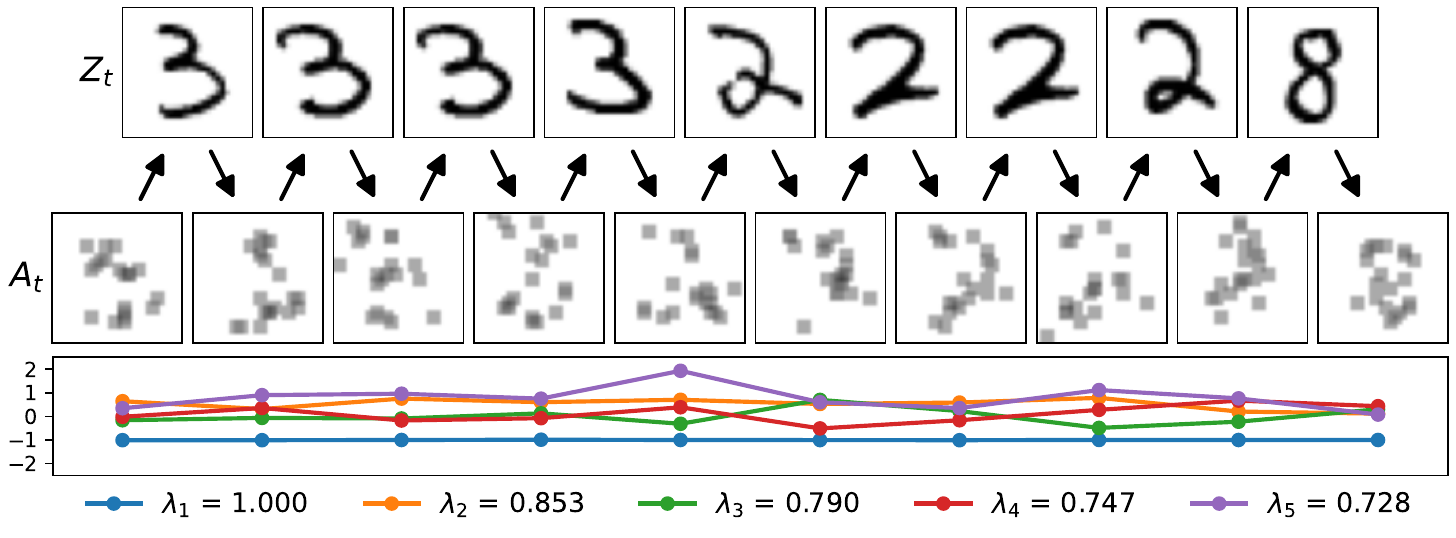}}
    }
    \hskip1em
    \parbox{.27\textwidth}{%
      \subcaptionbox{$k = 10$ (stronger aug.)}{\includegraphics[width=\hsize]{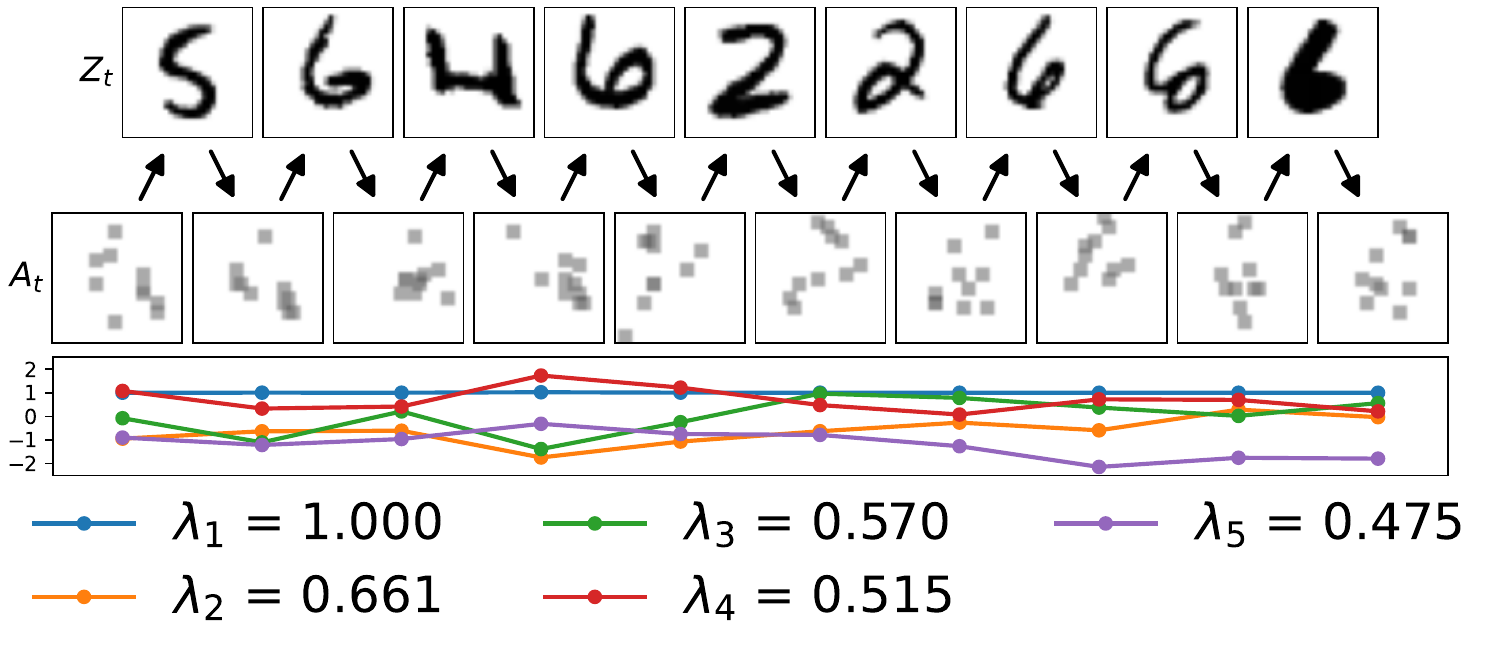}\vspace{-0.5em}}
      \vskip0.5em
      \subcaptionbox{$k = 50$ (weaker aug.)}{\includegraphics[width=\hsize]{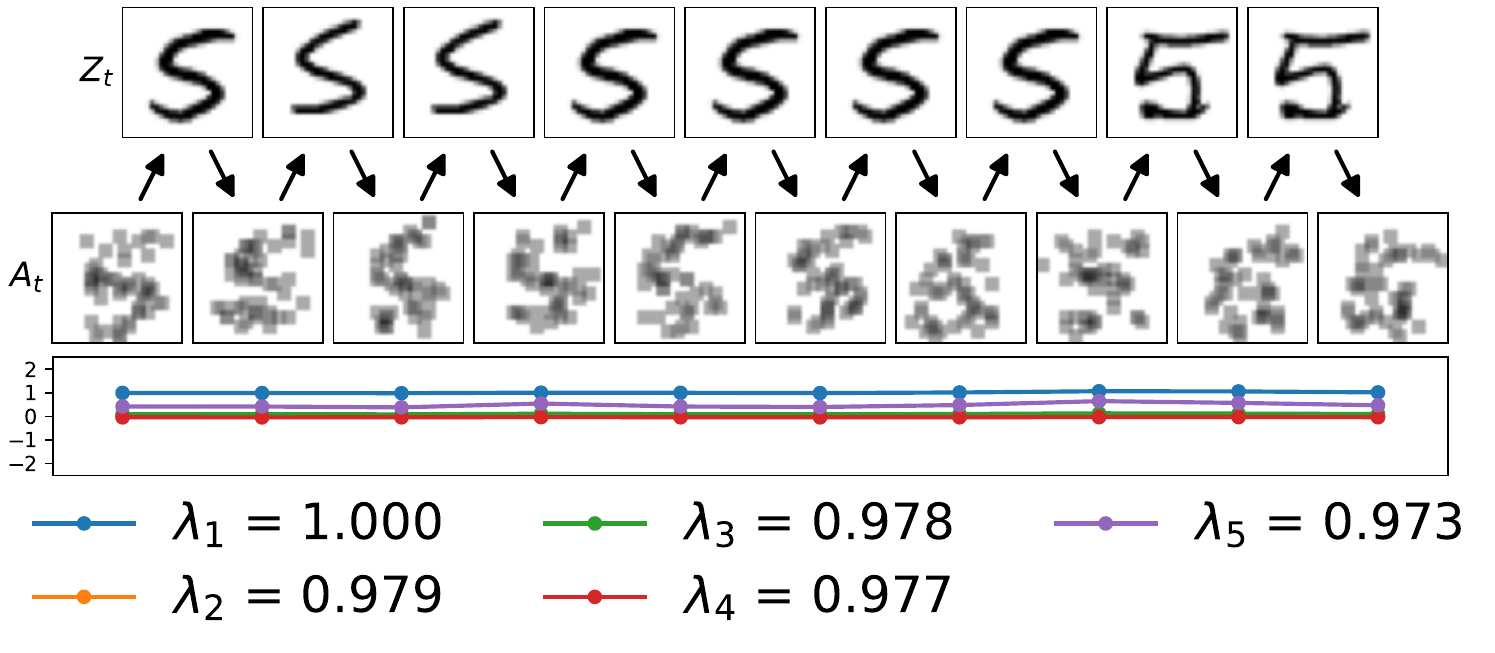}
      \vspace{-0.5em}} 
    }
  }

    \caption{Samples from the positive-pair Markov chain for MNIST $k$-pixel-subsampling augmentations at three strengths ($k = 10, 20, 50$). At each step we condition on an augmentation $a_t$ (middle row) to sample an uncorrupted example $z_t$ (top row), then sample $a_{t+1}$ from $z_t$ so that $(a_t, a_{t+1})$ is a positive pair. Below, we plot the five slowest-varying eigenfunctions $f_1, \dots, f_5$ at each step of the chain, labeled with their eigenvalues $\lambda_1, \dots, \lambda_5$. Weaker augmentations lead to slower mixing, smoother eigenfunctions, and eigenvalues closer to 1.
    }
    \vspace{-1em}
    \label{fig:markov_chain}
\end{figure}

What properties might we want this representation to have? If we wish to estimate functions $g$ satisfying Assumption \ref{assump:smooth_target}, for which 
$\E_{p_+(a_1, a_2)}\big[(g(a_1) - g(a_2))^2\big]$ is small, we might hope that $\E_{p_+(a_1, a_2)}\big[\| r(a_1) - r(a_2)\|_2^2\big]$ is small. But this is not sufficient to ensure we can estimate $g$ with high accuracy; as an example, a constant representation is unlikely to work well. Good representation learning approaches must ensure that the learned representations are also expressive enough to approximate $g$ (for instance by using negative samples or by directly encouraging diversity as in VICReg \citet{bardes2021vicreg}), but it is not immediately obvious what it means to be ``expressive enough'' if all we know about $g$ is that it satisfies Assumption \ref{assump:smooth_target}.

\goodbreak %

We can build a better understanding of the quality of a representation by expanding $g$ in terms of a basis in which $\E_{p_+(a_1, a_2)}\big[(g(a_1) - g(a_2))^2\big]$ admits a simpler form.
In particular, a convenient decomposition arises from considering the following Markov chain (shown in Figure \ref{fig:markov_chain}):
starting with an example $a_t$, sample the next example $a_{t+1}$ proportional to how likely $(a_t, a_{t+1})$ would be a positive pair, i.e. according to $p_+(a_{t+1} | a_t) = \sum_z p(a_{t+1}|z)p(z|a_t)$.
Note that, to the extent that some function $g$ satisfies Assumption \ref{assump:smooth_target}, we would also expect the value of $g$ to change slowly along trajectories of this chain, i.e. that $g(a_1) \approx g(a_2) \approx g(a_3) \approx \cdots$, and thus that in general $g(a_t) \approx \E_{p_+(a_{t+1} | a_t)}\big[ g(a_{t+1}) \big]$.
This motivates solving for the \emph{eigenfunctions} of the Markov chain, which are functions that satisfy $\E_{p_+(a_{t+1} | a_t)}\big[ f_i(a_{t+1}) \big] = \lambda_i f_i(a_t)$ for some $\lambda \in [0, 1]$.

As shown by \citet[Chapter~12]{levin2017markov}, the $f_i$ form an orthonormal basis\footnote{As long as we scale them appropriately and choose orthogonal functions within each eigenspace.} for the set of all functions $\cA \to \R$ under the inner product $\inner{f, g} = \E_{p(a)}[f(a) g(a)]$, in the sense that
$\E_{p(a)}[f_i(a) f_i(a)] = 1$
and
$\E_{p(a)}[f_i(a) f_j(a)] = 0 \text{ for } i \ne j$.
Then, for any $g : \cA \to \R$, if we let $c_i = \E_{p(a)}[f_i(a) g(a)]$, we must have $g(a) = \sum_i c_i f_i(a)$ and $\E_{p(a)}[g(a)^2] = \sum_i c_i^2$. Furthermore, this particular choice of orthonormal basis has the following appealing property:
\begin{restatable}{prop}{restateEigenfunctionProperties}\label{prop:eigenfunction_properties}
If $g : \cA \to \R$ and $c_i = \E[f_i(a) g(a)]$, then
\begin{gather}
\E_{p_+(a_1, a_2)}\Big[\big(g(a_1) - g(a_2)\big)^2\Big] = \sum_i (2 - 2\lambda_i) c_i^2.
\label{eqn:invariance_decomposition}  
\end{gather}
\end{restatable}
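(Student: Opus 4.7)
The plan is to expand the square, reduce everything to inner products under the stationary distribution $p(a)$, and then apply the defining property of the eigenfunctions to handle the cross term.

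First, I would observe that $p(a)$ is the stationary distribution of the positive-pair Markov chain, because $p_+(a_1, a_2)$ is symmetric in its arguments and thus $\sum_{a_1} p_+(a_1, a_2) = \sum_{a_1} p(a_1) p_+(a_2 \mid a_1) = p(a_2)$. Consequently, if $(a_1, a_2) \sim p_+$, then each marginal is distributed as $p(a)$. Expanding the square gives
\begin{equation*}
\E_{p_+(a_1,a_2)}\bigl[(g(a_1) - g(a_2))^2\bigr]
= 2\,\E_{p(a)}[g(a)^2] - 2\,\E_{p_+(a_1,a_2)}[g(a_1)\,g(a_2)].
\end{equation*}

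Next, I would substitute the expansion $g = \sum_i c_i f_i$. By orthonormality of the $f_i$ under $\langle f, g\rangle = \E_{p(a)}[f g]$, we immediately get $\E_{p(a)}[g(a)^2] = \sum_i c_i^2$. For the cross term, I would condition on $a_1$, factor the inner expectation over $p_+(a_2 \mid a_1)$, and apply the eigenfunction identity $\E_{p_+(a_2 \mid a_1)}[f_i(a_2)] = \lambda_i f_i(a_1)$ termwise:
\begin{align*}
\E_{p_+(a_1,a_2)}[g(a_1) g(a_2)]
&= \E_{p(a_1)}\!\left[ g(a_1)\, \sum_i c_i\, \E_{p_+(a_2\mid a_1)}[f_i(a_2)] \right] \\
&= \sum_i c_i \lambda_i\, \E_{p(a_1)}[g(a_1) f_i(a_1)]
= \sum_i \lambda_i c_i^2.
\end{align*}
Combining the two pieces yields $\sum_i (2 - 2\lambda_i) c_i^2$, which is the desired identity.

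The only places that need a bit of care are (i) justifying that $p$ is stationary, which follows from the symmetry of $p_+$ noted above, and (ii) the interchange of infinite sums and expectations when expanding $g = \sum_i c_i f_i$ inside expectations. Since the paper works with finite $\cA$, the eigenbasis is finite and both manipulations are immediate; no real obstacle remains. The conceptual content is entirely captured by the eigenvalue equation, which converts the Dirichlet-form integral on the left into a diagonal quadratic form in the coordinates $c_i$ on the right.
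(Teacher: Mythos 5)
Your proof is correct and follows essentially the same route as the paper's: expand the square to isolate the cross term $\E_{p_+}[g(a_1)g(a_2)]$, then evaluate it via the spectral decomposition of the positive-pair transition operator. The paper performs the last step with explicit matrix algebra ($\vecg^\T P_{A,A}\vecg = \vecc^\T\Lambda\vecc$), whereas you apply the eigenfunction identity $\E_{p_+(a_2\mid a_1)}[f_i(a_2)]=\lambda_i f_i(a_1)$ termwise after conditioning on $a_1$; these are the same diagonalization expressed in probabilistic rather than matrix language.
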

See Appendix \ref{app:pca_is_eigenfunctions} for a proof, along with a derivation of the orthonormality of the basis.
One particular consequence of this fact is that the eigenfunctions with eigenvalues closest to 1 are also the most view-invariant. Specifically, setting $g = f_i$ reveals that
\begin{equation}
    \E_{p_+(a_1, a_2)}\Big[\big(f_i(a_1) - f_i(a_2)\big)^2\Big] = 2 - 2 \lambda_i.
    \label{eqn:eigenfunction_pair_discrepancy}
\end{equation}

More generally, if $g$ satisfies Assumption \ref{assump:smooth_target},
Equation \ref{eqn:invariance_decomposition} implies that  $2\sum_i (1 - \lambda_i) c_i^2 \le \varepsilon$, and thus $g$ must have coefficients $c_i$ concentrated on eigenfunctions with $\lambda_i$ close to $1$. Indeed, if $\varepsilon = 0$ (i.e. if $g$ is perfectly invariant to augmentations) then the only eigenfunctions with nonzero weights must be those with $\lambda_i = 1$. If we want to approximate $g$ using a small finite-dimensional representation, we should then prefer representations that allow us to estimate any linear combination of the eigenfunctions for which $\lambda_i$ is close to $1$.

\subsection{Kernel PCA Recovers The Basis of Positive-Pair Eigenfunctions}\label{subsec:equivalence}

Surprisingly, it turns out that the representation built from kernel PCA in Section \ref{subsec:kpca} has precisely the desired property. In fact, performing kernel PCA with $\Kctr$ (over the full population) is \emph{exactly equivalent} to identifying the eigenfunctions of the Markov transition matrix $P$.

\begin{restatable}{thm}{restateThmPcaIsEigenfunction}\label{thm:pca_is_eigenfunctions}
The output $(h_1, \sigma^2_1), (h_2, \sigma^2_2), \dots$ of population-level Kernel PCA under $\Kctr$ and the orthonormal basis of eigenfunctions $f_i$ of $P$ with eigenvalues $\lambda_i$ satisfy $\sigma^2_i = \lambda_i$ and $h_i(a) = \sigma_i f_i(a) = \lambda_i^{1/2} f_i(a)$ for all $i$ and all $a \in \cA$ (up to reordering and multiplicity of eigenspaces\footnote{In other words, when some eigenvalues have multiplicity $> 1$, the $h_i$ and $f_i$ are not uniquely determined, but we are free to choose them such that they satisfy this relationship.}).
\end{restatable}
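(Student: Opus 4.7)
The plan is to exhibit a single integral operator $L$ on $L^2(p(a))$ whose eigendecomposition is computed in two different ways: by Kernel PCA on one side, and by the positive-pair Markov chain on the other. Once both $h_i$ and $f_i$ are identified as eigenfunctions of this same $L$, all that remains is to match norms to pin down the scaling.

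First, I would define $L: L^2(p(a)) \to L^2(p(a))$ by $(Lg)(a) := \E_{p(a')}[\Kctr(a,a')\, g(a')]$. Substituting $\Kctr(a,a') = p_+(a,a')/(p(a)p(a'))$ and $p_+(a'\mid a) = p_+(a,a')/p(a)$ collapses this to $(Lg)(a) = \E_{p_+(a'\mid a)}[g(a')]$, which is precisely the Markov expectation operator of the positive-pair chain. The given relation $\E_{p_+(a_{t+1}\mid a_t)}[f_i(a_{t+1})] = \lambda_i f_i(a_t)$ then reads $L f_i = \lambda_i f_i$, so each $f_i$ is an eigenfunction of $L$ with eigenvalue $\lambda_i$. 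Note also that $L$ is self-adjoint on $L^2(p(a))$, since $p(a)\, p(a')\, \Kctr(a,a') = p_+(a,a')$ is symmetric in $(a, a')$, i.e., the chain is reversible.

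Next, I would show that the $h_i$ are also eigenfunctions of $L$. Let $\vecu_i$ be a unit eigenvector of $\Sigma = \E_{p(a)}[\phictr(a)\, \phictr(a)^\T]$ with $\Sigma \vecu_i = \sigma_i^2 \vecu_i$, and set $h_i(a) = \vecu_i^\T \phictr(a)$. Using the factorization $\Kctr(a,a') = \inner{\phictr(a), \phictr(a')}$,
\[
(Lh_i)(a) = \phictr(a)^\T \, \E_{p(a')}\!\big[\phictr(a')\phictr(a')^\T\big]\, \vecu_i = \phictr(a)^\T \Sigma \vecu_i = \sigma_i^2\, h_i(a),
\]
so $L h_i = \sigma_i^2 h_i$. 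A nearly identical computation yields $\E_{p(a)}[h_i(a) h_j(a)] = \vecu_i^\T \Sigma \vecu_j = \sigma_i^2\, \delta_{ij}$, so $\|h_i\|_{L^2(p(a))}^2 = \sigma_i^2$ and $\{h_i/\sigma_i\}$ is an $L^2(p(a))$-orthonormal family.

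Since $L$ is self-adjoint, its eigendecomposition is unique up to the choice of orthonormal basis within each eigenspace. Both $\{f_i\}$ and $\{h_i/\sigma_i\}$ are orthonormal eigenbases of $L$ in $L^2(p(a))$, so after reindexing the multisets of eigenvalues must coincide, yielding $\sigma_i^2 = \lambda_i$; within each eigenspace we may then rotate to align the two bases so that $h_i/\sigma_i = f_i$, i.e.\ $h_i = \lambda_i^{1/2} f_i$. The main subtlety I expect is handling eigenvalues of multiplicity greater than one, where neither basis is canonical and one must invoke the theorem's ``up to reordering and multiplicity'' clause to choose compatible bases; zero eigenvalues are handled trivially, since $h_i$ vanishes whenever $\sigma_i = 0$.
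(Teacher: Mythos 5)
Your proof is correct, and it takes a genuinely different route from the paper's. The paper constructs the auxiliary matrix $B = \Phictr D_A^{1/2}$, takes its singular value decomposition, and observes that $BB^\T = \Sigma$ (the Kernel PCA covariance) while $B^\T B = M = D_A^{-1/2}P_{A,A}D_A^{-1/2}$ (a symmetric conjugate of the Markov transition matrix); the correspondence then falls out of matching the two factors of the SVD. You instead identify a single self-adjoint operator $L$ on $L^2(p(a))$ --- the kernel integral operator, which you show collapses directly to the Markov expectation operator --- and verify that both the Kernel PCA projection functions $h_i$ and the Markov eigenfunctions $f_i$ are eigenfunctions of that same $L$. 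What your approach buys is conceptual economy: rather than relating two different matrices through a shared factor, you show there is really only one operator being diagonalized in two guises, and the result follows from the (essential) uniqueness of the spectral decomposition of a self-adjoint operator. It also avoids the detour through the symmetrized matrix $M$ and the change of measure it encodes, which the paper has to unpack separately. The paper's SVD route, in exchange, is more concrete and hands you the explicit relationship $h_i(a) = \lambda_i^{1/2} f_i(a)$ by direct matrix calculation, rather than arguing abstractly that compatible bases \emph{can} be chosen. One small point worth keeping in mind in your version: when $\sigma_i^2 = 0$ the function $h_i$ vanishes identically, so $\{h_i/\sigma_i\}$ is only an orthonormal family over the nonzero eigenvalues and extends to a basis only on the range of $L$; you flag this correctly at the end, and the theorem's ``up to reordering and multiplicity'' clause absorbs the ambiguity on the null space.
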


See Appendix \ref{app:pca_is_eigenfunctions} for a proof.
This theorem reveals a deep connection between the (co)variance of the dataset under our kernel $\Kctr$ and the view-invariance captured by Assumption \ref{assump:smooth_target}.
In particular, if we build a representation $r(a) = [h_1(a), h_2(a), \dots, h_d(a)]$ using the first $d$ principal component projection functions, this representation will directly capture the $d$ eigenfunctions with eigenvalues closest to 1, and allow us to approximate any linear combination of those eigenfunctions using a linear predictor.

\begin{figure}
    \centering
    \includegraphics[width=0.95\textwidth,trim={-1cm 0 1cm 0}]{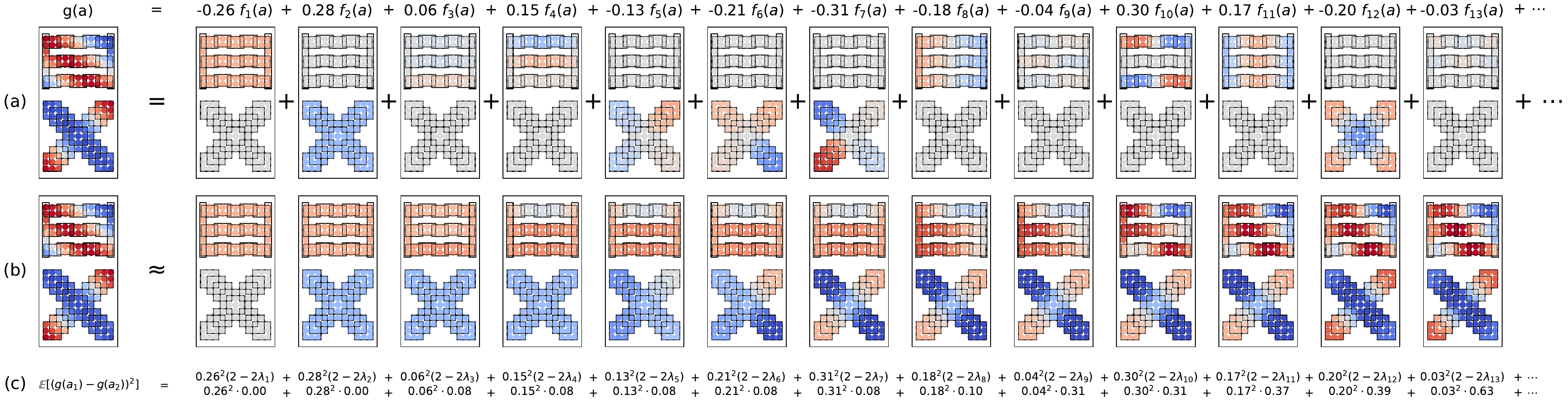}\\
    \vspace{0.5em}
    \includegraphics[width=0.95\textwidth]{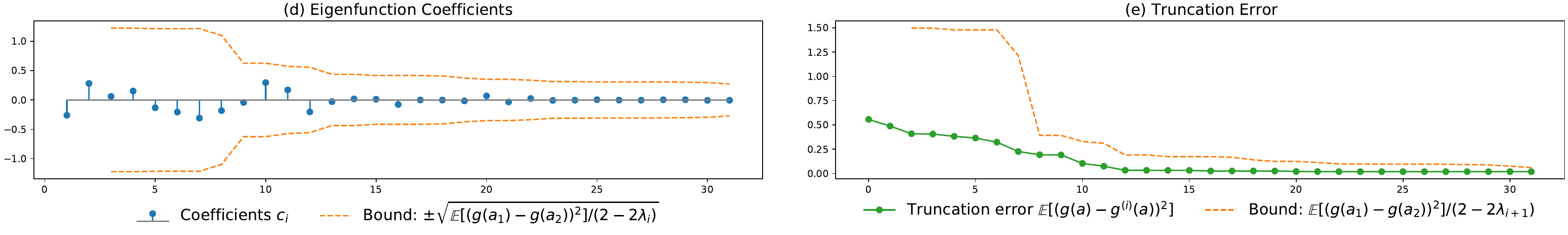}
    
    \caption{(a)
We can expand an arbitrary function $g : \cA \to \R$ as a linear combination of eigenfunctions, here using the toy problem described in Section \ref{sec:experiments} and Figure \ref{fig:box_problem_population_eigenfunction_approximation}.
    (b) Taking partial sums using only the first $d$ eigenfunctions yields a series of increasingly good approximations of $g$. (c) The positive-pair discrepancy of $g$ is a linear combination of the eigenvalues of the corresponding eigenfunctions. (d) Under Assumption \ref{assump:smooth_target}, we can bound each coefficient based on its contribution to the total positive-pair discrepancy. (e) This can be used to bound the error of approximating $g$ with a small subset of eigenfunctions. By Theorem \ref{thm:minimax}, this ordered basis admits the smallest such bound.
    }
    \label{fig:box_problem_decomposition}
\end{figure}

\section{Eigenfunction Representations Are Minimax Optimal}

We now give a more precise analysis of the quality of this representation for downstream supervised fine-tuning. 
We focus on the class of linear predictors on top of a $k$-dimensional representation $r: \cA \to \R^d$, e.g. we will approximate $g$ with a parameterized function $\hat{g}_\beta(a) = \beta^\top r(a)$. It turns out that the representation consisting of the $d$ eigenfunctions $\{ f_1, \dots, f_d \}$ with largest eigenvalues $\{ \lambda_1, \dots, \lambda_d \}$ is the best choice under two simultaneous criteria.

\begin{restatable}{thm}{restatePropMinimax}\label{thm:minimax}
Let $\cF_r = \{ a \mapsto \beta^\top r(a) : \beta \in \R^d \}$ be the subspace of linear predictors from representation $r$, and $S_\varepsilon$ be the set of functions satisfying Assumption \ref{assump:smooth_target}. Let $r^d_*(a) = [ f_1(a), f_2(a), \dots, f_d(a) ]$ be the representation consisting of the $d$ eigenfunctions of the positive pair Markov chain with the largest eigenvalues.
Then $\cF_{r_*^d}$ maximizes the view invariance of the least-invariant unit-norm predictor in $\cF_{r_*^d}$:
\vspace{-0.5em}
\begin{equation}
\cF_{r_*^d} =
\argmin_{\dim(\cF) = d}
~~
\max_{\hat{g} \in \cF,~\E[\hat{g}(a)^2] = 1}
~~
\E_{p_+}\Big[\big(\hat{g}(a_1) - \hat{g}(a_2)\big)^2\Big].
\label{eqn:minimax-regularization}
\end{equation}
Simultaneously, $\cF_{r_*^d}$ minimizes the (quadratic) approximation error for the worst-case target function satisfying Assumption \ref{assump:smooth_target} for any fixed $\varepsilon$:
\vspace{-0.5em}
\begin{equation}
\cF_{r_*^d} =
\argmin_{\dim(\cF) = d}
~~
\max_{g \in S_\varepsilon}
~~
\min_{\hat{g} \in \cF}
~~
\E_{p(a)}\Big[\big( g(a) - \hat{g}(a) \big)^2 \Big].\label{eqn:minimax-approximation}
\end{equation}
\vspace{-1em}
\end{restatable}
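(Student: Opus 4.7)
The plan is to expand everything in the orthonormal eigenfunction basis $\{f_i\}$ of the positive-pair Markov chain (Section~\ref{subsec:markov_eigenfunctions}), ordered so that $\lambda_1 \geq \lambda_2 \geq \dots$, and reduce both minimax problems to classical Courant--Fischer-style spectral arguments. For $\hat{g} = \sum_i c_i f_i$, Proposition~\ref{prop:eigenfunction_properties} together with orthonormality gives
\[
\E_{p_+}\!\big[(\hat{g}(a_1)-\hat{g}(a_2))^2\big] = \sum_i (2-2\lambda_i)\, c_i^2,
\qquad
\E_{p(a)}\!\big[\hat{g}(a)^2\big] = \sum_i c_i^2,
\]
so both objectives become weighted sums of $c_i^2$ whose weights are determined by the spectrum. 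Since $\cF_{r_*^d} = \mathrm{span}(f_1,\dots,f_d)$ is the subspace on which the ``slow'' weights $2-2\lambda_i$ are smallest, its optimality is plausible; the real work is to match it with a lower bound over arbitrary $d$-dimensional subspaces $\cF$.

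For Equation~\ref{eqn:minimax-regularization}, I would show both sides equal $2-2\lambda_d$. The upper bound is direct: for any unit-norm $\hat{g} = \sum_{i\le d} c_i f_i \in \cF_{r_*^d}$, $\sum (2-2\lambda_i)c_i^2 \le (2-2\lambda_d)\sum c_i^2 = 2-2\lambda_d$, attained at $\hat{g} = f_d$. The matching lower bound uses dimension counting: the subspace $W = \mathrm{span}(f_d, f_{d+1}, \dots)$ has codimension $d-1$, so for any $d$-dim $\cF$ the intersection $\cF \cap W$ contains a nonzero vector $\hat{g}$; writing $\hat{g} = \sum_{i\ge d} c_i f_i$ with $\sum c_i^2=1$ and using $2-2\lambda_i \ge 2-2\lambda_d$ for $i \ge d$ yields $\sum(2-2\lambda_i)c_i^2 \ge 2-2\lambda_d$, so the inner max over \emph{any} $d$-dim $\cF$ is at least $2-2\lambda_d$.

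For Equation~\ref{eqn:minimax-approximation}, the inner minimum equals $\|\mathrm{proj}_{\cF^\perp} g\|^2$. For $\cF = \cF_{r_*^d}$ and $g=\sum c_i f_i$, this projection error is $\sum_{i>d} c_i^2$; maximizing subject to $\sum(2-2\lambda_i)c_i^2 \le \varepsilon$ with $2-2\lambda_i \ge 2-2\lambda_{d+1}$ for $i>d$, the optimum places all budget on $c_{d+1}$ and yields $\varepsilon/(2-2\lambda_{d+1})$. For a general $d$-dim $\cF$, dimension counting again supplies the matching lower bound: $V = \mathrm{span}(f_1,\dots,f_{d+1})$ has dimension $d+1$, so $V\cap \cF^\perp$ contains a nonzero $g$; because $g\in V$ we have $\sum(2-2\lambda_i)c_i^2 \le (2-2\lambda_{d+1})\|g\|^2$, so rescaling $g$ to have norm $\sqrt{\varepsilon/(2-2\lambda_{d+1})}$ produces an element of $S_\varepsilon \cap \cF^\perp$ whose entire mass survives projection onto $\cF^\perp$, forcing the outer max to be at least $\varepsilon/(2-2\lambda_{d+1})$.

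The main subtleties are edge cases rather than the core logic, and the dimension-counting step is what makes both lower bounds go through without any gap assumption on the spectrum. When eigenvalues tie, $\cF_{r_*^d}$ is not literally unique, but any valid orthonormal choice within the shared eigenspace attains the optimum, matching the ``up to multiplicity'' caveat in Theorem~\ref{thm:pca_is_eigenfunctions}. When $\lambda_{d+1}=1$ (perfectly invariant directions remain outside the top-$d$ eigenspace), the approximation bound diverges, which is the correct conclusion: any $d$-dim $\cF$ that omits part of the $\lambda=1$ subspace admits arbitrarily bad worst-case $g \in S_\varepsilon$, and one must restrict attention to $\cF$ containing that full subspace for the formula $\varepsilon/(2-2\lambda_{d+1})$ to be meaningful.
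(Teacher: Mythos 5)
Your proof is correct, and it takes a genuinely cleaner route than the paper's own argument. The paper works at the matrix level: it encodes the subspace $\cF$ via a projection operator $R$ on $D_A^{1/2}\cF$, reduces each game to finding the top eigenvalue of a matrix product such as $(I-R)\big(D_A^{-1/2}LD_A^{-1/2}\big)^\dagger$, and then lower-bounds that top eigenvalue by citing a majorization result from Bhatia on eigenvalues of products of symmetric matrices. You instead diagonalize everything up front in the eigenfunction basis (which the paper has already established is orthonormal under $\inner{f,g}=\E_{p(a)}[fg]$), turning both objectives into weighted sums $\sum_i (2-2\lambda_i)c_i^2$ and $\sum_i c_i^2$, and then apply the Courant--Fischer dimension-counting lemma directly: $\cF \cap \mathrm{span}(f_d, f_{d+1},\dots)$ is nontrivial for the regularization bound, and $\mathrm{span}(f_1,\dots,f_{d+1})\cap \cF^\perp$ is nontrivial for the approximation bound. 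This bypasses the pseudoinverse bookkeeping, the change of basis by $D_A^{\pm 1/2}$, and the external Bhatia citation, while reaching the same optimal values $2(1-\lambda_d)$ and $\varepsilon/(2(1-\lambda_{d+1}))$. The paper itself remarks that its result is ``closely related to the Courant--Fischer--Weyl min-max principle'' but does not deploy it directly; your proof essentially is that deployment. The paper's version does spell out one degenerate case more explicitly --- that any $\cF$ failing to contain the full $\lambda=1$ eigenspace has unbounded worst-case approximation error --- but as you note, this is already implied by your lower bound since the formula $\varepsilon/(2-2\lambda_{d+1})$ diverges in that regime, and in any case it is not needed to establish that $\cF_{r_*^d}$ attains the minimum.
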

Equation \ref{eqn:minimax-regularization} states that the function class $\cF_{r_*^d}$ has an implicit regularization effect: it contains the functions that change as little as possible over positive pairs, relative to their norm. Equation \ref{eqn:minimax-approximation} reveals that this function class is also the optimal choice for least-squares approximation of a function satisfying Assumption \ref{assump:smooth_target}.
Together, these findings suggest that this representation should
perform well
as long as Assumption \ref{assump:smooth_target} holds. 

We make this intuition precise as follows. Consider the loss function $\ell(\hat{y}, y) := |\hat{y} - y|$, and the associated risk $\mathcal{R}(g) = \E\left[{\ell(g(A), Y)}\right]$ of a predictor $g: \mathcal{A} \to \mathbb{R}$. Let $g^{*} \in \argmin \mathcal{R}(g)$ be the lowest-risk predictor over the set of functions $\cA \to \R$, with risk $\mathcal{R}^{*} = \mathcal{R}(g^{*})$, and assume it satisfies Assumption \ref{assump:smooth_target}. Expand $g^{*}$ as a linear combination of the basis of eigenfunctions $(f_i)_{i=1}^{\lvert\mathcal{A}\rvert}$ as $g^{*} = \sum_{i=1}^{\lvert \mathcal{A} \rvert} \beta^{*}_i f_i$, and define the vector $\beta^{*} := (\beta^{*}_1, \ldots, \beta^{*}_d)$ by taking the first $d$ coefficients.

\begin{restatable}{prop}{restatePropGen}
\label{prop:gen}
Let $(A_i, Y_i)_{i=1}^{n}$ be i.i.d. samples, choose $R \geq 0$, and consider the constrained empirical risk minimizer $\hat{\beta}_R \in \argmin_{\|\beta\|_2 \leq R} n^{-1} \sum_{i=1}^{n} \lvert \inner{\beta, r^{*}_d(A_i)} - Y_i\rvert$. Then the expected excess risk of $\hat{\beta}_R$ is bounded by:
\begin{equation*}
    \E\left[{\mathcal{E}(\hat{\beta}_R)}\right] \leq \frac{2dR}{\sqrt{n}} + \sqrt{d} (\|\beta^{*}\|_2 - R)_{+} + \sqrt{\frac{\varepsilon}{2 (1-\lambda_{d + 1})}}
\end{equation*}
where $\mathcal{E}(\beta) := \mathcal{R}(\beta) - \mathcal{R}^{*}$ is the excess risk and $(x)_{+} := \max\{x, 0\}$.
\end{restatable}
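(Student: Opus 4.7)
My plan is to use the standard empirical-risk decomposition into three pieces---a statistical estimation term, a constraint-projection term, and an approximation term---invoking the eigenfunction structure only in the last step. Let $\tilde\beta^{*} = \beta^{*}$ when $\|\beta^{*}\|_2 \leq R$ and $\tilde\beta^{*} = R\beta^{*}/\|\beta^{*}\|_2$ otherwise, so $\tilde\beta^{*}$ is feasible for the constrained ERM. Adding and subtracting $\widehat{\mathcal{R}}(\hat\beta_R)$, $\widehat{\mathcal{R}}(\tilde\beta^{*})$, $\mathcal{R}(\tilde\beta^{*})$, and $\mathcal{R}(\beta^{*})$, using $\widehat{\mathcal{R}}(\hat\beta_R) \leq \widehat{\mathcal{R}}(\tilde\beta^{*})$ by optimality and $\E[\widehat{\mathcal{R}}(\tilde\beta^{*})] = \mathcal{R}(\tilde\beta^{*})$ since $\tilde\beta^{*}$ is deterministic, yields
\begin{equation*}
\E\bigl[\mathcal{E}(\hat\beta_R)\bigr] \leq \E\Bigl[\sup_{\|\beta\|_2 \leq R}\bigl(\mathcal{R}(\beta) - \widehat{\mathcal{R}}(\beta)\bigr)\Bigr] + \bigl(\mathcal{R}(\tilde\beta^{*}) - \mathcal{R}(\beta^{*})\bigr) + \bigl(\mathcal{R}(\beta^{*}) - \mathcal{R}^{*}\bigr),
\end{equation*}
so it suffices to bound these three pieces by the three advertised terms.

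For the first (estimation) term, I would apply standard symmetrization together with Ledoux--Talagrand contraction: because $\ell(\cdot, y)$ is $1$-Lipschitz, $\E[\sup_{\|\beta\|_2 \leq R}(\mathcal{R}(\beta) - \widehat{\mathcal{R}}(\beta))] \leq 2\mathfrak{R}_n(\mathcal{F}_{r^{*}_d, R})$. Decomposing coordinate-wise, $\sup_{\|\beta\|_2 \leq R} n^{-1}\sum_j \sigma_j \langle\beta, r^{*}_d(A_j)\rangle \leq R\sum_{i=1}^{d}\lvert n^{-1}\sum_j \sigma_j f_i(A_j)\rvert$ (using $\|v\|_2 \leq \|v\|_1$), and Jensen gives $\E[\lvert n^{-1}\sum_j \sigma_j f_i(A_j)\rvert] \leq 1/\sqrt{n}$ since $\E[f_i(A)^2] = 1$. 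Hence $\mathfrak{R}_n \leq dR/\sqrt{n}$, yielding $2dR/\sqrt{n}$. For the second (projection) term, $1$-Lipschitzness and Cauchy--Schwarz give $\mathcal{R}(\tilde\beta^{*}) - \mathcal{R}(\beta^{*}) \leq \|\tilde\beta^{*} - \beta^{*}\|_2 \, \E[\|r^{*}_d(A)\|_2]$; by Jensen $\E[\|r^{*}_d(A)\|_2] \leq \sqrt{d}$ (each $f_i$ has unit variance), and $\|\tilde\beta^{*} - \beta^{*}\|_2 = (\|\beta^{*}\|_2 - R)_{+}$ by construction, yielding the second term.

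Only the third (approximation) term uses the eigenfunction analysis. Again by $1$-Lipschitzness and Cauchy--Schwarz, $\mathcal{R}(\beta^{*}) - \mathcal{R}^{*} \leq \sqrt{\E[(g^{*}(A) - \sum_{i \leq d}\beta^{*}_i f_i(A))^2]} = \sqrt{\sum_{i > d}(\beta^{*}_i)^2}$, where the equality uses orthonormality of the $f_i$ under $p(a)$. Proposition~\ref{prop:eigenfunction_properties} applied to $g^{*}$ combined with Assumption~\ref{assump:smooth_target} gives $\sum_i 2(1-\lambda_i)(\beta^{*}_i)^2 \leq \varepsilon$, and since the eigenvalues are sorted in nonincreasing order, $1-\lambda_i \geq 1-\lambda_{d+1}$ for every $i > d$, so $\sum_{i>d}(\beta^{*}_i)^2 \leq \varepsilon/(2(1-\lambda_{d+1}))$. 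The main obstacle is mostly bookkeeping: the decomposition must be arranged so that the $\|\beta^{*}\|_2$-dependence is isolated in the projection term and does not contaminate either the statistical or approximation parts. The substantive ingredient is the final step, which exploits the Kernel PCA / Markov-chain equivalence (Theorem~\ref{thm:pca_is_eigenfunctions}) through Proposition~\ref{prop:eigenfunction_properties} to convert the view-invariance budget $\varepsilon$ into a tail bound on the coefficients $(\beta^{*}_i)_{i>d}$ via the spectral gap $1-\lambda_{d+1}$.
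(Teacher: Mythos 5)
Your proposal is correct and follows essentially the same three-term decomposition (estimation, constraint projection, eigenfunction approximation) and the same ingredients (Rademacher bound, $1$-Lipschitzness plus Cauchy--Schwarz, and the spectral gap bound via Proposition~\ref{prop:eigenfunction_properties}) as the paper's proof. The only cosmetic differences are that you spell out a coordinate-wise Rademacher argument (via $\|\cdot\|_2 \leq \|\cdot\|_1$ and Jensen) that directly yields the stated $2dR/\sqrt{n}$ term where the paper cites a standard reference, and you derive the approximation term $\sqrt{\varepsilon/(2(1-\lambda_{d+1}))}$ self-containedly from the eigenvalue ordering instead of appealing to the proof of Theorem~\ref{thm:minimax}; neither change alters the substance of the argument.
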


Note that $\|\beta^{*}\|_2^2 = \sum_{i=1}^d \beta_i^{*2} \le \sum_{i=1}^{|\cA|} \beta_i^{*2} = \E[g^*(a)^2]$, so if we choose $R^2 \ge \E[g^*(a)^2]$ then the second term vanishes. The third term bounds the error incurred by using only the first $d$ eigenfunctions, since $\beta_i^{*2} \le \frac{\varepsilon}{2 (1-\lambda_i)}$ by Equation \ref{eqn:invariance_decomposition}, and motivates choosing $d$ to be large enough that $\lambda_{d+1}$ is small.
See Appendix \ref{app:optimality_generalization} for proofs of Theorem \ref{thm:minimax} and Proposition \ref{prop:gen}.

\section{Related Work}\label{sec:relatedwork}

Our work is closely connected to the spectral graph theory analysis by \citet{haochen2021provable}, which focuses on the eigenvectors of the normalized adjacency matrix of an augmentation graph, introduces the spectral contrastive loss, and shows that its optimum recovers the top eigenvectors up to an invertible transformation.
We go further by arguing that the Spectral Contrastive, NT-XEnt, and NT-Logistic losses can all be viewed as approximating $\Kctr$, and showing that the resulting eigenfunctions are minimax-optimal for reconstructing target functions satisfying Assumption \ref{assump:smooth_target}.
(Indeed, the eigenvectors analyzed by \citeauthor{haochen2021provable} are equal to our eigenfunctions scaled by $p(a)^{1/2}$; see Appendix \ref{app:more_about_spectral_loss}.)
Our work is also related to the analysis of non-linear CCA given by \citet{lee2020predicting}, which can be seen as an asymmetric variant of Kernel PCA with $\Kctr$.
We note that Assumption \ref{assump:smooth_target} can be recast in terms of
the Laplacian matrix of the augmentation graph; similar assumptions have been used before for label propagation \citep{bengio200611} and Laplacian filtering \citep{zhou2011error}. This assumption is also used as a ``consistency regularizer'' for semi-supervised learning \citep{sajjadi2016regularization, laine2016temporal} and self-supervised learning \citep{bardes2021vicreg}.

There have been a number of other attempts to unify different contrastive learning techniques in a single theoretical framework. \citet{balestriero2022contrastive} describe a unification using the framework of spectral embedding methods, and draw connections between SimCLR and Kernel ISOMAP. \citet{tian2022deep} provides a game-theoretic unification, and shows that contrastive losses are related to PCA in the \emph{input} space for the special case of deep linear networks.

Techniques such as SpIN \citep{pfau2018spectral} and NeuralEF \citep{deng2022neuralef} have been proposed to learn spectral decompositions of kernels. When applied to the positive-pair kernel $\Kctr$, it is possible to rewrite their objective in terms of paired views instead of requiring kernel evaluations, and the resulting decomposition is exactly the orthogonal basis of Markov chain eigenfunctions $f_i$. Interestingly, modifying Neural EF in this manner yields an algorithm that closely resembles the Variance-Invariance-Covariance regularization (VICReg) self-supervised learning method proposed by \citet{bardes2021vicreg}, as we discuss in Appendix \ref{app:direct_eigenfunction_vicreg}.
See also Appendix \ref{app:subsec:other-related-objectives} for discussion of other connections between the positive-pair kernel and objectives considered by prior work.

\section{Experiments}\label{sec:experiments}

It remains to determine how well learned approximations of the positive-pair kernel succeed at recovering the eigenfunction basis in practice. We explore this question on two synthetic testbed tasks for which the true kernel $\Kctr$ can be computed in closed form.

\smallparagraph{Datasets.}
Our first dataset is a simple ``overlapping regions'' toy problem, visualized in Figure \ref{fig:box_problem_population_eigenfunction_approximation}. We define $\cA$ to be a set of grid points, and $\cZ$ to be a set of rectangular regions over the grid (shaded). We set $p(Z)$ to be a uniform distribution over regions, and $p(A|Z=z)$ to choose one grid point contained in $z$ at random.
\begin{figure}
    \centering
    \includegraphics[width=0.95\textwidth]{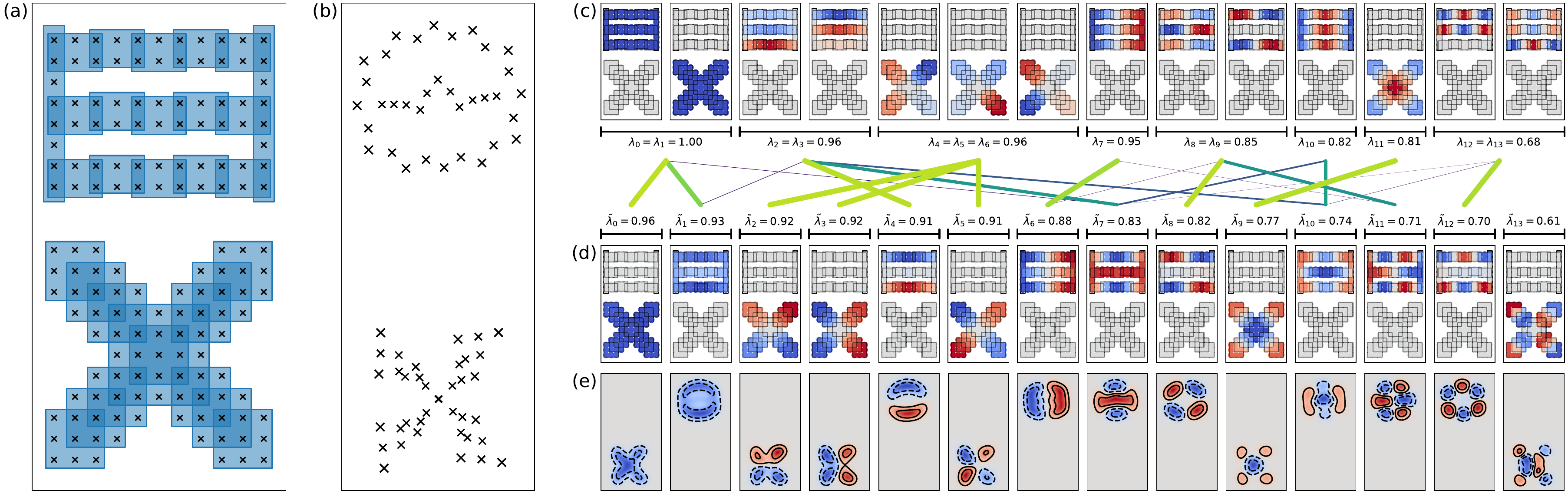}
    \caption{\textbf{(a)} Toy ``overlapping regions'' contrastive learning task, where $\cA$ is the set of cross markers, and $\cZ$ is the set of shaded blue rectangles.
    \textbf{(b)} 2D embedding space learned by minimizing a contrastive loss with a rational quadratic kernel head $\hatK_\theta$.
    \textbf{(c)} The first 14 eigenfunctions of the true positive-pair Markov chain.
    \textbf{(d)} The first 14 principal component projection functions for the learned kernel $\hatK_\theta$ extracted using Kernel PCA. Diagonal colored lines denote alignment between the learned functions and the true eigenspaces, which increases as $\hatK$ approaches $\Kctr$.
    \textbf{(e)} Evaluations of the the learned projection functions over the entire latent embedding space using Kernel PCA, showing that it can embed points not seen during training.
    \vspace{-1em}
    }
    \label{fig:box_problem_population_eigenfunction_approximation}
\end{figure}
For a more natural distribution of data, our
second dataset is derived from MNIST \citep{lecun2010mnist},
but with a carefully-chosen augmentation process so that computing $\Kctr$ is tractable. Specifically, we choose $p(Z)$ to uniformly select from a small subset $\cZ$ of MNIST digits, and define $p(A|Z=z)$ by first transforming $z$ using one of a finite set of possible rotations and translations, then sampling a subset of $k$ pixels with replacement from the transformed copy. The finite set of allowed transformations and the tractable probability mass function of the multinomial distribution together enable us to compute $\Kctr$ by summing over all possible $z \in \cZ$. (Samples from this distribution for different values of $k$ are shown in Figure \ref{fig:markov_chain}.)

\smallparagraph{Model training and eigenfunction estimation.}
We train contrastive learning models for each of these datasets using a variety of kernel parameterizations and loss functions.
For kernel parameterizations, we consider both
\textbf{linear} approximate kernels $\hatK_\theta(a_1, a_2) = h_\theta(a_1)^\T h_\theta(a_2)$, where $h_\theta$ may be normalized to have constant norm $\|h_\theta(a)\|^2 = c$ or be left unconstrained, and
\textbf{hypersphere}-based approximate kernels $\hatK_\theta(a_1, a_2) = \exp(h_\theta(a_1)^\T h_\theta(a_2) / \tau + b)$, where $\|h_\theta(a)\|^2 = 1$
and $b \in \R, \tau \in \R^+$ are learned. We also explore either replacing $b$ with a learned per-example adjustment $s_\theta(a_1) + s_\theta(a_2)$ or fixing it to zero.
For losses, we investigate the XEnt, Logistic, and Spectral losses shown in Table \ref{tab:cl_objectives_as_kernel_learning}, and explore using a downweighted Logistic loss as a regularizer for the XEnt loss to eliminate the underspecified proportionality constant in minimizer of the XEnt loss.

For each approximate kernel and also for the true positive-pair kernel $\Kctr$, we apply Kernel PCA to extract the eigenfunctions $\hat{f}_i$ and their associated eigenvalues $\hat{\lambda}_i$. We use full-population Kernel PCA for the overlapping regions task, and combine the Nystr{\"o}m method \citep{williams2000using} with a large random subset of augmented examples to approximate Kernel PCA for the MNIST task.
We additionally investigate training a Neural EF model \citep{deng2022neuralef} to directly decompose the positive pair kernel into principal components using a single model instead of separately training a contrastive learning model, as mentioned in Section \ref{sec:relatedwork}. We modify the Neural EF objective slightly to make it use positive pair samples instead of kernel evaluations and to increase numerical stability. See Appendix \ref{app:eigenfunction_estimation_techniques} for additional discussion of the eigenfunction approximation process.

We then measure the alignment of each eigenfunction $\hat{f}_i$ of our learned models with the corresponding eigenfunction $f_j$ of $\Kctr$ using the formula $\E_{p(a)}[\hat{f}_i(a)f_j(a)]^2$, where the square is taken since eigenfunctions are invariant to changes in sign. We also estimate the positive-pair discrepancy $\E_{p_+}\big[(\hat{f}_i(a_1) - \hat{f}_i(a_2))^2\big]$ for each approximate eigenfunction.

\smallparagraph{Results.} We summarize a set of qualitative observations which are relevant to our theoretical claims in the previous sections. A representative subset of the results are also visualized in Figure \ref{fig:box_and_mnist_results}. See Appendix \ref{app:additional_eigenfunction_results} for additional experimental results and a more thorough discussion of our findings.

\textit{Linear kernels, hypersphere kernels, and NeuralEF can all produce good approximations of the basis of eigenfunctions} with sufficient tuning, despite their different parameterizations.
The relationship between approximate eigenvalues and positive-pair discrepancies also closely matches the prediction from Equation \ref{eqn:eigenfunction_pair_discrepancy}. Both of these relationships emerge during training and do not hold for a randomly initialized model.
Additionally, for a fixed kernel parameterization, \textit{multiple losses can work well}. In particular, we were able to train good hypersphere-based models on the toy regions task using either the XEnt loss or the spectral loss, although the latter required additional tuning to stabilize learning.

\textit{Constraints on the kernel approximation degrade eigenfunction and eigenvalue estimates.} We find that introducing constraints on the output head tends to produce worse alignment between eigenfunctions, and leads to eigenvalues that deviate from the expected relationship in Equation \ref{eqn:eigenfunction_pair_discrepancy}. Such constraints include reducing the dimension of the output layer, rescaling the output layer for a linear kernel parameterization to have a fixed L2 norm (as proposed by \citet{haochen2021provable}), or fixing $b$ to zero for a hypersphere kernel parameterization (as is done implicitly by \citet{chen2020simple}).

\textit{Weaker augmentations make eigenfunction estimation more difficult.} Finally, for the MNIST task, we find that as we decrease the augmentation strength (by increasing the number of kept pixels $k$), the number of successfully-recovered eigenfunctions also decreases. This may be partially due to Kernel PCA having worse statistical behavior when there are smaller gaps between eigenvalues (specifically, more eigenvalues close to 1), since we observe this trend even when applying Kernel PCA to the exact closed-form $\Kctr$. However, we also observe that the relationship predicted by Equation \ref{eqn:eigenfunction_pair_discrepancy} starts to break down as well for some kernel approximations under weak augmentation strengths.

\begin{figure}
    \centering
    \includegraphics[width=1\textwidth]{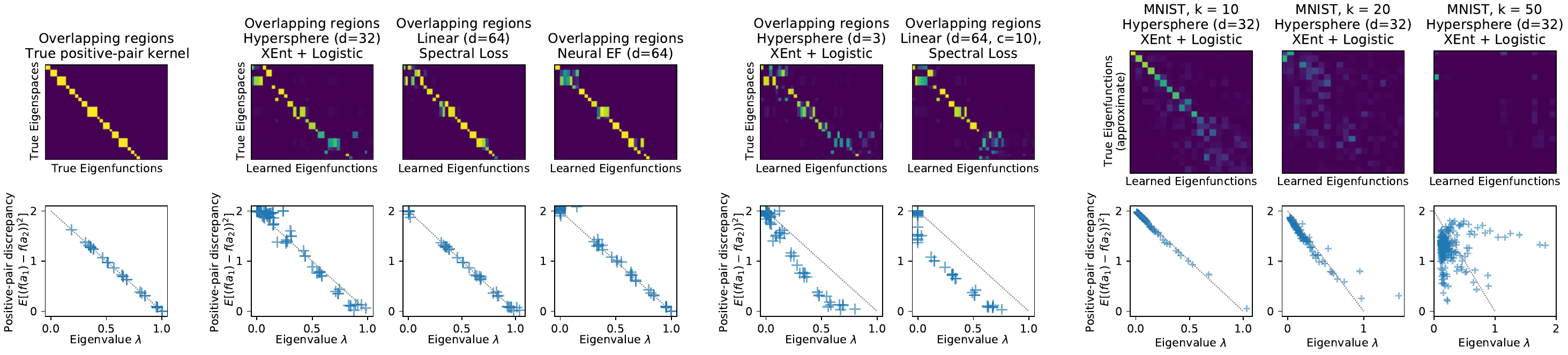}
    
    \caption{
    Eigenfunction and eigenvalue estimation accuracy for a selection of models on our two synthetic tasks. 
    Top row: Alignment between the eigenfunctions of $\Kctr$ and those of each kernel approximation, with perfect alignment shown as a block diagonal matrix. Bottom row: Relationship between learned kernel eigenvalue $\lambda$ and the corresponding positive-pair discrepancy $\E_{p_+}\big[(f(a_1) - f(a_2))^2\big]$, with the relationship predicted by Equation \ref{eqn:eigenfunction_pair_discrepancy} shown with a dashed line.
    \\[-1.5em]
    }
    \label{fig:box_and_mnist_results}
\end{figure}

\section{Discussion}

We have shown that existing contrastive objectives approximate the kernel $\Kctr$, and that Kernel PCA yields an optimal representation for linear prediction under Assumption \ref{assump:smooth_target}.
We have further demonstrated on two synthetic tasks that running contrastive learning followed by Kernel PCA can yield good approximate representations across multiple parameterizations and losses, although constrained parameterizations and weaker augmentations both reduce approximation quality.

Our analysis (in particular Theorem \ref{thm:minimax}) assumes that the distribution of views $p(A)$ is shared between the contrastive learning task and the downstream learning task. In practice, the distribution of underlying examples $p(Z)$ and the augmentation distribution $p(A | Z)$ often change when fine-tuning a self-supervised pretrained model. An interesting future research direction would be to quantify how the minimax optimality of our representation is affected under such a distribution shift.
Our analysis also focuses on the standard ``linear evaluation protocol'', which determines representation quality based on the accuracy of a linear predictor. This measurement of quality may not be directly applicable to tasks other than classification and regression (e.g. object detection and segmentation), or to other downstream learning methods (e.g. fine-tuning or $k$-nearest-neighbors). In these other settings, our theoretical framework is not directly applicable, but we might still hope that the view-invariant features arising from Kernel PCA with $\Kctr$ would be useful.

Additionally, our analysis precisely characterizes the optimal representation if all we know about our target function $g$ is that it satisfies Assumption \ref{assump:smooth_target},
but in practice we often have additional knowledge about $g$.
In particular, \citet{saunshi2022understanding} show that \emph{inductive biases} are crucial for building good representations, and argue that standard distributions of augmentations are approximately disjoint, producing many eigenvalues very close to one. Interestingly, this is exactly the regime where the correspondence between the approximate kernels and the positive-pair kernel $\Kctr$ begins to break down in our experiments. We believe this opens up a number of exciting opportunities for research, including studying the training dynamics of parameterized kernels $\hatK_\theta$ under a weak augmentation regime, analyzing the impact of additional assumptions about $g$ and inductive biases in $\hatK_\theta$ on the minimax optimality of PCA representations, and exploring new model parameterizations and objectives that trade off between inductive biases and faithful approximation of $\Kctr$.

More generally, the authors believe that the connections drawn in this work between contrastive learning, kernel methods, principal components analysis, and Markov chains provide a useful lens for theoretical study of self-supervised representation learning and also give new insights toward building useful representations in practice.

\section*{Acknowledgements}

We would like to thank David Duvenaud, Daniel Tarlow, and Lechao Xiao for providing feedback on this manuscript, and the ICLR 2023 reviewers for their additional suggestions and comments. We would also like to thank the reviewers and organizers of the ICML 2022 workshop on Pre-training: Perspectives, Pitfalls, and Paths Forward for providing a venue for an earlier version of this work.

\bibliographystyle{iclr2023_conference}
\bibliography{references}

\appendix

\newpage
\section*{Appendix}
\startcontents[sections]
\printcontents[sections]{l}{1}{\setcounter{tocdepth}{2}}
\newpage

\newpage
\section{Justification of Assumption \ref{assump:smooth_target}}\label{app:assumption_justification}
In this section we expand on the justification of Assumption \ref{assump:smooth_target} given in Section \ref{sec:intro}.

Recall that during contrastive learning we have a distribution $p(Z)$ of original examples, and a distribution $p(A|Z=z)$ of augmented views for each original example $z$. Then, at downstream supervised learning time, we additionally have a distribution $p(Y=y|Z=z)$ of labels $y \in \R$ for each original example $z$. (For simplicity we assume that the distribution of $p(Z)$ remains unchanged during the downstream learning step. We also assume $y$ is a scalar; the vector case can be derived by independently estimating each element of $Y$ with a separate target function.)

Since the distribution of augmented views is typically defined via a random perturbation of $Z$ without using $Y$, the augmented views $A$ are conditionally independently of the label $Y$ given the original example $Z$, so we have joint distribution $p(a, y, z) = p(z) p(a|z) p(y|z)$. When we draw a pair of augmented views for the same original example, we instead have a joint $p(a_1, a_2, y, z) = p(z) p(a_1|z) p(a_2|z) p(y|z)$. We also have some target function $g : \cA \to \R$ we are trying to approximate. For instance, we might want to approximate a Bayes-optimal predictor of $Y$ under some loss function, such as $g(a) = \E[Y|A=a]$ for the quadratic loss.

Common wisdom states that $p(A|Z)$ should remove irrelevant information from $Z$ without removing (much) information about $Y$ (most of the time). This means that, if augmentations are chosen appropriately, we can expect $\E\big[(g(A) - Y)^2]$ to be small under our joint probability distribution for an appropriate choice of $g$.

We can then expand the left-hand side of Assumption \ref{assump:smooth_target} as
\begin{align*}
\E\big[ ( g(A_1) - g(A_2) )^2 \big]
&= \E\big[ ( (g(A_1) - Y) - ( g(A_2) - Y) )^2 \big]
\\&= \E\big[(g(A_1) - Y)^2 - 2 (g(A_1) - Y)(g(A_2) - Y) + (g(A_2) - Y)^2 \big]
\\&= 2\E\big[(g(A) - Y)^2] - 2\E\big[(g(A_1) - Y)(g(A_2) - Y) \big]
\end{align*}
Furthermore, due to the law of total expectation combined with our conditional independence assumptions, we know that
\begin{align*}
\E\big[(g(A_1) - Y)(g(A_2) - Y) \big]
&= \E_{Z,Y}\big[\E_{A_1,A_2}\big[(g(A_1) - Y)(g(A_2) - Y) |Z,Y\big]\big]
\\&= \E_{Z,Y}\big[\E_{A_1,A_2}\big[(g(A_1) - Y)|Z,Y\big]\E_{A_1,A_2}\big[(g(A_2) - Y)|Z,Y\big] \big]
\\&= \E_{Z,Y}\big[\E_{A}\big[(g(A) - Y) |Z,Y\big]^2 \big]
\ge 0.
\end{align*}
We can then conclude that
\begin{align*}
\E\big[ ( g(A_1) - g(A_2) )^2 \big]
&= 2\E\big[(g(A) - Y)^2] - 2\E\big[(g(A_1) - Y)(g(A_2) - Y) \big]
\\&\le 2\E\big[(g(A) - Y)^2]
\end{align*}

Thus, Assumption \ref{assump:smooth_target} holds whenever $\E\big[(g(A) - Y)^2] \le \frac{1}{2}\varepsilon$. Intuitively, if it is possible to estimate $Y$ with high accuracy using a deterministic function, then that function must satisfy Assumption \ref{assump:smooth_target}.

Note that Assumption \ref{assump:smooth_target} itself may still hold even if $g$ is not a good estimate of $Y$, and is well defined even if we never specify $Y$ and work only with the joint distribution $p(A_1, A_2)$. This is particularly useful because it allows us to quantify over all possible choices of $g$ in a sensible way with minimal knowledge about the label distribution itself.
\newpage
\section{Existing objectives are minimized by the positive-pair kernel}\label{app:existing_objectives_optima}

In this section, we show that the positive-pair kernel is the optimum of the objectives shown in Table \ref{tab:cl_objectives_as_kernel_learning} (although this optimum is not unique for cross-entropy loss), and discuss some connections to other objectives considered by previous work.
Throughout this section, we use $p(Z)$ to denote the true data distribution over unperturbed examples, $p(A|Z)$ to denote the distribution of augmented views conditioned on a particular unperturbed example, and $p(A)$ to denote the marginal distribution of augmented views, e.g.
\[
p(A=a) = \sum_{z \in \cZ} p(Z=z)p(A=a|Z=z).
\]
We use $p_+(A_1, A_2)$ to denote the positive pair distribution induced by $p(Z)$ and $p(A|Z)$, defined by
\[
p_+(A_1=a_1, A_2=a_2) = \sum_{z \in \cZ} p(Z=z)p(A=a_1|Z=z)p(A=a_2|Z=z).
\]
For notational convenience, we will use shorthand $p(z)$ for $p(Z=z)$, $p(a)$ for $p(A=a)$, $p_+(a_1, a_2)$ for $p_+(A_1=a_1, A_2=a_2)$, and so on.

\subsection{NT-XEnt and the InfoNCE objective}\label{app:minima_of_xent}

The NT-XEnt objective described by \citet{chen2020simple} (and previously used by \citet{
sohn2016improved,van2018representation,
wu2018unsupervised}) has the form
\[
\mathcal{L}_{\text{NT-Xent}}(\theta) = \E_{
\begin{gathered}
\scriptstyle(a^+_1, a^+_2) \sim p_+(A_1, A_2)
\\
\scriptstyle a^-_i \sim p(A)
\end{gathered}
}\left[
    - \log \frac{\exp\left(\frac{h_\theta(a^+_1)^\T h_\theta(a^+_2) }{\tau}\right)}{
    \exp\left(\frac{h_\theta(a^+_1)^\T h_\theta(a^+_2) }{\tau}\right)
    + \sum_{a^-_i} \exp\left(\frac{h_\theta(a^+_1)^\T h_\theta(a^-_i) }{\tau}\right)
    }\right].
\]
For simplicity, we assume that all of the negative samples $a^-_i$ are drawn independently from the marginal distribution when computing the loss for $a^+_1$ and $a^+_2$. (In practice, implementations often generate negative samples by taking elements of other positive pairs, e.g. $(a^-_1, a^-_2) \sim p_+(a^-_1, a^-_2)$, $(a^-_3, a^-_4) \sim p_+(a^-_3, a^-_4)$, and so on.)

We can decompose this objective into two parts: an InfoNCE-like loss \citep{van2018representation}
\[
\mathcal{L}_{\text{InfoNCE}}(\hatK_\theta) = \E_{(a^+_1, a^+_2) \sim p_+(A_1, A_2), a^-_i \sim p(A)}\left[
    - \log \frac{\hatK_\theta(a^+_1, a^+_2)}{
    \hatK_\theta(a^+_1, a^+_2)
    + \sum_{a^-_i} \hatK_\theta(a^+_1, a^-_i)
    }\right]
\]
combined with a particular parameterized function
\[
\hatK_\theta(a_1, a_2) = \exp\left(\frac{h_\theta(a_1)^\T h_\theta(a_2)}{\tau}\right).
\]
where $h_\theta : \cA \to \R^{n+1}$ maps inputs to points on the $n$-dimensional hypersphere.

We first observe that $\hatK_\theta(a_1, a_2)$ defines a positive definite kernel, within the family of ``dot product kernels''. Indeed, when $h_\theta$ is restricted to the unit hypersphere, this parameterization is equivalent to the squared-exponential kernel (also called radial-basis-function kernel)
\[
\hatK_\theta(a_1, a_2)
= \exp\left(\frac{1 - \frac{1}{2} \| h_\theta(a_1) - h_\theta(a_2) \|^2}{\tau}\right)
= \exp(1/\tau)\exp\left(-\frac{\| h_\theta(a_1) - h_\theta(a_2) \|^2}{2 \tau}\right)
\]
using the identity $\| h_\theta(a_1) - h_\theta(a_2) \|^2 = 2 - 2 h_\theta(a_1)^\T h_\theta(a_2)$. Although this kernel is positive definite, it has ``infinite dimension'' and cannot be expressed as an inner product of finite-dimensional embedding vectors; nevertheless, we can still run algorithms such as Kernel PCA on a finite dataset. (See \citet[Chapter~7]{bach2021learning} for some additional background on positive-definite kernels, and \citet{scetbon2021spectral} for discussion of other dot product kernels on the unit hypersphere.)

We next discuss the minimum of the NT-XEnt objective, under the unconstrained setting where we allow $\hatK_\theta(a_1, a_2)$ to be an arbitrary symmetric function.
The InfoNCE loss, in its more general form, is not necessarily symmetric: it is based on a distribution of contexts $p(C)$, positive samples $p(A|C)$, and negative samples  drawn from the marginal distribution $P(A)$, and is given by
\[
\mathcal{L}_{\text{InfoNCE}}(f_\theta) = \E_{c \sim p(c), a^+ \sim p(A | C=c), a^-_i \sim p(A)}\left[
    - \log \frac{f_\theta(c, a^+)}{
    f(c, a^+)
    + \sum_{a^-_i} f_\theta(c, a^-_i)
    }\right]
\]
\citet{van2018representation} show that every minimizer of this objective is of the form
\[
f_*(c, a) = \frac{p(a|c)}{p(a)} \cdot b(c) = \frac{p(a, c)}{p(a)p(c)} \cdot b(c)
\]
for some function $b(c)$ that does not depend on $a$. In other words, holding $c$ fixed, $f_*(c, a) \propto \frac{p(a, c)}{p(a)p(c)}$.
Intuitively, this is because the exact probability of $(c, a^+)$ being the positive pair given $c$ and the set $\{a^+, a^-_1, \dots, a^-_K\}$ is also proportional to this density ratio, and the InfoNCE objective is a cross-entropy objective for identifying the positive pair. (See also \citet{poole2019variational} for a different proof.)

In the case of the NT-XEnt contrastive objective, we choose the context $C$ to be one of the augmentations $A^+_1$, and the positive sample to be the other augmentation $A^+_2$ drawn according to $p_+(A^+_2 | A^+_1)$. We furthermore restrict our attention to symmetric functions $\hatK_\theta$, e.g. functions for which $\hatK_\theta(a_1, a_2) = \hatK_\theta(a_2, a_1)$. In this case, the minimizer is
\[
\hatK_*(a_1, a_2) = \frac{p_+(a_1, a_2)}{p(a_1)p(a_2)} \cdot b(a_1) = \frac{p_+(a_1, a_2)}{p(a_1)p(a_2)} \cdot b(a_2).
\]
so we must have $b(a_1) = b(a_2)$ for any pair $(a_1, a_2)$ for which $p_+(a_1, a_2) > 0$.

If we assume that the positive pair Markov chain is irreducible, e.g. that there is a single communicating class and it is possible to reach any augmentation in $\cA$ from any other augmentation over a long enough trajectory, then the function $b(a)$ must be constant everywhere, and thus
\[
\hatK_*(a_1, a_2) = f_*(a_1, a_2) = \frac{p_+(a_1, a_2)}{p(a_1)p(a_2)} \cdot B
\]
for some $B \in \R^+$. In this case, $\hatK_*$ is equivalent to $\Kctr$ up to a scaling constant.

If the Markov chain has multiple communicating classes (e.g. if the augmentations can be partitioned so that augmentations always come from the same partition), the function $b(a)$ may assign a different value to different communicating classes. Nevertheless, any such minimizer is still a kernel, since we can write it as
\begin{align*}
\hatK_*(a_1, a_2) &= \inner{
\sqrt{b(a_1)}
\begin{bmatrix}
\frac{p(a_1|z_1) \sqrt{p(z_1)}}{p(a_1)}
\\
\frac{p(a_1|z_2) \sqrt{p(z_2)}}{p(a_1)}
\\
\vdots
\\
\frac{p(a_1|z_{|\cZ|}) \sqrt{p(z_{|\cZ|})}}{p(a_1)}
\end{bmatrix},
~~
\sqrt{b(a_2)}
\begin{bmatrix}
\frac{p(a_2|z_1) \sqrt{p(z_1)}}{p(a_2)}
\\
\frac{p(a_2|z_2) \sqrt{p(z_2)}}{p(a_2)}
\\
\vdots
\\
\frac{p(a_2|z_{|\cZ|}) \sqrt{p(z_{|\cZ|})}}{p(a_2)}
\end{bmatrix}
}
\\&= \inner{
\sqrt{b(a_1)}\cdot\phictr(a_1),
~
\sqrt{b(a_2)}\cdot\phictr(a_2)
}
\end{align*}
Indeed, such a minimizer is equivalent to $\Kctr$ except that it scales the inner product by the value of $b(a)$ for each communicating class. It is still possible to extract the set of Markov chain eigenfunctions from the set of principal components of this kernel, although one must correct for the scaling factor when computing the eigenvalues; see Appendix \ref{app:kernel_proportionality_constants} for details. Alternatively, one can ensure a unique minimum by combining the NT-Xent/InfoNCE loss with either the spectral or logistic losses (discussed below).

\subsection{Logistic losses and NT-Logistic}
Logistic losses have also been proposed for contrastive learning, including the NT-Logistic objective as described in \citet{chen2020simple} and other versions described by \citet{mikolov2013efficient} and \citet{tosh2021contrastive}. Such losses take the form
\begin{align*}
\mathcal{L}_{\text{Logistic}}(f_\theta) &= \E_{(a^+_1, a^+_2) \sim p_+(A_1, A_2)}\left[
    - \log \sigma( f_\theta(a^+_1, a^+_2))
    \right]
    \\&\qquad+ \E_{a^-_1 \sim p(A), a^-_2 \sim p(A)}\left[
    - \log \sigma( - f_\theta(a^-_1, a^-_2))
    \right]
\end{align*}
where negative samples are drawn independently from the marginal distribution $p(A)$. \citet{tosh2021contrastive} motivates this loss based on a binary classification: choose a label $Y$ to be $0$ or $1$ with probability $1/2$ each, sample a positive pair $(a_1, a_2) \sim p_+(A_1, A_2)$ if $Y=1$ and a negative pair $a_1 \sim p(A), a_2 \sim p(A)$ if $Y=0$, then use a learned model to predict $Y$ given the pair. The minimizer of this loss is then the conditional log-odds-ratio
\begin{align*}
f_*(a_1, a_2)
&= \log \frac{p(Y=1 | a_1, a_2)}{p(Y=0 | a_1, a_2)} 
= \log \frac{p(a_1, a_2 | Y=1) p(Y=1)}{p(a_1, a_2 | Y=0) p(Y=0)} 
\\&= \log \frac{p_+(a_1, a_2) \cdot \frac{1}{2}}{p(a_1)p(a_2) \cdot \frac{1}{2}}
= \log \frac{p_+(a_1, a_2)}{p(a_1)p(a_2)}.
\end{align*}
For the particular case of the NT-Logistic objective, we parameterize $f_\theta$ as
\[
f_\theta(a_1, a_2) = \log \hat{K}_\theta(a_1, a_2) = \frac{h_\theta(a_1)^\T h_\theta(a_2)}{\tau}
\]
where we again define
\[
\hatK_\theta(a_1, a_2) = \exp\left(\frac{h_\theta(a_1)^\T h_\theta(a_2)}{\tau}\right).
\]
The optimum (if we ignore the constraints of this particular form of $\hat{K}$ and minimize over all functions of two variables) is then
\[
\hat{K}_*(a_1, a_2) = \frac{p_+(a_1, a_2)}{p(a_1)p(a_2)}.
\]
Note that in this case there is no proportionality constant. 

\subsection{The Spectral Contrastive Loss}\label{app:spectral_loss_optimum}
\citet{haochen2021provable} propose the Spectral Contrastive Loss as an alternative to other contrastive losses with provable performance guarantees. The loss is defined as
\[
\mathcal{L}_{\text{Spectral}}(\hatK_\theta)
 = -2 \cdot \E_{(a^+_1, a^+_2) \sim p_+(A_1, A_2)}\left[
    \hatK_\theta(a^+_1, a^+_2)
    \right]
    + \E_{a^-_1 \sim p(A), a^-_2 \sim p(A)}\left[\hatK_\theta(a^-_1, a^-_2)^2
    \right]
\]
where they choose
\[
\hatK_\theta(a_1, a_2) = h_\theta(a_1)^\T h_\theta(a_2).
\]
for a learned embedding function $h_\theta : \cA \to \R^d$. We note that this directly satisfies the definition of a kernel, in that it is an inner product in a transformed space. One interesting property of this kernel approximation is that it can be negative, whereas the exponential-based kernel approximations in the previous sections are always nonnegative.

\citeauthor{haochen2021provable} show that this loss can be rewritten as
\begin{align*}
\mathcal{L}_{\text{Spectral}}(\hatK_\theta)
&= \sum_{a_1, a_2} \left(
-2 p_+(a_1, a_2) \hatK_\theta(a_1, a_2) 
+ p(a_1)p(a_2) \hatK_\theta(a_1, a_2)^2
\right) \\
&= \sum_{a_1, a_2} \left(
\frac{p_+(a_1, a_2)^2}{p(a_1)p(a_2)}
-2 p_+(a_1, a_2) \hatK_\theta(a_1, a_2)
+ p(a_1)p(a_2) \hatK_\theta(a_1, a_2)^2
\right)
  \\&\qquad\qquad - \sum_{a_1, a_2}\frac{p_+(a_1, a_2)^2}{p(a_1)p(a_2)}
\\
&= \sum_{a_1, a_2} \left(
\frac{p_+(a_1, a_2)}{\sqrt{p(a_1)p(a_2)}}
- \sqrt{p(a_1)p(a_2)} \hatK_\theta(a_1, a_2)
\right)^2 - C,
\end{align*}
where $C = \sum_{a_1, a_2}\frac{p_+(a_1, a_2)^2}{p(a_1)p(a_2)}$ is a constant independent of the model.

If we again ignore the constraints on $\hat{K}_\theta$, the minimum of the spectral loss must occur when
\[
\frac{p_+(a_1, a_2)}{\sqrt{p(a_1)p(a_2)}}
=
\sqrt{p(a_1)p(a_2)}\hatK_\theta(a_1, a_2),
\]
for all $a_1$ and $a_2$, or in other words, when
\[
\hatK_\theta(a_1, a_2) = \frac{p_+(a_1, a_2)}{p(a_1)p(a_2)} = \Kctr(a_1, a_2).
\]

\citeauthor{haochen2021provable} continue by expanding their definition of $\hatK_\theta$ for a fixed representation $d$, and showing that it relates to the spectral decomposition of a particular augmentation graph. It turns out that this decomposition is equivalent to our decomposition in terms of eigenfunctions except for a scaling factor of $p(a)^{1/2}$; we discuss this connection more in Appendix \ref{app:more_about_spectral_loss}.

\subsection{Other Related Objectives And Connections to the Positive-Pair Kernel}\label{app:subsec:other-related-objectives}

We note that the log-probability ratio $\log \frac{p(u,v)}{p(u)p(v)}$ has an information-theoretic interpretation as the pointwise mutual information between two random variates $u$ and $v$. We can thus view the positive-pair kernel $\Kctr$ as being an exponentiated version of the pointwise mutual information between two views $A_1$ and $A_2$. This ratio has been shown to be an optimal critic for mutual information estimation \citep{poole2019variational, nowozin2016f,hjelm2018learning}.

\citet{moustakides2019training} describe several sample-based estimators for probability ratios between arbitrary densities or mass functions. These estimators can be seen as generalizations of the the contrastive losses in Table \ref{tab:cl_objectives_as_kernel_learning}, with the goal of estimating the ratio between the positive and negative pair distributions.

The VICReg semi-supervised learning technique can be reinterpreted in terms of the positive-pair kernel as a particular form of kernel decomposition method. See Appendix \ref{app:direct_eigenfunction_vicreg} for discussion of this connection.

Although the parameterized kernels in Table \ref{tab:cl_objectives_as_kernel_learning} have a fairly simple form, some prior work has considered more sophisticated parameterizations for learning kernels with neural networks \citep{wilson2016deep,sun2018differentiable}. There have also been recent works related to reinterpreting existing neural network models as kernels \citep{jacot2018neural,shankar2020neural,amid2022learning}. It would be interesting to compare the properties of these other kernel parameterizations with the implicit kernels involved in contrastive learning methods.

Finally, we note that early approaches to contrastive learning such as DrLIM \citep{hadsell2006dimensionality} were motivated in part by removing limitations of previous spectral embedding techniques, which required explicitly selecting an input-space distance metric or kernel function (e.g. \citet{bengio2003out}). Our analysis reveals that that, for modern contrastive learning methods, choosing the distribution of augmentations can still be seen as \emph{implicitly} defining a kernel function of this form. Conveniently, however, we do not need to be able to evaluate this kernel to train contrastive learning models; we only need to sample augmentations.

\newpage
\section{Relationship between positive-pair kernel and Markov chain eigenfunctions}\label{app:pca_is_eigenfunctions}

In this section, we describe the relationship between the positive-pair kernel principal components and the Markov chain eigenfunctions in more detail. 

\subsection{Notation}
We start by introducing some notation that will be useful.

Throughout, we will identify functions $f : \cA \to \R$ as vectors $\vecf : \R^{\cA}$, which will allow us to use matrix notation for many of the relevant quantities. We will also use $\onehot_i$ to represent the vector that has a one at the $i$th position and zeros in all other positions.

We will let
\begin{align*}
D_Z &= \diag(p(z_1), p(z_2), \dots, p(z_{|\cZ|})),\\
D_A &= \diag(p(a_1), p(a_2), \dots, p(a_{|\cA|})),
\end{align*}
be diagonal matrices containing the marginal probabilities of each element in $\cZ$ and $\cA$, respectively, under the true data distribution. We will assume that the distribution has full support, and thus both $D_Z$ and $D_A$ are invertible.
We also define the matrices
\begin{align*}
    [P_{Z,A}]_{z,a} &= p(z, a)
    &
    [P_{Z \to A}]_{z,a} &= p(a | z)
    &
    [P_{Z \from A}]_{z,a} &= p(z | a)
    \\
    [P_{A,Z}]_{a,z} &= p(z, a)
    &
    [P_{A \from Z}]_{a,z} &= p(a | z)
    &
    [P_{A \to Z}]_{a,z} &= p(z | a)
\end{align*}
Equivalently
\begin{align*}
    P_{Z \to A} &= D_Z^{-1}\, P_{Z,A},
    &
    P_{Z \from A} &= P_{Z,A}\,D_A^{-1},
    \\
    P_{A \from Z} &= (P_{Z \to A})^\T,
    &
    P_{A \to Z} &= (P_{Z \from A})^\T.
\end{align*}

From these, we can construct the positive pair probability matrix
\begin{align*}
P_{A,A} &= P_{A \from Z}\, D_Z\, P_{Z \to A}
\\
[P_{A,A}]_{i,j} &= p(A_1 = i, A_2 = j) = \sum_z p(A=i|z) p(z) p(A=j|z).
\end{align*}

\subsection{Proof of Correspondence Between Kernel PCA And Markov Chain Eigenfunctions}\label{app:proof-of-correspondence}

\paragraph{The positive-pair kernel.} Writing the positive-pair kernel $\Kctr$ in matrix form, such that $[\Kctr]_{i,j} = \Kctr(i, j)$, our definition $\Kctr(a_1, a_2)
= \frac{p_+(a_1, a_2)}{p(a_1)p(a_2)}$ becomes the matrix equation
\[
\Kctr = D_A^{-1} P_{A,A} D_A^{-1}.
\]
One way to expand $\Kctr$ is as a product
\[
\Kctr
= D_A^{-1} P_{A \from Z} D_Z P_{Z \to A} D_A^{-1}
= \left(D_Z^{1/2} P_{Z \to A} D_A^{-1}\right)^\T \left(D_Z^{1/2} P_{Z \to A} D_A^{-1}\right)
= \Phictr^\T \Phictr
\]
where $\Phictr \in \R^{\cZ \times \cA}$ is a matrix whose columns are given by $\phictr$:
\begin{align*}
\Phictr &= D_Z^{1/2} P_{Z \to A} D_A^{-1}
= \begin{bmatrix}
\phictr(a_1) & \phictr(a_2) & \dots & \phictr(a_{|\cA|})
\end{bmatrix}
\\&= \begin{bmatrix}
\frac{p(a_1|z_1) \sqrt{p(z_1)}}{p(a_1)}
  & \frac{p(a_2|z_1) \sqrt{p(z_1)}}{p(a_2)}
  & \cdots
  & \frac{p(a_{|\cA|}|z_1) \sqrt{p(z_1)}}{p(a_{|\cA|})}
\\
\frac{p(a_1|z_2) \sqrt{p(z_2)}}{p(a_1)}
  & \frac{p(a_2|z_2) \sqrt{p(z_2)}}{p(a_2)}
  & \cdots
  & \frac{p(a_{|\cA|}|z_2) \sqrt{p(z_2)}}{p(a_{|\cA|})}
\\ \vdots & \vdots & \ddots & \vdots
\\
\frac{p(a_1|z_{|\cZ|}) \sqrt{p(z_{|\cZ|})}}{p(a_1)}
  & \frac{p(a_2|z_{|\cZ|}) \sqrt{p(z_{|\cZ|})}}{p(a_2)}
  & \cdots
  & \frac{p(a_{|\cA|}|z_{|\cZ|}) \sqrt{p(z_{|\cZ|})}}{p(a_{|\cA|})}
\end{bmatrix}.
\end{align*}
Equivalently, we have $\phictr(a) = \Phictr \onehot_a$.

Since $\Kctr(a_1, a_2) = \phictr(a_1)^\T \phictr(a_2)$, $\phictr$ is called a \emph{feature map} for $\Kctr$. Note that there are multiple possible feature maps for $\Kctr$: given any orthonormal matrix $Q$, the function $\phi_Q(a) = Q \phictr(a)$ is also a feature map for the kernel, since
\[
\phi_Q(a_1)^\T \phi_Q(a_1)
= \phictr(a)^\T Q^\T Q \phictr(a)
= \phictr(a)^\T \phictr(a)
= \Kctr(a_1, a_2).
\]

Performing kernel PCA under $\Kctr$ is equivalent to performing ordinary PCA over any of its feature maps (since the principal component projection functions are independent of the particular feature map chosen). We thus focus on analyzing the principal component projection functions for the feature map $\phictr$.

The population level principal components are the eigenvectors of the (uncentered) covariance matrix
\begin{align*}
\Sigma
= \E_{p(a)}[ \phictr(a)\phictr(a)^\T]
= \E_{p(a)}[ \Phictr \onehot_a \onehot_a^\T \Phictr^\T ]
= \Phictr \E_{p(a)}[ \onehot_a \onehot_a^\T ] \Phictr^\T
= \Phictr D_A \Phictr^\T.
\end{align*}
Note that we are working with the \emph{uncentered} principal components, as is commmon for kernel PCA: we do not subtract the mean before computing the covariance.
Since $\Sigma$ is positive semidefinite, it can be diagonalized as
\[
\Sigma = U \diag(\vecsigma^2) U^\T = \sum_i \lambda_i \vecu_i \vecu_i^\T
\]
where $U = [\vecu_1, \vecu_2, \dots, \vecu_k]$ is orthonormal and $\diag(\vecsigma^2) = \diag(\sigma^2_1, \sigma^2_2, \dots, \sigma^2_k)$ is a diagonal matrix of eigenvalues (here $k = |\cZ|$ is the dimension of the feature map). Each of the vectors $\vecu_i$ is one of the population principal components of the transformed distribution $\phictr(A)$, giving the directions of maximum variance, and the $\sigma^2_i$ measure the variance in that direction.

Given a new augmentation $a \in \cA$, we can then compute the projection of $\phictr(a)$ into each of these principal component directions as $h_i(a) = \vecu_i^\T \phictr(a)$.

\paragraph{The Markov Chain.} We now redirect our attention to the positive pair Markov chain. The Markov chain transition matrix is defined by $[P_{A \from A}]_{a_1,a_2} = p_+(a_1 | a_2)$, or in matrix form
\[
P_{A \from A} = P_{A,A} D_A^{-1}.
\]
We are interested in the left eigenvectors $\vecf_i^\T P_{A \from A} = \lambda_i \vecf_i^\T$ of this matrix $P_{A \from A}$, or equivalently the right eigenvectors of its transpose $P_{A \from A}^\T = P_{A \to A}$, given by $ P_{A \to A}\vecf_i = \lambda_i \vecf_i$. Observe that then
\[
 D_A^{-1} P_{A,A}\vecf_i = \lambda_i \vecf_i
\]
so equivalently
\[
 D_A^{-1/2} \left(D_A^{-1/2} P_{A,A} D_A^{-1/2}\right) D_A^{1/2}\vecf_i =\lambda_i  D_A^{-1/2} \left(D_A^{1/2} \vecf_i \right).
\]
It follows that $D_A^{1/2} \vecf_i$ is an eigenvector of the symmetric matrix
\[
M = D_A^{-1/2} P_{A,A} D_A^{-1/2}
\]
with the same eigenvalue $\lambda_i$. (We note that the matrix $M$ is exactly the symmetrized adjacency matrix described by \citet{haochen2021provable}.)

We can now diagonalize $M$ as $M = V \Lambda V^\T$ where $V$ is orthogonal, and then write
\[
V = \begin{bmatrix}
D_A^{1/2} \vecf_1
&
D_A^{1/2} \vecf_2
& \dots
& D_A^{1/2} \vecf_k
\end{bmatrix}
= D_A^{1/2} \begin{bmatrix}
\vecf_1
&
\vecf_2
& \dots &
\vecf_k
\end{bmatrix} = D_A^{1/2} F
\]
where
\[
F = \begin{bmatrix}
\vecf_1
&
\vecf_2
& \dots &
\vecf_k
\end{bmatrix}.
\]

Consider an arbitrary function $g : \cA \to \R$, and let $\vecg \in \R^\cA$ be its vector form, so that $g(a) = \vecg_a = \vecg^\T \onehot_a$. Also define $c_i = \E[g(a) f_i(a)]$ and $\vecc = \begin{bmatrix}c_1&c_2&\dots&c_k\end{bmatrix}^\T$. Then
\begin{align*}
\vecc &= \E\left[ g(a) F^\T \onehot_a \right] = \E\left[ F^\T \onehot_a \onehot_a^\T \vecg \right]
= F^\T D_A \vecg
= V^\T D_A^{1/2} \vecg
\end{align*}
so we must have
\begin{align*}
\vecg &= D_A^{-1/2} \left( V^\T\right)^{-1} \vecc
= D_A^{-1/2} V \vecc = F \vecc
\end{align*}
and thus $g(a) = \sum_i c_i f_i(a)$. Additionally, we see that
\begin{align*}
\E\left[g(a)^2\right] &= \E\left[\vecg^\T \onehot_a \onehot_a^\T \vecg\right]
= \vecg^\T D_A \vecg
= \vecc^\T F^\T D_A F \vecc
\\&= \vecc^\T (D_A^{1/2} F)^\T (D_A^{1/2} F) \vecc
= \vecc^\T V^\T V \vecc
= \vecc^\T \vecc
= \sum_i c_i^2.
\end{align*}
Note that this also implies that the functions $f_i$ are orthonormal under the base measure $p(a)$, e.g. $\E[f_i(a)^2] = 1$ and $\E[f_i(a)f_j(a)] = 0$ for $i \ne j$.

We can now prove our main results from Section \ref{sec:eigenfunctions}.

\restateEigenfunctionProperties*
\begin{proof}
Observe that
\begin{align*}
\E_{p_+}\Big[\big(g(a_1) - g(a_2)\big)^2\Big]
&= \E_{p_+}\Big[g(a_1)^2 - 2 g(a_1)g(a_2) + g(a_2)^2\Big]
\\&= 2 \E\left[g(a)^2\right] - 2\E_{p_+}\left[g(a_1)g(a_2)\right]
\\&= 2 \sum_i c_i^2 - 2\E_{p_+}\left[g(a_1)g(a_2)\right]
\end{align*}
Expanding the second term, we have
\begin{align*}
\E_{p_+}\left[g(a_1)g(a_2)\right]
&= \E_{p_+}\left[\vecg^\T \onehot_{a_1} \onehot_{a_2}^\T \vecg\right]
= \vecg^\T \E_{p_+}\left[\onehot_{a_1} \onehot_{a_2}^\T\right] \vecg
\\&= \vecg^\T P_{A,A} \vecg
\\&= \vecg^\T D_A^{1/2} M D_A^{1/2} \vecg
\\&= \vecg^\T D_A^{1/2} V \Lambda V^\T D_A^{1/2} \vecg
\\&= \vecc^\T F^\T D_A^{1/2} V \Lambda V^\T D_A^{1/2} F \vecc
\\&= \vecc^\T (D_A^{1/2} F)^\T V \Lambda V^\T (D_A^{1/2} F) \vecc
\\&= \vecc^\T V^\T V \Lambda V^\T V \vecc
\\&= \vecc^\T \Lambda \vecc
= \sum_i \lambda_i c_i^2.
\end{align*}
We conclude that $\E_{p_+}\Big[\big(g(a_1) - g(a_2)\big)^2\Big] = 2 \sum_i (1 - \lambda_i) c_i^2$.
\end{proof}

\restateThmPcaIsEigenfunction*
\begin{proof}
Consider the matrix
$
B = \Phictr D_A^{1/2},
$
where $\Phictr = D_Z^{1/2} P_{Z \to A} D_A^{-1}$ described above. Take the singular value decomposition $B = U \Lambda^{1/2} V^\T$, where $U$ and $V$ are orthonormal and $\Lambda^{1/2}$ is diagonal. Now observe that
\[
B B^\T = U \Lambda U^\T = \Phictr D_A \Phictr^\T = \Sigma,
\]
and
\begin{align*}
B^\T B = V \Lambda V^\T
&= D_A^{1/2} \Phictr^\T \Phictr D_A^{1/2}
= D_A^{1/2} \Kctr D_A^{1/2}
\\&= D_A^{1/2} \left(D_A^{-1} P_{A,A} D_A^{-1}\right) D_A^{1/2}
\\&= D_A^{-1/2} P_{A,A} D_A^{-1/2} = M.
\end{align*}
Thus, $\Sigma$ and $M$  must have the same eigenvalues. Diagonalize $\Sigma$ and $M$ in terms of $U$ and $V$ and define $h_i(a)$ and $f_i(a)$ according to that diagonalization.
We then have
\begin{align*}
h_i(a) &= \vecu_i^\T \phictr(a)
\\&= \onehot_i^\T U^\T \Phictr \onehot_a
\\&= \onehot_i^\T U^\T \left(B D_A^{-1/2}\right) \onehot_a
\\&= \onehot_i^\T U^\T \left(U \Lambda^{1/2} V^\T D_A^{-1/2}\right) \onehot_a
\\&= \onehot_i^\T \Lambda^{1/2} V^\T D_A^{-1/2} \onehot_a
\\&= \onehot_i^\T \Lambda^{1/2} \left(D_A^{-1/2} V\right)^\T \onehot_a
\\&= \onehot_i^\T \Lambda^{1/2} F^\T \onehot_a
\\&= \onehot_i^\T \Lambda^{1/2} \begin{bmatrix}
\vecf_1
&
\vecf_2
& \dots &
\vecf_k
\end{bmatrix}^\T \onehot_a
\\&= \lambda_i^{1/2} \vecf_i^\T \onehot_a
= \lambda_i^{1/2} f_i(a).
\qedhere
\end{align*}
\end{proof}

One interesting consequence of this relationship is that the positive-pair kernel is fully determined by the eigenfunctions and their eigenvalues, and we can write the kernel function directly as a weighted dot product of this representation:
\begin{prop}
For any $a_1, a_2 \in \cA$, we have
\begin{align*}
    \Kctr(a_1, a_2) = \sum_{i} \lambda_i f_i(a_1) f_i(a_2),
\end{align*}
where the $f_i$ and $\lambda_i$ are as defined in Section \ref{sec:eigenfunctions}.
\end{prop}
\begin{proof}
Using the matrix notation and definitions described above, algebraic manipulation shows that
\begin{align*}
\Kctr(a_1, a_2)
&= \onehot_{a_1} \Kctr \onehot_{a_2}
\\&= \onehot_{a_1} \left(D_A^{-1} P_{A, A} D_A^{-1}\right) \onehot_{a_2}
\\&= \onehot_{a_1} D_A^{-1/2} M D_A^{-1/2} \onehot_{a_2}
\\&= \onehot_{a_1} D_A^{-1/2} V \Lambda V^T D_A^{-1/2} \onehot_{a_2}
\\&= \onehot_{a_1} D_A^{-1/2} (D_A^{1/2} F) \Lambda (D_A^{1/2} F)^T D_A^{-1/2} \onehot_{a_2}
\\&= \onehot_{a_1} F \Lambda F^T \onehot_{a_2}
\\&= \sum_{i} \lambda_i f_i(a_1) f_i(a_2).
\qedhere
\end{align*}
\end{proof}

\subsection{Relationship to the eigenvectors of the symmetrized adjacency matrix}\label{app:more_about_spectral_loss}

Interestingly, the matrix $M$ described above is exactly the symmetrized adjacency matrix discussed by \citet{haochen2021provable}. \citeauthor{haochen2021provable} motivate their loss as estimating the eigenvectors of $M$ up to a scaling term by $p(a)^{1/2}$, due to prior work showing that eigenvalues give information about clustering structure in graphs.

The connection between the symmetrized adjacency matrix $M$ and the positive-pair Markov chainis well known; indeed, \citeauthor{haochen2021provable} briefly discuss the positive-pair Markov chain in their Section 2, and \citet[Chapter~12]{levin2017markov} introduce the matrix $M$ when discussing the spectral decomposition of a general symmetric Markov chain.

One way of thinking about this reweighting is as a \emph{change of measure}. The eigenvectors of $M$ are orthonormal with respect to the counting measure over $\cA$, e.g. if you sum squared values over all of $\cA$, you obtain 1, and the dot product of different eigenvectors is zero. On the other hand, the eigenvectors $\vecf_i$ (or, equivalently, the eigenfunctions $f_i$) of the Markov chain are orthonormal with respect to the measure $p(A)$, e.g. if you take the expectation of squared values over random augmentations, you obtain 1, and the uncentered covariance of different eigenfunctions is zero.

We believe that using $p(A)$ as a measure is a natural choice, since it allows us to reason about expected values in a straightforward way. From this perspective, the $p(a)^{1/2}$ scaling terms are a desirable feature of the learned representation that allow us to directly reason about optimality with respect to Assumption \ref{assump:smooth_target}.

We note that our Assumption \ref{assump:smooth_target} could alternatively be expressed in terms of the probability-weighted Laplacian matrix of the augmentation graph, given by $L = D_A - P_{A,A}$. Indeed, we have
\[
\E_{p_+}\Big[\big(g(a_1) - g(a_2)\big)^2\Big]
= 2 \vecg^\T L \vecg.
\]

\subsection{Recovering proportionality constants}\label{app:kernel_proportionality_constants}

As described in Appendix \ref{app:minima_of_xent}, minimizing the NT-XEnt / InfoNCE loss may not exactly produce the positive-pair kernel $\Kctr$, but may instead learn a scaled version
\begin{align*}
\hatK_*(a_1, a_2) &=
\inner{
\sqrt{z(a_1)}\cdot\phictr(a_1),
~
\sqrt{z(a_2)}\cdot\phictr(a_2)
}
= \sqrt{z(a_1)}\sqrt{z(a_2)} \Kctr(a_1, a_2),
\end{align*}
where $z : \cA \to \R^+$ is some function which is constant on each communicating class on the Markov chain. (Since $\Kctr(a_1, a_2) = 0$ whenever $a_1$ and $a_2$ are in separate communicating classes, we could equivalently say $\hatK_*(a_1, a_2) = z(a_1) \Kctr(a_1, a_2) = z(a_2) \Kctr(a_1, a_2)$.)

When the Markov chain has one communicating class, $\hatK_*$ is simply a scaled version of $\Kctr$. In this case, all of the principal component projection functions for $\hatK_*$ are still the eigenfunctions of the Markov chain, but the eigenvalues may be scaled by that constant. The true eigenvalues of the eigenfunctions can then be estimated using equation Equation \ref{eqn:eigenfunction_pair_discrepancy}, which states that $\E[(f_i(a_1) - f_i(a_2))^2] = 2 (1 - \lambda_i)$.

When the Markov chain has multiple communicating classes, we can partition the eigenfunctions so that each eigenfunction is nonzero on a single communicating class. Since the scaling function $z$ acts as a scaling factor for each communicating class, the principal component functions will then be scaled copies of these partitioned eigenfunctions. We can then similarly estimate the true eigenvalues for each of these eigenfunctions using Equation \ref{eqn:eigenfunction_pair_discrepancy}.

\newpage
\section{Generalization properties of the eigenfunction representation}\label{app:optimality_generalization}
\subsection{Min-max optimality of eigenfunctions}

We now prove the min-max optimality of the eigenfunctions with respect to their L2 norm. (We note that these results are closely related to the Courant–Fischer–Weyl min-max principle \citep[Chapter III]{bhatia2013matrix}, which characterizes the eigenvalues and eigenvectors of a Hermitian matrix in terms of a similar adversarial game.)

\restatePropMinimax*
\begin{proof}
We will start by deriving Equation \ref{eqn:minimax-approximation}, and derive Equation \ref{eqn:minimax-regularization} afterward.
We can think of Equation \ref{eqn:minimax-approximation} as equivalent to the following adversarial game:
\begin{enumerate}
  \item Player chooses a dimension-$d$ subspace $\cF \subset \cA \to \R$ of functions.
  \item Adversary chooses a function $g \in \cA \to \R$ with a fixed level of invariance $\E[(g(a_1) - g(a_2))^2] = 2 g^\T (D_A - P_{A,A}) g = \epsilon$. Without loss of generality, we let $\epsilon = 2$ so that $g^\T (D_A - P_{A,A}) g = 1$; other values of $\epsilon$ will just lead to scaling the function $g$.
  \item Player chooses the best $\hat{g} \in \cF$ to minimize $\E[(\hat{g}(a) - g(a))^2]$
\end{enumerate}

We can analyze this game by working backward from the innermost step, step 3. Given the function class $\cF$ and adversarially chosen target function $g$, choosing $\hat{g}$ to minimize the expected squared error is equivalent to finding the orthogonal projection of $g$ into $\cF$ with respect to the measure $p(A)$, e.g. with respect to the weighted L2 norm $\cL(2; p(A))$. More precisely, we want
\begin{align*}
  \hat{g} &= \argmin_{\hat{g} \in \cF} \E[(\hat{g}(a) - g(a))^2] 
  = \argmin_{\hat{g} \in \cF} (\hat{\vecg} - \vecg)^\T D_A (\hat{\vecg} - \vecg)
  \\&= \argmin_{\hat{g} \in \cF} (D_A^{1/2} \hat{\vecg} - D_A^{1/2}\vecg)^\T (D_A^{1/2}\hat{\vecg} - D_A^{1/2}\vecg)
\end{align*}
But this is just finding the $\hat{\vecg} \in \cF$ which minimizes $\|D_A^{1/2}\hat{\vecg} - D_A^{1/2}\vecg\|_2$. This is given by the orthogonal projection of the vector $D_A^{1/2}\vecg$ into $D_A^{1/2} \cF$, under the ordinary L2 norm.

We can now define $R$ as the orthogonal projection operator on $D_A^{1/2} \cF$, such that $R \vech \in D_A^{1/2} \cF$ (e.g. $D_A^{-1/2} R \vech \in \cF$), and for $D_A^{1/2} \vecf \in D_A^{1/2} \cF$ (e.g. $\vecf \in \cF$), we have $D_A^{1/2} \vecf = R D_A^{1/2} \vecf$ (e.g. $\vecf = D_A^{-1/2} R D_A^{1/2} \vecf$). Observe that $R$ is real and symmetric, and has eigenvalue 1 with multiplicity $d$ and all other eigenvalues are 0.
Since $R$ characterizes the subset, we will find it convenient to redefine our objective for the initial player as choosing $R$, and then letting $\cF = \{D_A^{-1/2} R D_A^{1/2} g : g \in \cA \to \R \}$.

We then have
\begin{align*}
  \hat{\vecg} &= \argmin_{\hat{\vecg} \in \cF} \E[(\hat{g}(v) - g(v))^2] = D_A^{-1/2} R D_A^{1/2} \vecg.
\end{align*}
and the cost is
\begin{align*}
  \E[(\hat{g}(v) - g(v))^2] &= (D_A^{1/2} f - D_A^{1/2} \vecg)^\T (D_A^{1/2}f - D_A^{1/2}\vecg)
  \\&= (R D_A^{1/2} \vecg - D_A^{1/2}\vecg)^\T (R D_A^{1/2} \vecg - D_A^{1/2}\vecg)
  \\&= (D_A^{1/2}\vecg)^\T(R - I)^\T(R-I) (D_A^{1/2}\vecg)
  \\&= \vecg^\T D_A^{1/2}(R^\T R - 2 R + I) D_A^{1/2} \vecg
  \\&= \vecg^\T D_A^{1/2} (I - R) D_A^{1/2} \vecg.
\end{align*}

We next consider step 2. Given $R$, what $g$ should the adversary pick?
Letting $L = D_A - P_{A,A}$, the adversary is constrained to pick $\vecg$ such that $\vecg^\T L \vecg = (L^{1/2} \vecg)^\T(L^{1/2} \vecg) = 1$.
We note that $L$ is not full rank: in particular, any eigenvector of $M$ with eigenvalue 1 is an eigenvector of $L$ of eigenvector zero. Any function $\vecg$ chosen by the adversary must then be the sum of two parts: 
\begin{itemize}
    \item a component in in the range of $L$, of the form $\left(L^\dagger\right)^{1/2} \vecu$ where $\|\vecu\|_2 = 1$ and $\dagger$ represents the Moore-Penrose pseudoinverse,
    \item and a component in the null space of $L$.
\end{itemize}
Overall, we can thus write $\vecg = \left(L^\dagger\right)^{1/2} \vecu + \vech$ where $\|\vecu\|_2 = 1$ and $\vech^\T L \vech = 0$. 
Similarly, the response $\hat{\vecg}$ must also have two components, one in the range of $L$ and one in the null space of $L$. There are then two cases. If $\cF$ does not span the entire null space of $L$, the adversary can force an arbitrarily high approximation error by choosing $\vech$ to be in the null space of $L$ but not $\cF$. On the other hand, if $\cF$ spans the entire null space of $L$, the player can always perfectly approximate $\vech$, and so the adversary is forced to maximize cost by using $\vecu$. In particular, they will pick
\begin{align*}
  \vecu &= \argmax_{\|\vecu\|_2 = 1} (\left(L^\dagger\right)^{1/2} \vecu)^\T D_A^{1/2} (I - R) D_A^{1/2} (\left(L^\dagger\right)^{1/2} \vecu)
  \\&= \argmax_{\|\vecu\|_2 = 1} \vecu^\T \left(L^\dagger\right)^{1/2} D_A^{1/2} (I - R) D_A^{1/2} \left(L^\dagger\right)^{1/2} \vecu
  \\&= \argmax_{\|\vecu\|_2 = 1} \vecu^\T A \vecu
\end{align*}
where $A$ is the matrix $\left(L^\dagger\right)^{1/2} D_A^{1/2} (I - R) D_A^{1/2} \left(L^\dagger\right)^{1/2}$. The optimal choice for $\vecu$ is an eigenvector of $A$ with maximal eigenvalue, and the cost is then that maximal eigenvalue.
But observe that $M$ is similar to the following:
\begin{align*}
A &\sim (\left(L^\dagger\right)^{1/2} D_A^{1/2})^{-1} M (\left(L^\dagger\right)^{1/2} D_A^{1/2})
\\&= (I - R) D_A^{1/2} L^\dagger D_A^{1/2}
\\&= (I - R) \left(D_A^{-1/2} L D_A^{-1/2}\right)^\dagger := A'
\end{align*}
Similar matrices have the same eigenvalues, so the maximum cost attainable by the adversary is the maximal eigenvalue of $A'$.

Finally, we consider step 1. Which $R$ should our player choose to minimize this maximum cost? They should first ensure the cost is finite, by choosing $R$ to span the null space of $D_A^{-1/2} L D_A^{-1/2}$. (Note that if $d$ is less than the dimension of this null space, there is no choice that ensures a finite cost; in this case every representation has unbounded worst-case approximation error.) Afterward, they should ensure that $A'$ has the smallest maximum eigenvalue.
The sorted vector of eigenvalues of $A'$ is bounded below by the vector obtained by matching the largest eigenvalues of $\left(D_A^{-1/2} L D_A^{-1/2}\right)^\dagger$ with the smallest of $(I - R)$ \citep[exercise~III.6.14]{bhatia2013matrix}\footnote{\tiny See also \url{https://math.stackexchange.com/questions/573583/eigenvalues-of-the-product-of-two-symmetric-matrices}}. Let $d^*$ be the dimension of the null space of $D_A^{-1/2} L D_A^{-1/2}$. Then $I-R$ has $d$ eigenvalues with value $0$ (i.e. 0 is an eigenvalue with multiplicity $d$). Of these, $d^*$ must be used to span this null space, and the remaining $d-d^*$ (if any) can be used to reduce the eigenvalues of $A'$. Thus, the largest eigenvalue of $A'$ is always at least as big as the $(d - d^* + 1)$-th largest eigenvalue of $\left(D_A^{-1/2} L D_A^{-1/2}\right)^\dagger$. We can attain this bound by setting $R$ to exactly capture the $(d - d^*)$-dimensional subspace of $\left(D_A^{-1/2} L D_A^{-1/2}\right)^\dagger$ spanned by its top eigenspaces, along with the $d^*$-dimensional null space of $D_A^{-1/2} L D_A^{-1/2}$.

But the combination of the null space of $D_A^{-1/2} L D_A^{-1/2}$ and the top eigenspace of $\left(D_A^{-1/2} L D_A^{-1/2}\right)^\dagger$ is just the space spanned by the  $d$ eigenvectors of $D_A^{-1/2} L D_A^{-1/2}$ with the \emph{smallest} eigenvalues. Furthermore,
\begin{align*}
D_A^{-1/2} L D_A^{-1/2} &=
D_A^{-1/2} (D_A - P_{A,A}) D_A^{-1/2}
\\&= I - D_A^{-1/2} P_{A,A} D_A^{-1/2} = I - M,
\end{align*}
so we are looking for the eigenvectors of $M$ with the \emph{largest} eigenvalues, where $M$ is the matrix described in Appendix \ref{app:pca_is_eigenfunctions}.

Thus, the player should choose $\cF$ such that $D_A^{1/2} \cF$ spans the top $d$-dimensional eigenspace of $M$, e.g. they should choose functions of the form $D_A^{-1/2} \vecv_i$ where the $\vecv_i$ are the eigenvectors of $M$ with largest eigenvalue. But these are exactly the left eigenvectors $\vecf_i$ of the positive pair Markov chain, which is how $r_*^d$ is defined. We conclude that $\cF_{r_*^d}$ is the optimal choice for the player, and thus Equation \ref{eqn:minimax-approximation} holds.

Indeed, we can conclude something further: if $\lambda_{d+1}$ is the $(d+1)$th eigenvalue of the positive pair Markov chain (the variance along the $(d+1)$th principal component of the positive-pair kernel), then as long as $d \ge d^*$, $(1 - \lambda_{d+1})^{-1}$ is the $(d-d^*+1)$th eigenvalue of $\left(D_A^{-1/2} L D_A^{-1/2}\right)^{\dagger}$, which is exactly the worst-case approximation error for $\cF_{r_*^d}$ against any function with $\E[(g(v_1) - g(v_2))^2] = 2 g^\T L g = 2$. Scaling by $\varepsilon$, if $\E[(g(v_1) - g(v_2))^2] = \varepsilon$ then the worst case error is $\frac{1}{2} \varepsilon / (1 - \lambda_{d+1})$. In other words,
\[
\max_{g \in S_\epsilon}
~~
\min_{\hat{g} \in \cF_{r_*^d}}
~~
\E_{p(a)}\Big[\big( g(a) - \hat{g}(a) \big)^2 \Big]
= \frac{\varepsilon}{2(1 - \lambda_{d+1})}.
\]

We now return our attention to Equation \ref{eqn:minimax-regularization}. This equation can also be formulated as an adversarial game:
\begin{enumerate}
  \item Player chooses a rank-$d$ subspace $\cF \subset \cA \to \R$ of functions.
  \item Adversary chooses a function $\hat{g} \in \cF$ with unit norm $\E[\hat{g}(v)^2] = \hat{\vecg}^\T D_A \hat{\vecg} = 1$ to maximize $\E[(\hat{g}(v_1) - \hat{g}(v_2))^2] = 2 \hat{\vecg}^\T L \hat{\vecg}$.
\end{enumerate}
We can again identify the choice of $\cF$ with the choice of the orthogonal projection matrix $R$ on $D_A^{1/2} \cF$. We know $\hat{g} \in \cF$, so we can write $\hat{\vecg} = D_A^{-1/2} R D_A^{1/2} \hat{\vecg}$. Also note that for any $\vech$ (not even necessarily in $\cF$), $D_A^{-1/2} R D_A^{1/2} \vech \in \cF$. Now suppose we choose an $\vech$ so that $\E[h(a)^2] = \vech^\T D_A \vech = 1$, and define $\hat{\vecg} = D_A^{-1/2} R D_A^{1/2} \vech$. Then
\begin{align*}
\hat{\vecg}^\T D_A \hat{\vecg} &= \vech^\T D_A^{1/2} R D_A^{-1/2} D_A (D_A^{-1/2} R D_A^{1/2} \vech)
\\&= \vech^\T D_A^{1/2} R^2 D_A^{1/2} \vech
\\&= \vech^\T D_A^{1/2} R D_A^{1/2} \vech
\\&\le \vech^\T D_A^{1/2} I D_A^{1/2} \vech = 1
\end{align*}
because $R$ has eigenvalues at most 1. So, the following are equivalent:
\begin{itemize}
  \item choosing $\hat{\vecg} \in \cF$ with $\E[\hat{g}(v)^2] \le 1$
  \item choosing $\vech \in \R^{\cA}$ with $\E[h(v)^2] \le 1$ and letting $\hat{\vecg} = D_A^{-1/2} R D_A^{1/2} \vech$
\end{itemize}
We also note that there is no advantage to the adversary from picking a function such that $\E[\hat{g}(v)^2] < 1$. So we can reframe step 2 as choosing $\vech$ so that $\E[h(v)^2] = 1$, to maximize
\[
2 \left(D_A^{-1/2} R D_A^{1/2} \vech\right)^\T L \left(D_A^{-1/2} R D_A^{1/2} \vech\right)
\]
We can further reparameterize by letting $\vech = D_A^{-1/2} \vecu$, so that $\E[h(v)^2] = 1$ is equivalent to $\|\vecu\|_2^2 = 1$. We then have cost
\begin{align*}
C &= 2 \hat{\vecg}^\T L \hat{\vecg} = 2 (\vech^\T D_A^{1/2} R D_A^{-1/2}) L (D_A^{-1/2} R D_A^{1/2} \vech)
\\&= 2 (\vecu^\T D_A^{-1/2}) D_A^{1/2} R D_A^{-1/2} L D_A^{-1/2} R D_A^{1/2} (D_A^{-1/2} \vecu)
\\&= 2 \vecu^\T R D_A^{-1/2} L D_A^{-1/2} R \vecu
\end{align*}
The choice that maximizes the cost is then an eigenvector of
\[
B = 2 R D_A^{-1/2} L D_A^{-1/2} R = \left(L^{1/2} D_A^{-1/2} R\right)^\T \left(L^{1/2} D_A^{-1/2} R\right)
\]
with maximal eigenvalue, and the cost is 2 times the maximum eigenvalue of $B$. But note that $B$ has the same eigenvalues as
\begin{align*}
B' = \left(L^{1/2} D_A^{-1/2} R\right) \left(L^{1/2} D_A^{-1/2} R\right)^\T
= L^{1/2} D_A^{-1/2} R D_A^{-1/2} L^{1/2}
\end{align*}
since $R^2 = R$. And $B'$ is similar to
\begin{align*}
B'' = D_A^{-1/2} L D_A^{-1/2} R
\end{align*}
so $B$ and $B''$ have the same eigenvalues.

We now consider step 1. What should the player choose for $R$?
By a similar eigenvalue-of-product argument as used for Equation \ref{eqn:minimax-approximation}, regardless of the choice of $R$ the largest eigenvalue of $B''$ must always be at least as big as the $d$th smallest eigenvalue of $D_A^{-1/2} L D_A^{-1/2}$, because $R$ has eigenvalue 1 with multiplicity $d$. We can attain this minimum cost by choosing $R$ to project into the eigenspace spanned by the $d$ eigenvectors of $D_A^{-1/2} L D_A^{-1/2}$ with the \emph{smallest} eigenvalues.

But observe that the smallest eigenvalues and corresponding eigenvectors of $D_A^{-1/2} L D_A^{-1/2} = I - M$ are exactly the largest eigenvalues and corresponding eigenvectors of $M = D_A^{-1/2} P_{A,A} D_A^{-1/2} = I - L$, which as we argued above, is exactly the set of eigenfunctions $f_i$ used to construct $r_*^d$.

In this case, the optimal cost itself is determined by the largest eigenvalue of $D_A^{-1/2} L D_A^{-1/2}$ (times two), so we obtain
\[
\smashoperator{\max_{\begin{gathered}\scriptstyle \hat{g} \in \cF_{r_*^d},\\[-0.5em]\scriptstyle \E[\hat{g}(a)^2] = 1\end{gathered}}}
~~
\E_{p_+}\Big[\big(\hat{g}(a_1) - \hat{g}(a_2)\big)^2\Big] = 2(1 - \lambda_d).
\]
\end{proof}

\subsection{Generalization bound for linear prediction with the eigenfunction representation}

\restatePropGen*

\begin{proof}
We start by decomposing the excess risk as:
\begin{equation*}
    \mathcal{R}(\hat{\beta}_R) - \mathcal{R}^{*} = \left( \mathcal{R}(\hat{\beta}_R) - \inf_{\|\beta\|_2 \leq R} \mathcal{R}(\beta) \right) + \left(\inf_{\|\beta\|_2 \leq R}\mathcal{R}(\beta) - \mathcal{R}(\beta^{*})\right) + \left( \mathcal{R}(\beta^{*}) - \mathcal{R}^{*}\right)
\end{equation*}

The first term is the estimation error, which we can readily bound by first noting that:
\begin{equation*}
    \E{\left[\|r_*^d(A)\|_2^2\right]} = \E{\left[\sum_{i=1}^{d} f_i(A)^2\right]} = \sum_{i=1}^{d} \E{\left[f_i(A)^2\right]} = d
\end{equation*}
then noticing that our loss is $1$-Lipschitz, and finishing with the standard Rademacher complexity argument for constrained linear classes \cite{kakade2008comp} to get:
\begin{equation}
\label{est_error}
    \left( \mathcal{R}(\hat{\beta}_R) - \inf_{\|\beta\|_2 \leq R} \mathcal{R}(\beta) \right) \leq \frac{2dR}{\sqrt{n}}
\end{equation}

The second term is an approximation error term due to the use of a constrained linear class instead of the full linear class. Define $\tilde{\beta}^{*} := \frac{\beta^{*}}{{\max\{\|\beta^{*}\|_2/R, 1\}}}$. Then we can bound this second term by:
\begin{equation}
\label{approx1_error}
\begin{aligned}
    \inf_{\|\beta\|_2 \leq R}\mathcal{R}(\beta) - \mathcal{R}(\beta^{*})
    &\leq \mathcal{R}(\tilde{\beta}^{*}) - \mathcal{R}(\beta^{*})
    \\&\leq \E{\left[\left\lvert\inner{r_{*}^d(A), \tilde{\beta}^{*} - \beta^{*}}\right\rvert\right]}
    \\&\leq \E{\left[\|r_{*}^{d}(A)\|_2\right]} \|\tilde{\beta}^{*} - \beta^{*}\|_2
    \\&\leq \sqrt{d} (\|\beta^{*}\|_2 - R)_{+}
\end{aligned}
\end{equation}
where the second inequality follows from the $1$-Lipschitznes of the loss, the third by Cauchy-Schwartz inequality, and the last by Jensen's inequality.

The third and last term is an approximation error term due to the use of the function class given by the span of the first $d$ eigenfunctions $(f_i)_{i=1}^{d}$. We can bound it as follows:
\begin{equation}
\label{approx2_error}
\begin{aligned}
    \mathcal{R}(\beta^{*}) - \mathcal{R}^{*}
    &\leq \E{\left[\left\lvert\inner{r_{*}^{d}(A), \beta^{*}} - g^{*}(A)\right\rvert\right]}
    \\&\leq \sqrt{\E{\left[(\inner{r_{*}^{d}(A), \beta^{*}} - g^{*}(A))^2\right]}}
    \leq \sqrt{\frac{\varepsilon}{2 (1 - \lambda_{d+1})}}
\end{aligned}
\end{equation}
where the first inequality follows from the $1$-Lipschitznes of the loss, the second from Jensen's inequality, and the last by first noticing that the function $h(a) := \inner{r_{*}^{d}(a), \beta^{*}}$ satisfies $h = \argmin_{g \in \mathcal{F}_{r_{*}^d}} \E{\left[(g(A) - g^{*}(A))^2\right]}$ (since it is the projection of $g^{*}$ onto $\mathcal{F}_{r_{*}^d}$ under the norm $\|x\|^2 := \E{\left[x^2(A)\right]}$), then appealing to the proof of Proposition \ref{thm:minimax}. Combining the bounds of equations (\ref{est_error}), (\ref{approx1_error}), and (\ref{approx2_error}), we obtain the stated generalization bound.
\end{proof}

\newpage
\section{Eigenfunction estimation techniques}\label{app:eigenfunction_estimation_techniques}

In practice, we do not generally have access to the closed form for $\Kctr$ or the the full population of our examples, but instead only have access to a dataset of positive pairs $(a_1, a_2)$ drawn from the distribution $p_+(a_1, a_2)$ (or, more commonly, a dataset of examples $z$ and a sampling algorithm for $p(A | Z)$).
In this section we discuss some approaches for approximating the optimal eigenfunction representation from these samples.

\subsection{Combining Contrastive Learning With Kernel PCA}
Our analysis in Sections \ref{sec:contrastive-models-are-kernels} and \ref{sec:eigenfunctions} motivates the following procedure:

\begin{enumerate}
    \item Train a contrastive learning model using an existing contrastive learning objective.
    \item Using the equations in Table \ref{tab:cl_objectives_as_kernel_learning}, identify the approximate kernel $\hatK_\theta$, which will hopefully be similar to the positive-pair kernel $\Kctr$ assuming we have converged to a solution to the objective.
    \item Perform (or approximate) Kernel PCA using $\hatK_\theta$ and a large set of individual views drawn from $p(A)$.
    \item Use the first $d$ extracted principal component projection functions $h_i : \cA \to \R$ to construct a representation, possibly normalizing by $\sigma_i$ to obtain the orthonormal basis $f_i(a) = \sigma_i^{-1} h_i(a)$.
\end{enumerate}

We note that applying a rotation matrix to the optimal representation does not affect the expressivity of that representation. It is thus sufficient to identify the subspace spanned by the first $d$ principal component projection functions. If the representation dimension $d$ is known in advance, it may be possible to adjust the contrastive learning method to accomplish this without requiring a separate PCA step. In particular, when using the spectral contrastive loss with a $d$-dimensional linear kernel head, the population minimizer of the loss will exactly correspond to the best $d$-dimensional approximation of $\Kctr$. This means that the output layer representation will be exactly the set of principal component projection functions rotated by some orthogonal matrix.

On the other hand, including the PCA step makes it possible to decouple the representation dimension from the kernel approximation method, which may be advantageous if the learning dynamics of a different parameterization or loss are better, or if $d$ is not known in advance. The PCA step also makes it possible to diagnose how well the learned representation is capturing properties of $\Kctr$ by checking the extent to which Equation \ref{eqn:eigenfunction_pair_discrepancy} is satisfied.

Directly applying kernel PCA can be expensive for large datasets, due to the need to decompose the full Gram matrix of kernel similarities. A more computationally-friendly approach is to first approximate $\hatK_\theta$ with a lower-rank approximation, such as the Nystr{\"o}m method \citep{williams2000using}, and then perform PCA over that approximation \citep{sriperumbudur2017approximate,ullah2018streaming,sterge2020gain}. This can be especially useful when the dataset is much larger than the number of desired eigenfunctions.

We note that approaches based on Kernel PCA may be numerically unstable in the presence of many eigenvalues close to 1, since small kernel estimation errors may be amplified by the eigendecomposition process. Although we were able to apply these techniques to models trained on our two synthetic datasets, we have been so far unable to obtain a reliable estimate of the eigenfunctions for real-world datasets such as those considered by \citet{chen2020simple}. In particular, we attempted to apply the Nystr{\"o}m method to a pretrained SimCLR model but ran into numerical precision issues and were unable to form a good approximation of the learned hypersphere-based kernel $\hatK_\theta$.
See also Appendix \ref{app:imagenet-spectral-pca} for a preliminary analysis of a model with a constrained-norm linear kernel head on ImageNet; although we were able to run Kernel PCA with this model, it does not appear to be a good approximation of $\Kctr$.

\subsection{Direct Eigenfunction Extraction, And Connections to VICReg}\label{app:direct_eigenfunction_vicreg}

An alternative strategy for estimating the eigenfunctions $f_i$ is to combine the contrastive learning and Kernel PCA steps into a single parameterized model. This is possible using parameterized spectral decomposition techniques such as SpIN \citep{pfau2018spectral} or NeuralEF \citep{deng2022neuralef}.

We note that these techniques are usually motivated as extracting the eigenfunctions of the kernel operator $T[f](a_1) = \int K(a_1, a_2) f(a_2) p(a_2)\,da_2$, or in other words, solving the eigenfunction equation
\[
\lambda_i f_i(a_1) = \int K(a_1, a_2) f_i(a_2) p(a_2)\,da_2.
\]
In this case, substituting the form of $\Kctr$ reveals that this is equivalent to finding the eigenfunctions of the positive pair Markov chain:
\begin{align*}
\lambda_i f_i(a_1)
= \sum_{a_2} \Kctr(a_1, a_2) f_i(a_2) p(a_2)
&= \sum_{a_2} \frac{p_+(a_1, a_2)}{p(a_1)p(a_2)} f_i(a_2) p(a_2)
\\&= \sum_{a_2} \frac{p_+(a_1, a_2)}{p(a_1)} f_i(a_2)
\\&= \sum_{a_2} p(a_2 | a_1) f_i(a_2)
\\&= \E_{a_2 | a_1}[ f_i(a_2) ]
\end{align*}

\paragraph{Connections between NeuralEF and VICReg.}
We now describe in more detail how to apply the NeuralEF technique to estimate the basis of eigenfunctions of the positive pair Markov chain.
The NeuralEF approach approximates the eigenfunctions $f_i$ of a kernel $K$ by solving an asymmetric set of constrained optimization problems
$
\hat{f}_j = \argmax_{\hat{f}_j} R_{jj} - \sum_{i = 0}^{j - 1} \frac{R_{ij}^2}{R_{ii}}
$
subject to the constraint that $C_j = 1$, where
\[
R_{ij} = \sum_{a_1,~a_2} \hat{f}_i(a_1) K(a_1, a_2) \hat{f}_j(a_2) p(a_1) p(a_2),
\qquad
C_j = \sum_{a} \hat{f}_j(a)^2 p(a) = \E\left[\hat{f}_j(a)^2\right].
\]
Substituting the positive-pair kernel $\Kctr(a_1, a_2) = \frac{p_+(a_1, a_2)}{p(a_1)p(a_2)}$ reveals an alternative form for the $R_{ij}$ terms, allowing us to apply NeuralEF using samples from $p_+$:
\[
R_{ij} = \sum_{a_1,~a_2} \hat{f}_i(a_1) \frac{p_+(a_1, a_2)}{p(a_1)p(a_2)}\hat{f}_j(a_2) p(a_1) p(a_2)
= \E_{p_+(a_1, a_2)}\left[\hat{f}_i(a_1) \hat{f}_j(a_2)\right].
\]
Interestingly, the resulting algorithm closely resembles the Variance-Invariance-Covariance regularization (VICReg) self-supervised learning method proposed by \citet{bardes2021vicreg}: the $C_j = 1$ constraint ensures each function has sufficient variance, the $R_{ij}^2$ term reduces the covariance between features, and maximizing $R_{jj} = \E[\hat{f}_j(a_1) \hat{f}_j(a_2)]$ over positive pairs leads to representations that are as invariant as possible between positive pairs. (Note, however, that the asymmetric weighted covariance penalties $\frac{R_{ij}^2}{R_{ii}}$ in NeuralEF ensure that eigenfunctions are recovered in the correct order without interfering with one another.)

\paragraph{Stabilizing NeuralEF for contrastive learning.}
Althoug the NeuralEF-based approach  works well when $R_{ii}$ is large for all $i$, the method becomes numerically unstable when $R_{ii}$ is small. And since $R_{ii}\approx \lambda_i$, this makes it difficult to recover eigenfunctions whose eigenvalues $\lambda_i$ are close to zero.

To enable recovery of all eigenfunctions, including those with $\lambda_i = 0$, we do not directly apply Neural EF to $\Kctr$ in our experiments. Instead, we construct a modified kernel with the help of a modified positive pair distribution
\[
p_{\text{mix}}(a_1, a_2) = 0.5 p_{+}(a_1, a_2) + 0.5 p(a_1) \mathbbm{1}[a_2 = a_1]
\]
where $\mathbbm{1}[a_2 = a_1]$ is 1 when $a_1 = a_2$ and 0 otherwise.
Conceptually, with probability 50\%, we sample a positive pair $(a_1, a_2)$ as normal, and otherwise, we sample a single augmentation $a_1$ and then choose $a_2 = a_1$.

The corresponding positive-pair Markov chain transition matrix can be written as
\[
P_{\text{mix}} = 0.5 P + 0.5 I.
\]
It follows that the eigenfunctions for this modified distribution are the same as those for our original positive pair distribution, but the eigenvalues are transformed according to $\lambda^\text{mix}_i = 0.5 \lambda_i + 0.5$. We can thus apply Neural EF to the corresponding kernel $\Kctr^{\text{mix}}(a_1, a_2) = p_{\text{mix}}(a_1, a_2) / p(a_1)p(a_2)$, and recover the original $\lambda_i$ by inverting this transformation.

Substituting this into the Neural EF objective, we obtain
\begin{align*}
R^\text{mix}_{ij} &= \sum_{a_1,~a_2} \hat{f}_i(a_1) \frac{p_\text{mix}(a_1, a_2)}{p(a_1)p(a_2)}\hat{f}_j(a_2) p(a_1) p(a_2)
\\&= 0.5 \sum_{a_1,~a_2} \hat{f}_i(a_1) \frac{p_+(a_1, a_2)}{p(a_1)p(a_2)}\hat{f}_j(a_2) p(a_1) p(a_2)
\\&\qquad\qquad + 0.5 \sum_{a_1,~a_2} \hat{f}_i(a_1) \frac{p(a_1) \mathbbm{1}[a_2 = a_1]}{p(a_1)p(a_2)}\hat{f}_j(a_2) p(a_1) p(a_2)
\\&= 0.5 \sum_{a_1,~a_2} \hat{f}_i(a_1)\hat{f}_j(a_2) p_+(a_1, a_2)
+ 0.5 \sum_{a_1,~a_2} \hat{f}_i(a_1) \hat{f}_j(a_2) p(a_1) \mathbbm{1}[a_2 = a_1]
\\&= 0.5~\E_{p_+(a_1, a_2)}\left[\hat{f}_i(a_1) \hat{f}_j(a_2)\right]
+ 0.5~\E_{p(a_1)}\left[\hat{f}_i(a_1) \hat{f}_j(a_1)\right]
.
\end{align*}
We then estimate $R^\text{mix}_{ij}$ in each minibatch by averaging over minibatch positive pairs for the first term and over all minibatch augmentations for the second term. (In practice, we drop the 0.5 scaling factor in the loss.)
We find that this modification greatly stabilizes the Neural EF objective when estimating large numbers of eigenfunctions, and in particular makes it possible to learn eigenfunctions of $\Kctr$ with eigenvalues that are very small or even zero.

\newpage
\section{Experiment Details}\label{app:experiment_details}

In this section we describe our experiments and their results in more detail. We start by presenting the full set of results summarized in Section \ref{sec:experiments} and Figure \ref{fig:box_and_mnist_results}. We then describe details regarding model training. We conclude with some additional preliminary results regarding supervised learning with Kernel PCA representations on MNIST and eigenfunction extraction on ImageNet.

\subsection{Additional Eigenfunction Estimation Results}\label{app:additional_eigenfunction_results}

\begin{figure}
    \centering

    \includegraphics[width=1\textwidth]{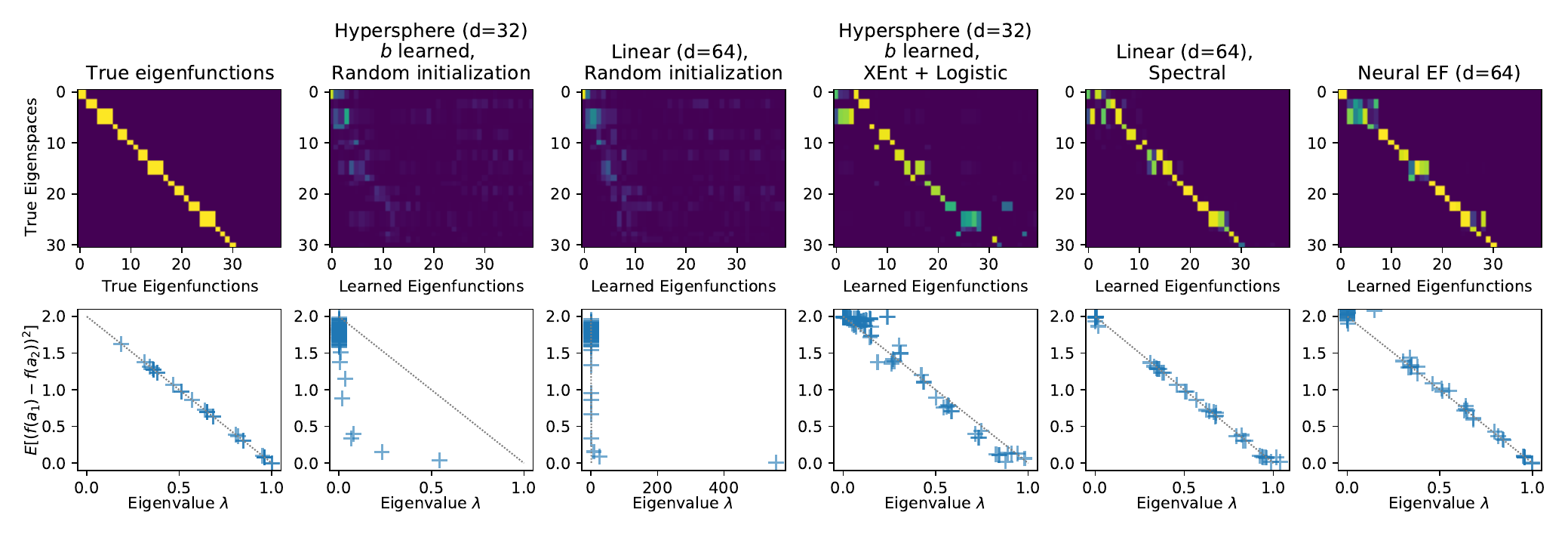}
    \includegraphics[width=1\textwidth]{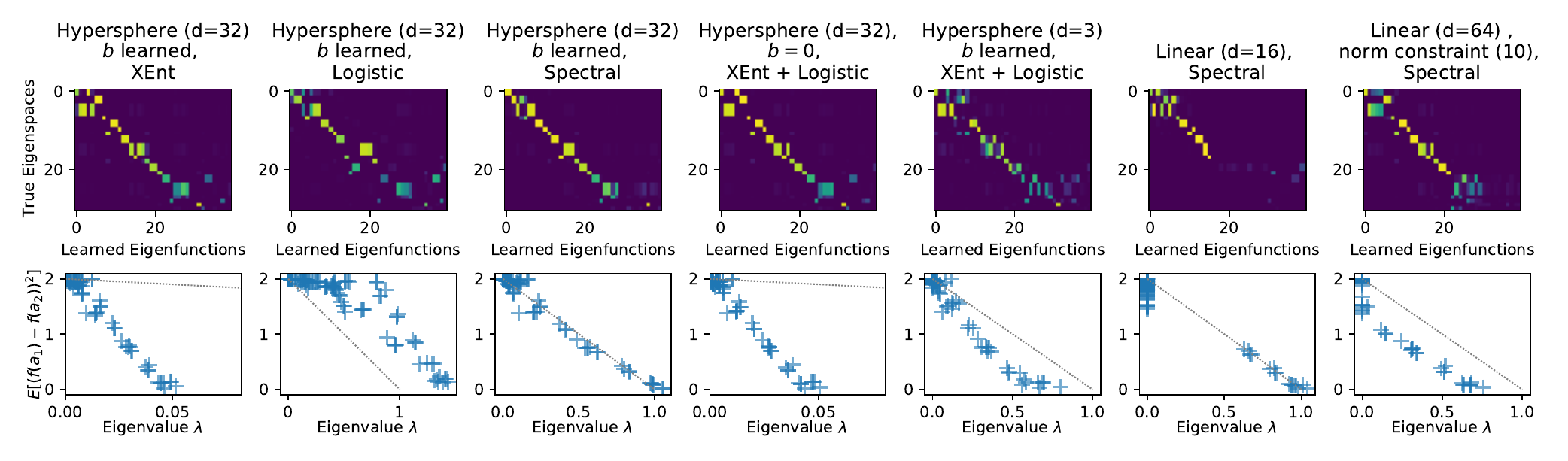}

    \caption{
    Eigenfunction and eigenvalue estimation accuracy on the overlapping regions task, for a selection of models. 
    Top row: Alignment between true eigenspaces and the learned kernel principal components, with perfect alignment shown as a block diagonal matrix. Bottom row: Relationship between learned kernel eigenvalue $\lambda$ and the corresponding positive-pair discrepancy $\E_{p_+}\big[(f(a_1) - f(a_2))^2\big]$, with the relationship predicted by Equation \ref{eqn:eigenfunction_pair_discrepancy} shown with a dashed line.
    }
    \label{fig:box_problem_comparison}
\end{figure}

\paragraph{Overlapping regions task.}
Results for our full set of models on the ``overlapping regions'' task are shown in Figure \ref{fig:box_problem_comparison}.

We find that, under suitable losses, \textit{linear kernels, hypersphere kernels, and NeuralEF can all produce good approximations of the basis of eigenfunctions}, despite their diferent parameterizations. Specifically, unconstrained-norm linear kernels with the spectral loss, hypersphere kernels with a learned bias under either a XEnt-Logistic loss mixture or the spectral loss, and the NeuralEF eigenfunction estimation method all produce eigenfunction estimates that are reasonably aligned to the true eigenfunctions, and eigenvalue estimates that are close to the true eigenvalues. However, especially for eigenspaces with similar eigenvalues, the eigenfunctions occasionally appear in the incorrect order, and have a small amount of mixing between eigenspaces. The relationship between approximate eigenvalues and positive-pair discrepancies also closely matches the prediction from Equation \ref{eqn:eigenfunction_pair_discrepancy}.

We note also that 
\textit{alignment with the basis of eigenfunctions emerges during training,} and is not simply a property of the model architecture, as evidenced by the lack of alignment when applying Kernel PCA to randomly initialized models.

\textit{The loss function used influences the learning dynamics and final result.} Using the XEnt loss alone produces a reasonably-well aligned eigenfunction decomposition, but has eigenvalues scaled by a constant, since the XEnt loss only measures ratios between kernel evaluations and thus only recovers the kernel up to a constant factor (discussed in Appendix \ref{app:minima_of_xent}). The Logistic loss alone appears to lead to inferior decomposition quality, although eigenvalues are in the right order of magnitude. Interestingly, the spectral loss seems to work even for hypersphere kernel approximations, although we found it to be the most unstable; successfully training a hypersphere kernel with the spectral loss required initializing the bias $b$ to a large negative number.

\textit{Constraints on the kernel approximation degrade eigenfunction and eigenvalue estimates.} For the hypersphere kernel, fixing the bias $b$ to zero led to eigenvalues that were abnormally small, whereas reducing the dimensionality of the hypersphere from 32 to 3 both degraded eigenfunction alignment and led to deviations from Equation \ref{eqn:eigenfunction_pair_discrepancy}. For the linear kernel, using a smaller dimension led to estimating only the eigenfunctions with larger eigenvalues, and imposing a norm constraint of $\|h_\theta(a)\|^2 = 10$ both reduced the number of accurately-captured eigenfunctions and caused the eigenvalues to be smaller than predicted by Equation \ref{eqn:eigenfunction_pair_discrepancy}.

\begin{figure}
    \centering

  \parbox{\textwidth}{
    \parbox{.68\textwidth}{%
      \subcaptionbox{$k = 20$ (medium augmentations)}{\includegraphics[width=\hsize]{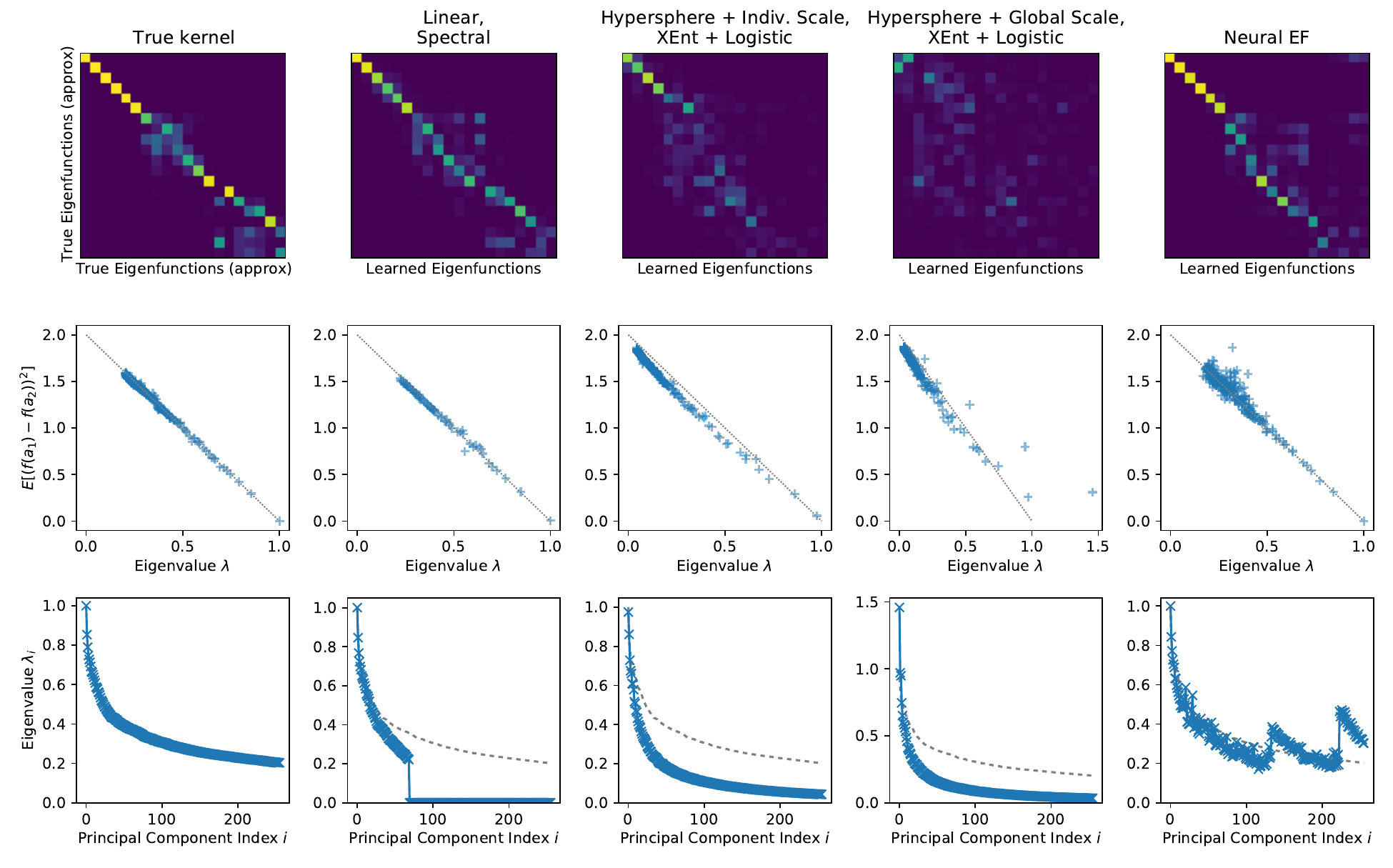}}
    }
    \hskip1em
    \parbox{.27\textwidth}{%
      \subcaptionbox{$k = 10$ (stronger aug.)}{\includegraphics[width=\hsize]{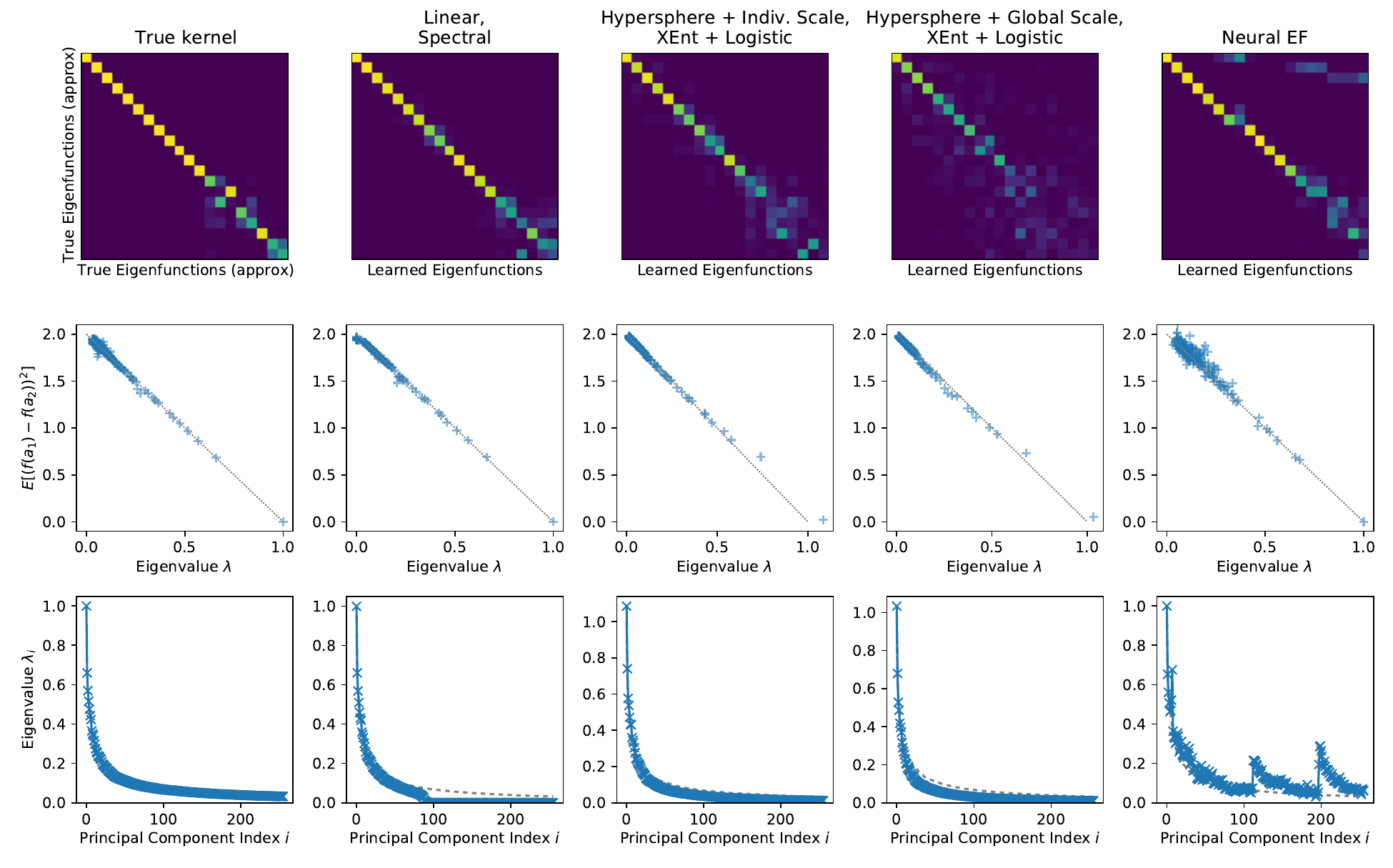}}
      \vskip1em
      \subcaptionbox{$k = 50$ (weaker aug.)}{\includegraphics[width=\hsize]{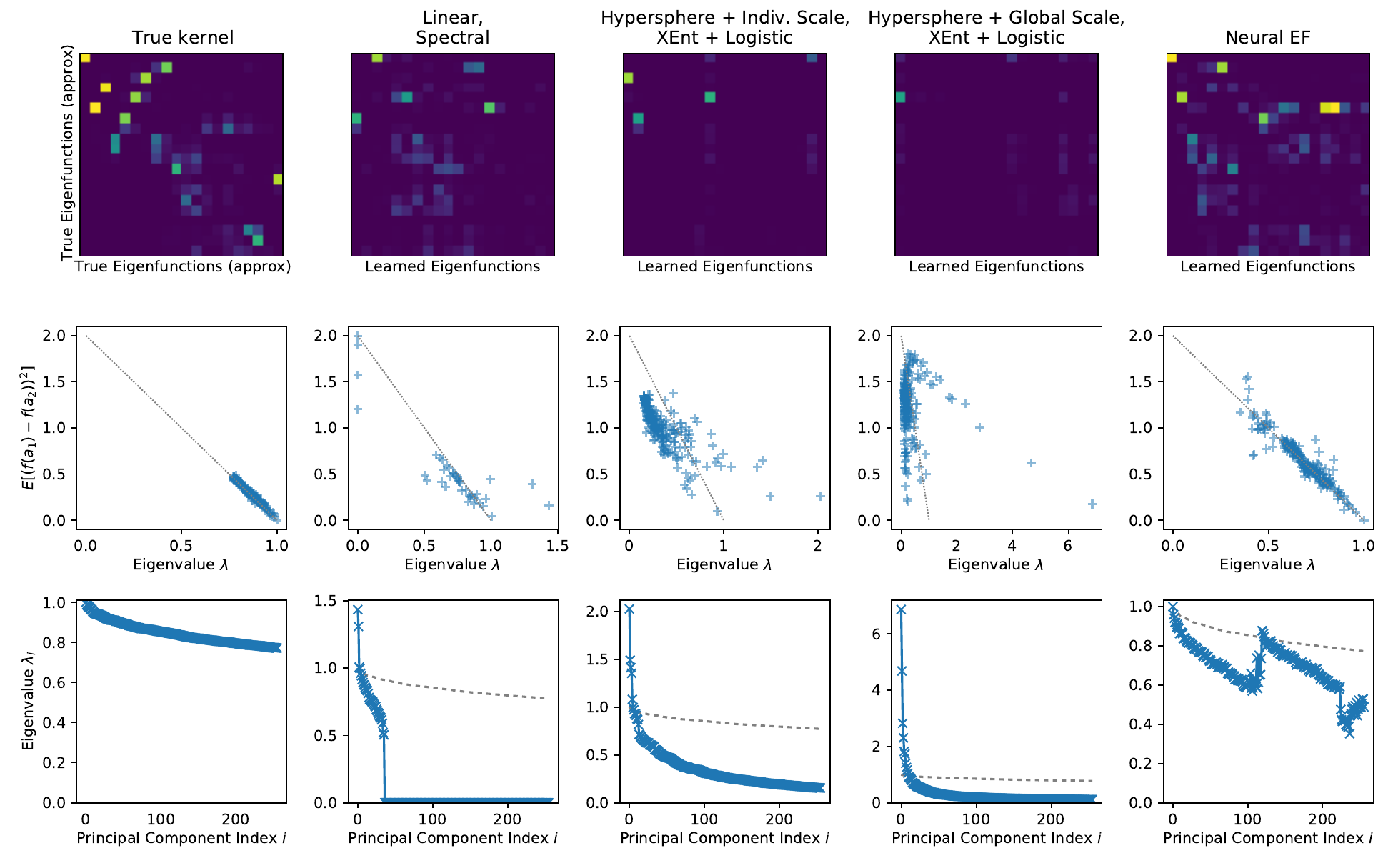}} 
    }
  }

    \caption{Alignments and discrepancy relationships for principal component  functions on MNIST task for three augmentation strengths. Top row: Alignment (squared dot products) of the first 20 principal component functions between runs of Kernel PCA on different kernels, with perfect alignment corresponding to a diagonal matrix. (Left column shows alignment between two independent runs of Kernel PCA on the true kernel $\Kctr$.) Middle row: Relationship between eigenvalue $\lambda$ and positive-pair discrepancy for the first 256 principal components (omitting any with $\lambda = 0$), with the prediction from Equation \ref{eqn:eigenfunction_pair_discrepancy} shown as a gray dashed line. Bottom row: Ordered eigenvalues for each approximation, relative to those of the true kernel (dashed line).
    }
    \label{fig:mnist_alignment_20}
\end{figure}

\paragraph{MNIST task.}\
Results for our full set of models at three augmentation strengths are shown in Figure \ref{fig:mnist_alignment_20}.

We compare two types of hypersphere kernel parameterization: a ``global scale'' version using $\exp(h_\theta(a_1)^\T h_\theta(a_2) / \tau + b)$ and a ``local scale'' version using
$\exp(h_\theta(a_1)^\T h_\theta(a_2) / \tau + s_\theta(a_1) + s_\theta(a_2))$. The second is more expressive, but the first is closer to that considered by prior work such as \citet{chen2020simple}. (Note that most work with hypersphere-based models fixes $b = 0$, but also uses the XEnt loss alone, which is not affected by the value of $b$. We include $b$ and include the Logistic loss to assess how well the models can recover the correct values of the eigenvalues.).

Note that, although we can exactly evaluate $\Kctr$ on any pair of augmented views, the space $\cA$ (containing all multisets of $k$ pixels) is too large to enumerate, preventing us from exactly computing the exact eigenfunctions of $\Kctr$. Instead, we use Kernel PCA over a large set of samples to estimate the ``ground truth'' eigenfunctions. To better understand the impact of this step, we also include a comparison between two independent runs of Kernel PCA on $\Kctr$.

\textit{Weaker augmentations make principal component estimation difficult.} We find that Kernel PCA can more reliably recover principal components with large gaps between eigenvalues, and is influenced by random noise as eigenvalues become closer together. In particular, as augmentation strength decreases, there are more eigenvalues close to 1, and it becomes more difficult to identify the most significant principal components. Due to the stochasticity of Kernel PCA, it is difficult to accurately identify eigenfunctions even when given direct access to $\Kctr$, and two runs of Kernel PCA begin to diverge as eigenvalues decrease. Eigenfunction quality degrades even faster when comparing results of Kernel PCA for a learned model and for $\Kctr$: the learned models only allow recovery of a few principal components accurately at larger augmentation strengths.

\textit{More expressive kernel approximations recover more eigenfunctions.} We observe that the linear kernel head and Neural EF method are able to recover a larger number of eigenfunctions accurately compared to hypersphere kernels, and have eigenvalues that lie closer to the Equation \ref{eqn:eigenfunction_pair_discrepancy} prediction. Additionally, we find that adding a per-example scale function $s_\theta$ to the hypersphere kernel leads to more correctly-recovered eigenfunctions and fewer outlier eigenvalues.

\textit{Learned models have faster eigenvalue decay than $\Kctr$.}
In general, the eigenvalues of learned kernels decay faster than the eigenvalues of the true positive-pair kernel $\Kctr$. Interestingly, however, the eigenvalues still appear to follow the relationship predicted by Equation \ref{eqn:eigenfunction_pair_discrepancy} for sufficiently expressive models and sufficiently strong augmentations. This suggests that the learned models are approximately capturing a subset of the positive-pair eigenfunctions.

We note that both the linear-kernel-head model and the NeuralEF model exhibit a sharp change in eigenvalue near the 100th eigenfunction: the first shows a sudden drop to zero, whereas the second shows strange ``jumps'' to larger eigenvalues. We believe this corresponds to a failure to identify additional directions of variation, leading to a lower-rank kernel approximation than expected. For NeuralEF, this manifests as essentially repeating earlier eigenfunctions instead of finding new orthogonal directions.

The authors believe that one promising research direction for finding better self-supervised learning techniques would be to develop more stable or better-conditioned linear approximations of the positive-pair kernel, building on the spectral contrastive loss or NeuralEF. In particular, we see this as evidence that the parameterizations we used are not able to form good approximations of the true minimizer of the respective objectives. We hope that such techniques could be developed by combining ideas from the kernel methods and self-supervised learning communities, and that they would lead to useful representations for downstream supervised learning as suggested by our analysis.

\subsection{Training Details: Overlapping Regions Task}

For each of the models visualized in Figure \ref{fig:box_problem_comparison}, we use a simple three-layer MLP with hidden layer sizes [64, 128, 256] which maps from the two-dimensional location of each grid point to the final kernel-dependent output embedding. We train all methods for 12,000 steps using a batch size of 1024 independently-sampled positive pairs per iteration, using the Adam optimizer \citep{kingma2014adam} with a cosine-decay learning rate schedule.

For the hypersphere kernel head $\hatK_\theta(a_1, a_2) = \exp(h_\theta(a_1)^\T h_\theta(a_2) / \tau + b)$, we include a small regularization penalty encouraging $b$ to be small, which stabilizes training with the XEnt loss (since the loss is otherwise independent of $b$).
When using the ``XEnt + Logistic'' loss mixture, we combine the two losses using a weight of 0.9 for XEnt and 0.1 for Logistic.

For all methods other than Neural EF, we used batch normalization for the first 6,000 iterations, then interpolated between the current batch statistics and the average from previous batches for 2,000 more iterations, and finally trained for 4,000 iterations using frozen batch norm statistics alone (e.g. in ``inference'' mode), which we found slightly improves eigenfunction quality. For Neural EF, we keep batch normalization at all steps, and in particular use L2 batch normalization for the output embedding as proposed by \citet{deng2022neuralef}.

To extract eigenfunction estimates from our kernel models, we compute estimates of the eigenfunctions and eigenvalues by performing population kernel PCA over the values of the kernel across all elements of $\cA$, weighted by $p(a)$. For the NeuralEF model, we instead directly use the model's outputs as the eigenfunctions, and use running averages of $R_{ii}$ to approximate eigenvalues.

For Figure \ref{fig:box_problem_comparison}, we compute the alignment between eigenfunctions by taking their squared uncentered covariance $\E[f_i(a) \hat{f}_j(a)]^2$. Note that by Theorem \ref{thm:pca_is_eigenfunctions} this is equivalent to the square of the coefficient $c_i$ for the function $\hat{f}_j$ expanded in terms of the basis of eigenfunctions $f_i$; consequently we have $\sum_i \E[f_i(a) \hat{f}_j(a)]^2 = \E[\hat{f}_j(a)^2] = 1$. Since the specific choice of eigenfunctions is not uniquely determined when there are multiple eigenfunctions with the same eigenvalue, we sum the alignments for all eigenfunctions that have the same eigenvalue, leading to the block-diagonal structure shown in \ref{fig:box_problem_comparison}.
We computed the positive-pair discrepancy for each approximate principal component function by analytically summing over all possible positive pairs.

\paragraph{Population look-up table variant}
In Figure \ref{fig:box_problem_population_eigenfunction_approximation}, we use a modified procedure to improve visualization quality. Instead of using a MLP, we instead directly learn a lookup table of positions $v_a \in \R^2$ and scale modifiers $s_a \in [-5.0, 5.0]$ for each point $a \in \cA$, and use a rational quadratic kernel head
\begin{equation*}
\hatK_\theta(a_1, a_2) = e^{s_{a_1}}e^{s_{a_2}} \left(1 - \frac{\|v_{a_1} - v_{a_2}\|^2}{2\alpha}\right)^{-\alpha}.
\end{equation*}
where $\alpha$ is a learnable parameter. (The size of each marker in Figure \ref{fig:box_problem_population_eigenfunction_approximation} represents the learned scale; we find that using the scale modifier improves eigenfunction quality, and note that more tightly-clustered points tend to have slightly smaller scales.)

We train using a full-batch variant of the XEnt and Logistic losses. For the XEnt loss, we compute
\begin{align*}
\mathcal{L}_\text{XEnt}(\theta)
= -\sum_{a_1,~a_2} p_+(a_1, a_2) \log p_\theta(a_2 | a_1)
\end{align*}
where
\[
p_\theta(a_2 | a_1) \propto p(a_2) \hatK_\theta(a_1, a_2).
\]
(Here weighting the kernel by $p(a_2)$ can be viewed as the population equivalent to sampling a set of negative pairs as the number of negative pairs approaches infinity.) For the logistic loss, we analytically compute
\[
\mathcal{L}_\text{Logistic}(\theta) = \E_{p_+(a^+_1, a^+_2)}\left[
- \log \sigma( \log \hatK_\theta(a^+_1, a^+_2))
\right]
+ \E_{p(a^-_1)p(a^-_2)}\left[
- \log \sigma( -\log \hatK_\theta(a^-_1, a^-_2))
\right]
\]
by summing over all possible positive and negative pairs. We use relative weights of $10 \mathcal{L}_\text{XEnt} + 1 \mathcal{L}_\text{Logistic}$.

We use population kernel PCA over the set $\cA$ to identify the principal component functions. We then extend the principal component functions $f_i : \cA \to \R$ across the full embedding space $h_i : \R^2 \to \R$ (ignoring the scale parameter for simplicity) according to
\begin{align*}
h_i(v) = \hatK_\theta(v, \cA)^\T \hatK_\theta(\cA, \cA)^\dagger f_i(\cA),
\end{align*}
where $\hatK_\theta(v, \cA)$ is the vector
\begin{align*}
\hatK_\theta(v, A) = \begin{bmatrix}
e^{s_{a_1}} \left(1 - \frac{\|v - v_{a_1}\|^2}{2\alpha}\right)^{-\alpha} \\
e^{s_{a_2}} \left(1 - \frac{\|v - v_{a_2}\|^2}{2\alpha}\right)^{-\alpha}\\
\vdots\\
e^{s_{a_{|\cA|}}} \left(1 - \frac{\|v - v_{a_{|\cA|}}\|^2}{2\alpha}\right)^{-\alpha}
\end{bmatrix},
\end{align*}
$\hatK_\theta(\cA, \cA)$ is the Gram matrix of the sequence $[a_1, \dots, a_{|\cA|}]$ (in other words, the matrix elements are defined by $[\hatK_\theta(\cA, \cA)]_{ij} = \hatK_\theta(a_i, a_j)$), $\dagger$ denotes the Moore-Penrose pseudoinverse, and $f_i(\cA)$ is the vector $[ f_i(a_1)~~f_i(a_2)~~ \cdots~~f_i(a_{|\cA|})]^\T$; this equation implicitly projects each point onto the appropriate principal component of the kernel. For numerical stability reasons, we regularize the pseudoinverse $\hatK_\theta(\cA, \cA)^\dagger$ by additionally removing eigenvalues very close to zero.

\subsection{Training Details: MNIST with Multinomial Augmentations}

Our goal in designing our task was to construct distributions $p(Z)$ and $p(A|Z)$ such that the exact positive-pair kernel could be computed, and so that the Markov chain would mix between different unperturbed dataset examples $z \in \cZ$ without changing the label too frequently.

To this end, we constructed our task as follows:
\begin{itemize}
    \item Define $\cZ$ to be a particular subset of the MNIST dataset, and choose $p(Z)$ to be the uniform distribution over $\cZ$. We consider two choices for $\cZ$: randomly selecting 512 images from each of the ten digit classes (used for comparisions between models), and randomly selecting 1024 images from the digits 0, 1, and 2 (used for visualizations of the eigenfunctions).
    \item For each image, generate 64 pertubed copies, by randomly blurring, translating, and rotating digits by a small amount. Add a small constant to each pixel so that no pixel has value zero, then normalize each such copy so that its pixel values sum to 1.
    \item To generate an augmentation of an image $z \in \cZ$ according to $p(A|Z=z)$, first choose one of the 64 copies of $z$, then sample a set of $k$ pixel locations with replacement from the distribution represented by that copy, where $k$ determines the augmentation strength. This means we are more likely to sample pixel locations which were within the original digit, although due to the perturbations described above the pixels may be scattered around the digit.
\end{itemize}

Our set $\cA$ is thus the set of all 28 $\times$ 28 images for which all pixels have a nonnegative integer value, and the total of all pixels is $k$. (Due to the low pixel density, to improve visibility in our figures we render each pixel as a box 5x its original size, with shading indicating overlap of these boxes. However, when giving input to the model, we directly input this sparse pixel reprsentation, without the 5x multiplier.)

Given a particular augmented example $a$, we can compute $p(a | z)$ for any $z \in \cZ$ by summing over each of the 64 copies of $z$ and using the closed-form PDF of a multinomial distribution. We can then compute $p(z|a)$ by normalizing over all possible $z$, and use this to exactly compute the positive-pair kernel feature map $\phictr$. We selected the perturbation parameters such that there was nontrivial uncertainty in $z$ given each $a$; in other words, we made sure the positive pair Markov chain mixed well between examples.

\subsubsection{MNIST Model Architectures}
For the majority of our experiments, we used three-block variants of a ResNet-18 model followed by a two 128-dimensional fully-connected layers and a final output layer.

\begin{itemize}
    \item Linear kernel: We used kernel parameterization $\hatK_\theta(a_1, a_2) = h_\theta(a_1)^\T h_\theta(a_2)$ and the spectral loss, with $h_\theta$ as the output of the final layer. We set the dimension of the final layer to 512. We trained this model using the spectral contrastive loss.
    \item Hypersphere kernel, global scale: We used kernel parameterization $\exp(h_\theta(a_1)^\T h_\theta(a_2) / \tau + b)$, where $h_\theta$ is computed by normalizing the output of the final layer to lie on the unit hypersphere, and $b$ is a learned scalar. We set the dimension of the final layer to 32. We optimized the temperature $\tau$ and scale $b$ jointly with the model parameters. For the loss function, we used a weighted combination of 0.9 times the NT-XEnt loss and 0.1 times the NT-Logistic loss.
    \item Hypersphere kernel, individual scale: We used kernel parameterization $\exp(h_\theta(a_1)^\T h_\theta(a_2) / \tau + s_\theta(a_1) + s_\theta(a_2))$. We set the dimension of the final layer to 33, and defined $h_\theta$ by taking the first 32 entries and normalizing them to lie on the unit hypersphere. We then defined $s_\theta$ to be $5 \times \tanh(v_{33})$ where $v_{33}$ is the 33rd entry of the final layer. We again optimized the temperature $\tau$ jointly with the model parameters. The scale parameter allows the model to adjust the magnitude of the kernel on a per-example basis, which can be used to scale the eigenvalues of the principal components or to correct for differences in likelihood between points (since the magnitude of $\Kctr$ depends on the marginal likelihood of each point). For the loss function, we again used a weighted combination of 0.9 times the NT-XEnt loss and 0.1 times the NT-Logistic loss.
    \item Neural EF model: We set the dimension of the final layer to 512, and used L2 batch normalization on this final layer to ensure that the L2 norm of each output was 1 (e.g. that $C_j = 1$), as described by \citet{deng2022neuralef}. We used the modified version of the Neural EF objective described in Appendix \ref{app:direct_eigenfunction_vicreg}.
\end{itemize}

We trained our models using the Adam optimizer \citep{kingma2014adam} over 50,000 training iterations and a batch size of 4096 positive pairs per iteration, implemented using the JAX and FLAX libraries \citep{jax2018github,flax2020github}. For methods other than Neural EF, we used batch normalization for the first 35,000 iterations, then interpolated between the current batch statistics and the average from previous batches for 2,000 more iterations, and finally trained for 13,000 iterations using frozen batch norm statistics alone (e.g. in ``inference'' mode); we found that this increased the quality of the extracted principal components. For Neural EF, we keep batch normalization at all steps due to the constraint that $C_j = 1$.
 
\subsubsection{Extraction and analysis of principal components}

To estimate the eigenfunctions of the true kernel, we performed PCA using the explicit form $\phictr$ of the positive-pair kernel feature map, where the population covariance was approximated by averaging over 256 augmentations for each of the images in $\cZ$. We then constructed the principal component projection functions using that covariance. Our alignment plots in Figure \ref{fig:mnist_alignment_20} for ``True kernel'' compare two PCA decompositions using independent random estimates of the population covariance.

For our approximate kernels $\hatK_\theta$, we first constructed an approximation of the feature map for $\hatK_\theta$ using the Nystr{\"o}m method \citep{williams2000using}: we sampled a subset $S$ of augmentations by randomly selecting 25\% of $\cZ$ and sampling one augmentation for each image, computed the Gram matrix $\hatK_\theta(S, S)$ for that subset, then set
\[
\hat{\phi}(a) = \hatK_\theta(S, S)^{-1/2} \hatK_\theta(S, a)
\]
where $\hatK_\theta(S, a)$ is the vector of similarities of $a$ to each reference augmentation in $S$. The result is a feature map such that $\hat{\phi}(a_1)^\T \hat{\phi}(a_2) \approx \hatK_\theta(a_1, a_2)$. We then again performed PCA using this feature map, using 256 samples per example in $\cZ$ to compute the covariance.

For the Neural EF model, we again directly used the model's outputs as the eigenfunctions, and use running averages of $R_{ii}$ to approximate eigenvalues. We note that the Neural EF method did not find a fully orthogonal basis (as indicated by nonzero $R_{ij}$ terms for $i \ne j$), and some of its later eigenfunction estimates were correlated with earlier eigenfunctions; we did not attempt to correct for this in our plots in Figure \ref{fig:mnist_alignment_20}. We believe this is the cause of the ``jumps" from smaller eigenvalue approximations to larger eigenvalue approximations. (In contrast, the approximate eigenfunctions from the kernel PCA approaches are by construction uncorrelated over the sampled augmentations, due to being derived from eigenvectors of the sample covariance.)

We normalized all principal component projection functions to have unit uncentered variance, e.g. $\E[f_i(a)^2] = 1$ and $\E[\hat{f}_i(a)^2] = 1$. As for the overlapping regions task, we then computed alignments by taking the squared covariance $\E[f_i(a) \hat{f}_j(a)]^2$. We estimated the positive-pair discrepancy for each principal component function by taking the sample average of $\big(f_i(a_1) - f_i(a_2)\big)^2$ over 16 randomly sampled augmentation pairs for each image in $\cZ$.

\subsubsection{Three-class MNIST rational quadratic model}
For Figure \ref{fig:kernel_summary}, we additionally trained a ResNet-18 model on only the MNIST digits 0, 1, and 2, using a scaled two-dimensional rational quadratic kernel:
\begin{equation}
\hatK_\theta(a_1, a_2) = s_\theta(a_1)s_\theta(a_2)\left(1 - \frac{\|f_\theta(a_1) - f_\theta(a_2)\|^2}{2\alpha}\right)^{-\alpha}.
\end{equation}
Here $f_\theta : \cA \to \R^2$ embeds inputs into the plane, $s_\theta : \cA \to \R^+$ is a learned scale function, and $\alpha$ is a learned shape parameter. We set the output dimension of the ResNet-18 model to 3, and took the first two elements as $f_\theta$; $s_\theta$ was defined as $\exp(5 \times \tanh(x))$ where $z$ is the third element. We also parameterize $\alpha = \exp(\gamma)$ and learn the scalar parameter $\gamma$. The model has a base hidden dimension of 128.
The model was trained for 50,000 training iterations. We used the cross entropy InfoNCE loss (as described in Appendix \ref{app:minima_of_xent}) and the Adam optimizer, with a batch size of 4096 positive pairs per iteration.

\newpage

\subsection{Downstream Supervised Learning On MNIST}

\begin{table}[t]
    \centering

    \scriptsize\begin{tabular}{llcccccc}
    \toprule
      &
    & \multicolumn{3}{c}{Classification}
    & \multicolumn{3}{c}{Regression} \\
    \cmidrule(lr){3-5} \cmidrule(lr){6-8}
    \multicolumn{2}{r}{\# of sampled pixels $k=$} & 10 & 20 & 50 & 10 & 20 & 50 \\
    \midrule
True Kernel ($\Kctr$) & Kernel PCA
     & 0.564 & 0.384 & 0.178 & 0.722 & 0.602 & 0.369 \\
\\
Learned (Linear) & Kernel PCA
     & 0.589 & 0.398 & 0.254 & 0.724 & 0.603 & 0.362 \\
  & ResNet Emb.
     & 0.553 & 0.375 & 0.229 & 0.730 & 0.567 & 0.459 \\
\\
Learned (Hypersphere, Global Scale) & Kernel PCA
     & 0.594 & 0.401 & 0.421 & 0.722 & 0.559 & 0.660 \\
  & ResNet Emb.
     & 0.561 & 0.371 & 0.229 & 0.737 & 0.602 & 0.518 \\
\\
Learned (Hypersphere, Local Scale) & Kernel PCA
     & 0.583 & 0.392 & 0.276 & 0.718 & 0.552 & 0.524 \\
  & ResNet Emb.
     & 0.575 & 0.380 & 0.206 & 0.742 & 0.599 & 0.522 \\
\\
Learned (Neural EF) & Learned Eigfns.
     & 0.576 & 0.398 & 0.199 & 0.727 & 0.561 & 0.389 \\
  & ResNet Emb.
     & 0.575 & 0.389 & 0.204 & 0.751 & 0.593 & 0.518 \\
    \bottomrule
    \end{tabular}
    \caption{Classification error (fraction misclassified) and regression error (squared error) on MNIST task with multinomial augmentations, across augmentation strengths $k=10, k=20, k=50$.}
    \label{tab:mnist_results}

\end{table}

To better understand the performance of the true and approximate eigenfunctions for downstream supervised learning, we took our multinomial-sampling MNIST task, and
compared the quality of various representations for two downstream prediction tasks: classification with a linear layer, and linear least-squares regression on the one-hot indicator vectors for each digit class.
We considered three types of representation:
the PCA projection functions for $\Kctr$,
the PCA projection functions for each learned kernel $\hatK_\theta$,
and the intermediate layer embedding vector between the ResNet layers and the projection head for each model as proposed by \citet{chen2020simple}.
For each, we fit a regularized linear predictor on 160 labeled training examples (16 augmented samples from each class), using 160 additional validation examples to tune the regularization strength. For PCA representations, we additionally tune the representation dimension $d$, choosing only the first $d$ principal component functions.

The results are shown in Table \ref{tab:mnist_results}. Performance is fairly similar across representations, suggesting that the positive-pair kernel $\Kctr$ captures much of the variability between augmentation strengths. Notably, at low augmentation strengths ($k=50$), the true eigenfunctions have the smallest error, but eigenfunction approximations using Kernel PCA do not always outperform the intermediate layer representations, suggesting that inductive biases may play a role.

In more detail, to generate our supervised training and validation sets, we sampled one augmentation of 16 random images from each digit class, labeled with the original label, a total of 160 labeled augmentations in each set. For the test set, we took 170 distinct images from each digit class and generated one augmentation from each image, without overlap with the training or validation sets, for a total of 1700 labeled augmentations.

For the classification task, we fit a logistic regression classifier on the training set using SciKit Learn. For PCA representations, we swept over 40 logarithmically-spaced L2 regularization strengths from $10^{-4}$ to $10^1$, and also swept over representation dimension $d$, taking the first $d$ principal components for $d$ between 1 and 256. For ResNet embedding representations, we swept over 150 logarithmically-spaced L2 regularization strengths from $10^{-4}$ to $10^{10}$; we found that higher regularization strengths were necessary to attain a good solution. We selected the hyperparameters based on which setting gave the highest top-1 accuracy on the validation set.

For the regression task, we used a centered version of the one-hot indicator vector, e.g. the target vector for an example from digit 2 was
\[
[-0.1, -0.1, 0.9, -0.1, -0.1, -0.1, -0.1, -0.1, -0.1, -0.1].
\]
The purpose of this centering was to ensure that the expected value of the label vector was the zero vector. We then fit a predictor using ridge regression (L2-regularized least-squares regression) in Numpy. As for the classification task, for PCA representations we swept over 40 logarithmically-spaced L2 regularization strengths from $10^{-4}$ to $10^1$ and over each representation dimension $d$ between 1 and 256, and for ResNet embedding representations we swept over 150 L2 regularization strengths from $10^{-4}$ to $10^{10}$. We selected the hyperparameters based on which setting gave the lowest squared error on the validation set.

\subsection{Principal Components Analysis of Spectral-Loss Linear-Kernel Model on ImageNet}\label{app:imagenet-spectral-pca}

In this section, we discuss some preliminary results from applying our Kernel PCA analysis to a real-world contrastive learning model.

We started by training a variant of the SimCLR v2 model \citet{chen2020big} using the spectral contrastive loss and a linear kernel head, normalizing the output layer to have a constrained norm $\|h_\theta(a)\|^2 = c$ as described by \citet{haochen2021provable}. We explored automatically learning this norm $c$ using the Adam optimizer and a separately-tuned learning rate. We found that the norm $c$ reliably increased during training, but training tended to destabilize and produce NaN weights once $c \approx 90$, and we were unable to successfully train a model with a higher output layer norm. 

We trained a model for 100 epochs ($\approx$ 30,000 iterations) on the ImageNet 2012 dataset, using the default augmentation parameters and other hyperparameters for SimCLR v2, obtaining a similar supervised classification accuracy to previous results by \citet{haochen2021provable}.

Next, we performed kernel PCA by using ordinary PCA with the explicit form of the kernel features (the output layer with normalization applied), since kernel PCA and ordinary PCA are equivalent for a linear kernel parameterization. We computed the covariance over a sample of 16 augmentations of each training dataset example, averaged across all examples. We then constructed the principal component functions $\hat{f}_i$ by normalizing based on the eigenvalues, and computed positive-pair discrepancies for each function over a sample of 8 augmentation pairs for each training dataset example.

\begin{figure}
    \centering
    \includegraphics[width=0.8\textwidth]{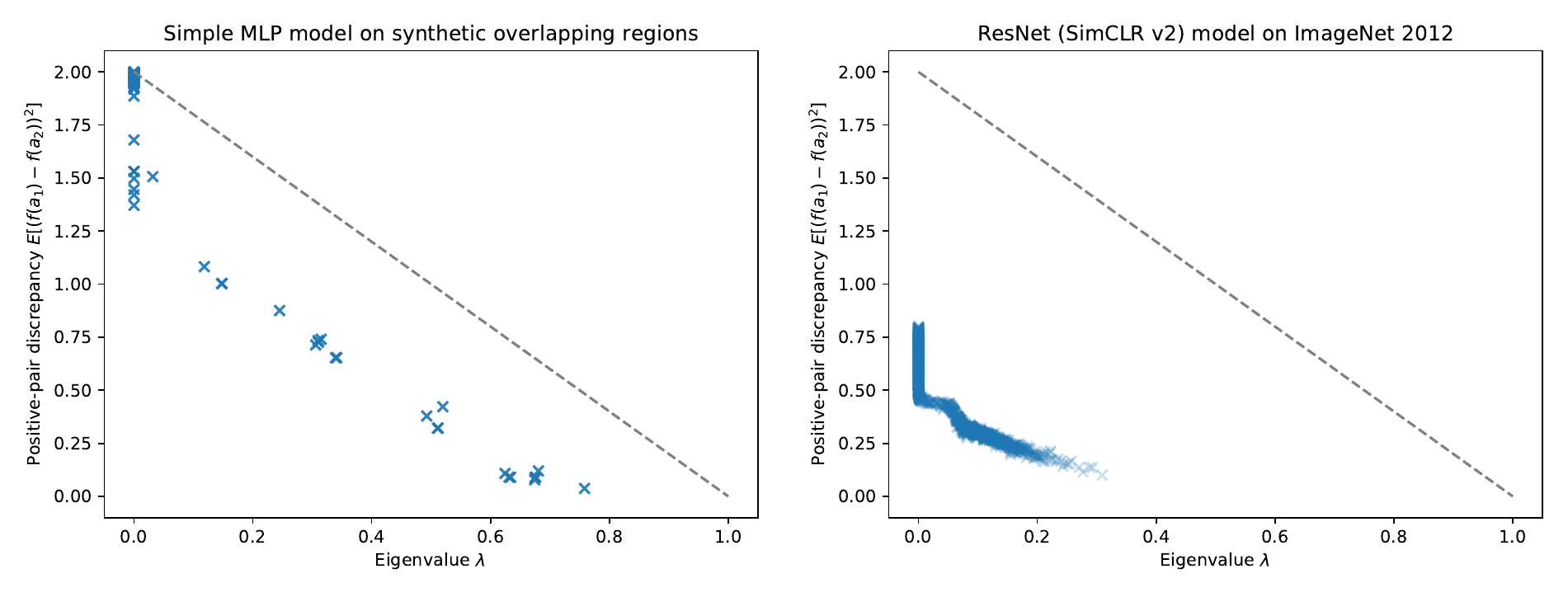}
    \caption{Relationship between kernel PCA eigenvalue and positive-pair discrepancy for a norm-constrained linear kernel head and the spectral contrastive loss, across two datasets. Relationship predicted by Equation \ref{eqn:eigenfunction_pair_discrepancy} is shown with a dashed line. Eigenvalues smaller than $10^{-6}$ are omitted.}
    \label{fig:app_box_vs_imagenet}
\end{figure}

Figure \ref{fig:app_box_vs_imagenet} shows the results of comparing the eigenvalues and positive-pair discrepancies, relative to the predicted relationship from Equation \ref{eqn:eigenfunction_pair_discrepancy}. For comparison, we also reproduce the corresponding figure for this kernel head parameterization and loss function on our overlapping regions task.
We see that, across both tasks, the norm constraint causes estimated eigenvalues to be smaller than Equation \ref{eqn:eigenfunction_pair_discrepancy} would predict, but there still appears to be an inverse correlation between the eigenvalue and positive-pair discrepancy. (On the overlapping regions task, it is close to a constant shift of the linear Equation \ref{eqn:eigenfunction_pair_discrepancy} relationship. On the ImageNet task, the relationship is still somewhat linear, but with multiple irregularities, and a somewhat different slope than Equation \ref{eqn:eigenfunction_pair_discrepancy} predicts; some of this may be due to increasing in the norm constraint during training.)

We also observe that, in both tasks, the sum of the eigenvalues from Kernel PCA is exactly equal to the norm constraint $c$. This suggests that the norm constraint is ``capping'' the sum of the eigenvalues, forcing the model to only learn a subset of eigenfunctions despite having capacity for more. We conjecture that stabilizing the learning dynamics might enable us to remove the norm constraint $c$ and thus capture additional eigenfunctions, leading to potentially superior representations for future self-supervised methods.

\end{document}